
\documentclass[letterpaper, 10 pt, conference]{ieeeconf}  

\IEEEoverridecommandlockouts                              

\overrideIEEEmargins                                      





\usepackage{fancyhdr}


\usepackage{xcolor}
\usepackage{comment}

\usepackage{cite}

\usepackage[mathcal]{euscript}

\usepackage[cmex10]{amsmath}
\usepackage{amssymb}

\usepackage{amsthm} 
  
\usepackage{dsfont} 

\usepackage{epsfig} 

\usepackage{multirow}
\usepackage{enumerate}

\usepackage[ruled, vlined, linesnumbered]{algorithm2e}
\SetAlCapFnt{\small}
\SetNlSty{}{}{} 
\SetInd{0.5em}{0.2em}
\SetAlgoSkip{9.5pt}


 
\newtheoremstyle{plain}
	  {}
	  {}
	  {\itshape}
	  {}
	  {\bfseries}
	  {}
	  {5pt plus 1pt minus 1pt}
	  {}

\newtheoremstyle{definition}
  	  {}
	  {}
	  {\normalfont}
	  {}
	  {\bfseries}
	  {}
	  {5pt plus 1pt minus 1pt}
	  {}
  
\theoremstyle{plain}

\newtheorem{lemma}{Lemma}
\newtheorem{proposition}{Proposition}

\theoremstyle{definition}


\newcommand{\refeq}[1]			{(\ref{#1})} 
\newcommand{\reffig}[1]			{Fig. \ref{#1}} 
\newcommand{\refsec}[1]			{Section \ref{#1}}
\newcommand{\refapp}[1]			{Appendix \ref{#1}}
\newcommand{\reftab}[1]			{Table \ref{#1}}

\newcommand{\refprop}[1]		{Proposition \ref{#1}}

\newcommand{\reflem}[1]			{Lemma \ref{#1}}

\newcommand{\refalg}[1]			{Algorithm \ref{#1}}




\newcommand{\unisample}            	{\mathrm{sample}} 
\newcommand{\unipose}{\mathrm{p}}
\newcommand{\uniposestart}{\mathrm{p}_{\mathrm{start}}}
\newcommand{\uniposegoal}{\mathrm{p}_{\mathrm{goal}}}
\newcommand{\uniposerand}{\mathrm{p}_{\mathrm{rand}}}

\newcommand{\uniposeneighbor}{\mathrm{P}_{\mathrm{near}}}
\newcommand{\uniposenear}{\mathrm{p}_{\mathrm{near}}}
\newcommand{\uniposenearest}{\mathrm{p}_{\mathrm{best}}}
\newcommand{\uniposenew}{\mathrm{p}_{\mathrm{new}}}
\newcommand{\uniposeparent}{\mathrm{p}_{\mathrm{parent}}}
\newcommand{\uniposemin}{\mathrm{p}_{\mathrm{min}}}
\newcommand{\parent}{\mathrm{parent}}

\newcommand{\issafe}{\mathrm{issafe}}

\newcommand{\NumSample} 	{N} 
\newcommand{\graph}           {G}
\newcommand{\edgeset}       {E}
\newcommand{\vertexset}     {V}

\newcommand{\uninearest}{\mathrm{nearest}}
\newcommand{\unineighbor}{\mathrm{neighbor}}
\newcommand{\unidist}	{\mathrm{unidist}}
\newcommand{\translate}{\mathrm{trans}}
\newcommand{\orient}{\mathrm{orient}}
\newcommand{\euclidean} {\mathrm{euc}}
\newcommand{\cosine}    {\mathrm{cos}}
\newcommand{\euclideancosine}{\mathrm{euccos}}
\newcommand{\headtail}  {\mathrm{headtail}}
\newcommand{\headtailori} {\overline{\mathrm{headtail}}}
\newcommand{\dualhead}  {\mathrm{dualhead}}
\newcommand{\dualheadori}{\overline{\mathrm{dualhead}}}
\newcommand{\uniproj}    {\mathrm{project}}
\newcommand{\unisampleset}{\mathcal{U}}
\newcommand{\cost}{\mathrm{cost}}
\newcommand{\localcost} {\mathrm{localcost}}

\newcommand{\neighbor}{N}


\newcommand{\R}  	{\mathbb{R}} 

\newcommand{\radius} 	{\rho}

\newcommand{\conv}      {\mathrm{conv}} 

\newcommand{\ovect}[1] {\begin{bmatrix}\cos #1\\ \sin #1 \end{bmatrix}}
\newcommand{\ovectsmall}[1] {\scalebox{0.8}{$\ovect{#1}$}}

\newcommand{\nvect}[1] {\begin{bmatrix}-\sin #1\\ \cos #1 \end{bmatrix}}
\newcommand{\nvectsmall}[1] {\scalebox{0.8}{$\nvect{#1}$}}
\newcommand{\nvecT}[1] {\tr{\nvect{#1}\!}}
\newcommand{\nvecTsmall}[1] {\scalebox{0.85}{$\nvecT{#1}$}}



\newcommand{\freespace}	{\mathcal{F}} 
\newcommand{\workspace}	{\mathcal{W}} 
\newcommand{\obstspace}	{\mathcal{O}} 
\newcommand{\ball}      {\mathrm{B}} 

\newcommand{\pos}{\vect{x}} 
\newcommand{\ori}{\theta} 
\newcommand{\goal}[1]{{#1}^{*}} 
\newcommand{\goalpos}{\vect{x}^*} 
\newcommand{\goalori}{\theta^*} 

\newcommand{\head}[1]{{#1}_{\mathrm{h}}} 
\newcommand{\tail}[1]{{#1}_{\mathrm{t}}} 
\newcommand{\headpos}{\pos_{\mathrm{h}}} 
\newcommand{\tailpos}{\pos_{\mathrm{t}}} 
\newcommand{\goalheadpos}{\goalpos_{\mathrm{h}}} 
\newcommand{\goaltailpos}{\goalpos_{\mathrm{t}}} 
\newcommand{\posdot}{\dot{\pos}} 
\newcommand{\headposdot}{\head{\dot{\pos}}} 
\newcommand{\tailposdot}{\tail{\dot{\pos}}} 
\newcommand{\goalheadposdot}{\head{\goal{\dot{\pos}}}} 
\newcommand{\goaltailposdot}{\goal{\tail{\dot{\pos}}}} 
\newcommand{\headcoef}{\kappa_{\mathrm{h}}} 
\newcommand{\tailcoef}{\kappa_{\mathrm{t}}} 
\newcommand{\goalheadcoef}{\kappa_{\mathrm{h}}^{*}} 
\newcommand{\goaltailcoef}{\kappa_{\mathrm{t}}^{*}} 
\newcommand{\headtailcoef}{\kappa} 
\newcommand{\refcoef}{\kappa_{\mathrm{r}}} 
\newcommand{\deltapos}{\delta \pos}
\newcommand{\deltaori}{\delta \ori}
\newcommand{\deltaradius}{\delta r}
\newcommand{\Deltapos}{\Delta \pos}
\newcommand{\Deltaori}{\Delta \ori}
\newcommand{\Deltaradius}{\Delta r}

\newcommand{\linvel}     {v}  
\newcommand{\angvel}     {\omega} 
\newcommand{\fwdlinvel}  {\overrightarrow{\linvel}} 
\newcommand{\fwdangvel}  {\overrightarrow{{\angvel}}} 
\newcommand{\bwdlinvel}  {\overleftarrow{\linvel}} 
\newcommand{\bwdangvel}  {\overleftarrow{{\angvel}}} 
\newcommand{\fwddomain}  {\overrightarrow{\mathcal{D}}} 
\newcommand{\bwddomain}  {\overleftarrow{\mathcal{D}}} 

\newcommand{\fwdctrl}       {\overrightarrow{\vect{u}}} 			
\newcommand{\bwdctrl}       {\overleftarrow{\vect{u}}} 			



\newcommand{\ctrl}      {\vect{u}} 			














\let\originalleft\left
\let\originalright\right
\renewcommand{\left}{\mathopen{}\mathclose\bgroup\originalleft}
\renewcommand{\right}{\aftergroup\egroup\originalright}
	
\newcommand{\plist}[1] 	{\left(#1\right)} 
\newcommand{\blist}[1]	{\left[ #1 \right]} 
\newcommand{\clist}[1]	{\left\{#1\right\}} 
\newcommand{\nlist}[1]   {\left\|#1\right\|} 

\newcommand{\vect}[1]   {\mathrm{#1}}
\newcommand{\mat}[1]    {\mathbf{#1}}
\newcommand{\tr}[1] {{#1}^{\mathrm{T}}} 
\newcommand{\norm}[1]  {\|#1\|}
\newcommand{\absval}[1]{\left|#1 \right|} 


\newcommand{\argmin}{\operatornamewithlimits{arg\ min}} 
\newcommand{\diff} {\mathrm{d}}

\makeatletter
\DeclareRobustCommand\widecheck[1]{{\mathpalette\@widecheck{#1}}}
\def\@widecheck#1#2{%
    \setbox\z@\hbox{\m@th$#1#2$}%
    \setbox\tw@\hbox{\m@th$#1%
       \widehat{%
          \vrule\@width\z@\@height\ht\z@
          \vrule\@height\z@\@width\wd\z@}$}%
    \dp\tw@-\ht\z@
    \@tempdima\ht\z@ \advance\@tempdima2\ht\tw@ \divide\@tempdima\thr@@
    \setbox\tw@\hbox{%
       \raise\@tempdima\hbox{\scalebox{1}[-1]{\lower\@tempdima\box
\tw@}}}%
    {\ooalign{\box\tw@ \cr \box\z@}}}
\makeatother

\title{\Large \bf
Adaptive Dual-Headway Unicycle Pose Control and Motion Prediction \\ for Optimal Sampling-Based Feedback Motion Planning
}

\author{Aykut \.{I}\c{s}leyen, Abhidnya Kadu, Ren\'{e} van de Molengraft, \"{O}m\"{u}r Arslan
\thanks{The authors are with the Department of Mechanical Engineering, Eindhoven University of Technology, P.O. Box 513, 5600 MB Eindhoven, The Netherlands, and also affiliated with the Eindhoven AI Systems Institute. Emails:  \{a.isleyen, a.a.kadu, m.j.g.v.d.molengraft, o.arslan\}@tue.nl} %
}

\begin{document}

\maketitle
\thispagestyle{empty}
\pagestyle{empty}

\begin{abstract}
Safe, smooth, and optimal motion planning for nonholonomically constrained mobile robots and autonomous vehicles is essential for achieving reliable, seamless, and efficient autonomy in logistics, mobility, and service industries.
In many such application settings, nonholonomic robots, like unicycles with restricted motion, require precise planning and control of both translational and orientational motion to approach specific locations in a designated orientation, such as for approaching changing, parking, and loading areas.
In this paper, we introduce a new dual-headway unicycle pose control method by leveraging an adaptively placed headway point in front of the unicycle pose and a tailway point behind the goal pose. 
In summary, the unicycle robot continuously follows its headway point, which chases the tailway point of the goal pose and the asymptotic motion of the tailway point towards the goal position guides the unicycle robot to approach the goal location with the correct orientation.  
The simple and intuitive geometric construction of dual-headway unicycle pose control enables an explicit convex feedback motion prediction bound on the closed-loop unicycle motion trajectory for fast and accurate safety verification. 
We present an application of dual-headway unicycle control for optimal sampling-based motion planning around obstacles.
In numerical simulations, we show that optimal unicycle motion planning using dual-headway translation and orientation distances significantly outperforms Euclidean translation and cosine orientation distances in generating smooth motion with minimal travel and turning~effort.    
\end{abstract}

\section{Introduction}
\label{sec.introduction}

Autonomous mobile robots provide flexible automation solutions in various application settings, ranging from assisting people with daily activities (e.g., service robots \cite{kim_etal_RAM2009, jones_MRA2006}) to enhancing transportation and mobility systems (e.g., warehouse robots \cite{ackerman_Spectrum2022, renan_nascimento_RAS2021} and self-driving vehicles \cite{gonzalez_perez_milanes_mashashibi_TITS2016, paden_cap_yong_yershov_frazzoli_TIV2016}).
To safely and smoothly perform diverse tasks in complex environments, including people \cite{philippsen_siegwart_ICRA2003} and other mobile robots \cite{snape_etal_IROS2010}, nonholonomic mobile robots, such as unicycles with restricted turning or no sideways movement, require effective planning and control of both translational and orientational motion to approach specific locations in a designated orientation, for example, when approaching charging and loading areas. 
Control-aware planning with feedback motion prediction play a key role in safe and smooth autonomous motion design around obstacles \cite{chakravarthy_debasish_TSM1998, fiorini_shiller_IJRR1998, arslan_koditschek_IJRR2019, isleyen_arslan_RAL2022}.

In this paper, we design a new geometric unicycle pose control approach using headway and tailway points of unicycle poses in the front and back, respectively, to continuously guide a nonholonomically constrained mobile robot to approach a given goal position with a desired orientation, as shown in \reffig{fig.dual_headway_control_example} (left). 
The simple and intuitive geometric construction of dual-headway unicycle control allows for accurately and explicitly estimating the required spatial region for executing feedback motion using the convex hull of the start and goal positions and their respective headway and tailway points, as well as designing heuristics for measuring turning and travel costs informatively. 
Accordingly, we apply the newly developed dual-headway unicycle motion control, prediction, and translational and orientational distances for optimal sampling-based feedback motion planning, minimizing total travel and turning effort, as shown in \reffig{fig.dual_headway_control_example} (right).


\begin{figure}[t]
\centering
\begin{tabular}{@{}c@{\hspace{1mm}}c@{}}
\includegraphics[width = 0.5\columnwidth]{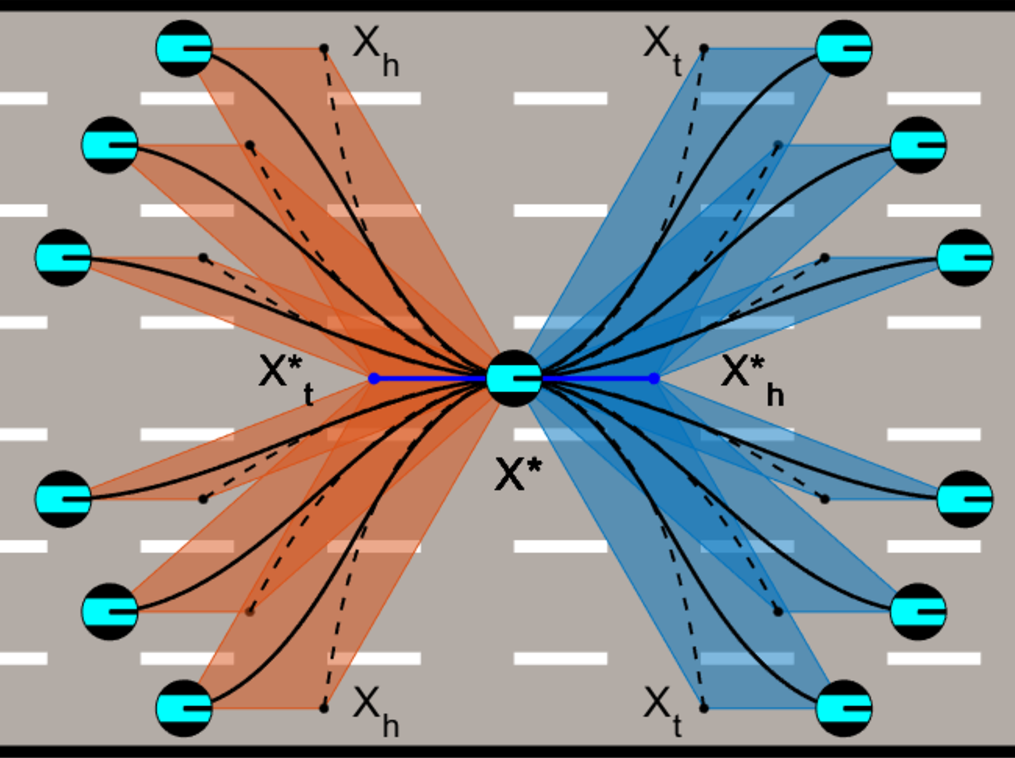} & \includegraphics[width = 0.5\columnwidth]{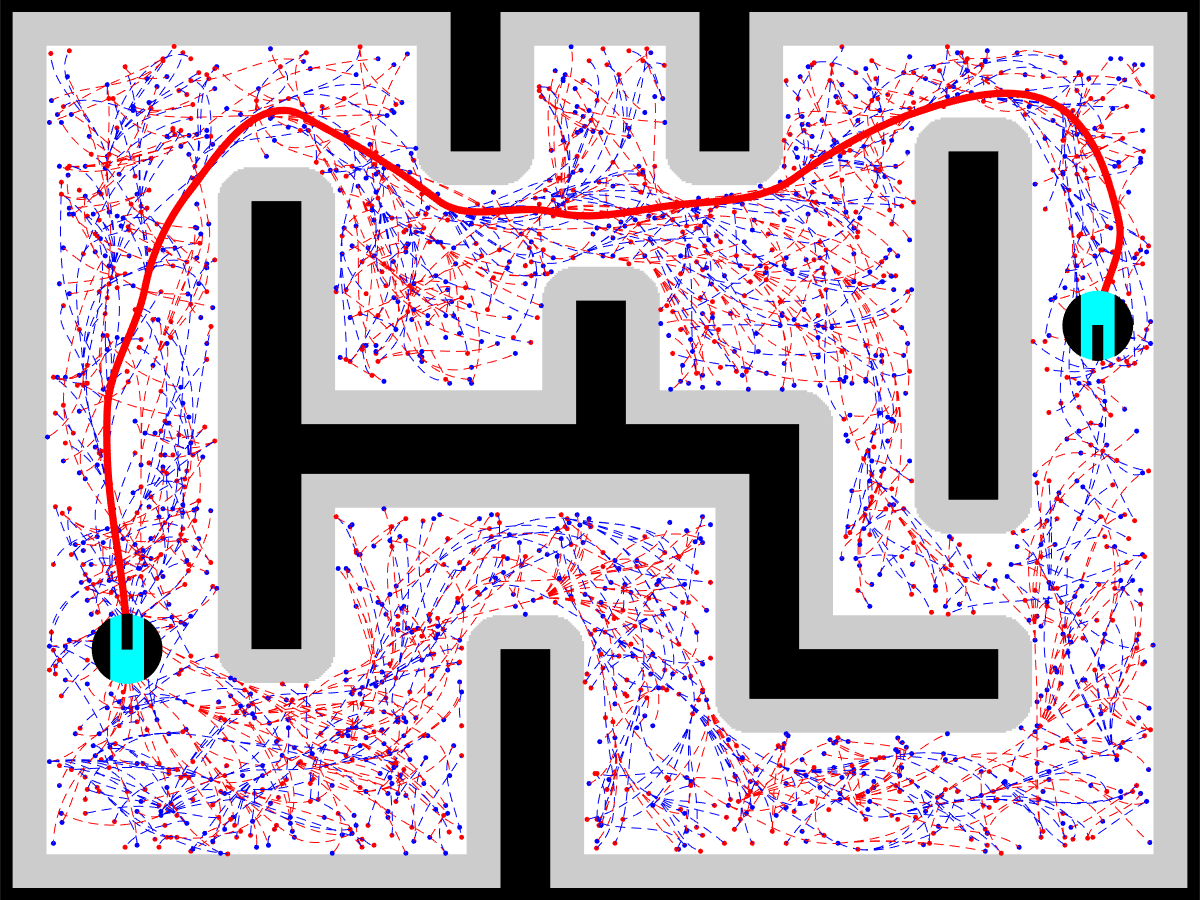}
\end{tabular}
\vspace{-3mm}
\caption{Adaptive dual-headway unicycle pose control continuously moves the headway point of the unicycle pose, which is in front of the unicycle, toward the tailway point of the goal pose, located behind the goal. 
(left) Example closed-loop trajectories of dual-headway unicycle control for a smooth lane-changing scenario with associated motion prediction bounds. 
(right) Example application of dual-headway unicycle control and motion prediction for optimal sampling-based motion planning around obstacles, minimizing the total dual-headway translation and orientation distance.
}
\label{fig.dual_headway_control_example}
\vspace{-3mm}
\end{figure}

\subsection{Motivation and Related Literature}

\subsubsection{Unicycle Motion Control}

Due to its nonholonomic constraint of no sideways motion, it is well known that global smooth, continuous, and autonomous (i.e., time-invariant) control of unicycle dynamics to reach any given position is not possible \cite{brockett_DGCT1983}. 
This limitation makes unicycle pose control even more challenging when attempting to continuously reach a specific position and orientation.
Fortunately, non-smooth, discontinuous control is only needed to break the symmetry, for example, when the goal position is directly behind the unicycle with an opposite orientation.
Accordingly, various unicycle control methods with minor discontinuities, such as inner-outer loop approaches \cite{astolfi_JDSMC1999}, angular feedback linearization \cite{lee_etal_IROS2000}, and full state feedback linearization \cite{dandrea-novel_campion_bastin_IJRR1995}, have been developed to asymptotically move (almost) all unicycle poses to a given destination position.
A common feature of these standard unicycle motion control methods is that they all allow explicit, positively inclusive convex feedback motion prediction bounds on the resulting closed-loop unicycle motion trajectory, providing more accurate and less conservative safety verification around obstacles than invariant Lyapunov level sets \cite{isleyen_vandewouw_arslan_IROS2023, isleyen_vandewouw_arslan_CDC2023, tarshahani_isleyen_arslan_ECC2024}.
Polar coordinates enable Lyapunov-based unicycle pose control methods to guide a unicycle to a target position and orientation in an empty environment without obstacles \cite{aicardi_etal_RAM1995, astolfi_JDSMC1999}; however, the nonlinear polar transformation and the complex shapes of Lyapunov level sets make it challenging to derive an accurate and simple geometric trajectory bound for safety verification and safe motion control around obstacles.  
Time-varying nonlinear control, e.g., using sinusoids, has also been applied for unicycle pose control, but, systematically ensuring safety around obstacles can be more complicated due to the time-varying, open-loop nature of such approaches \cite{murray_sastry_TAC1993, samson_IJRR1993, samson_TAC1995}.
In this paper, we propose a new time-invariant geometric unicycle pose control approach that asymptotically moves the headway point of the unicycle pose to the tailway point of the goal pose using state feedback linearization. 
Its simple geometric construction enables an explicit, positively inclusive convex motion prediction bound on the  closed-loop unicycle motion trajectory for efficient safety assessment, using the convex hull of the unicycle position, goal position, and their respective headway and tailway points.    

\subsubsection{Unicycle Motion Planning}

Motion planning for nonholonomic and kinodynamic nonlinear systems is known to be computationally challenging \cite{canny_ComplexityMotionPlanning1988}. 
Consequently, sampling-based motion planning algorithms, using random control searches or motion primitives, are often applied to find safe and smooth paths for unicycle-like nonholonomic systems around obstacles \cite{lavalle_kuffner_IJRR2001, karaman_frazzoli_IJRR2011}. 
However, solving the two-point boundary problem, e.g., bringing any unicycle position and orientation to a specific goal position and orientation, through random control and motion primitive search is usually computationally difficult and inefficient.
This challenge is generally addressed by tightly integrating control and planning, enabling sample unicycle poses to be connected via closed-loop trajectory predictions \cite{kuwata_teo_fiore_karaman_frazzoli_how_TCST2009, palmieri_arras_IROS2014, park_kuipers_IROS2015, arslan_berntorp_tsiotras_ICRA2017}, based on forward simulation of unicycle dynamics under a unicycle pose control policy \cite{aicardi_etal_RAM1995, astolfi_JDSMC1999}. 
Although forward-simulated closed-loop predictions leverage control for effectively generating local steering strategies, safety verification still remains computationally expensive due to the nonparametric, dense multi-point representation of closed-loop trajectories.
To effectively utilize control for both establishing local connectivity and ensuring safety in motion planning, geometrically simple over-approximations of positively invariant Lyapunov function sets are often used for conservative yet efficient safety and collision checking \cite{danielson_berntorp_cairano_weiss_ACC2020}. 
Such control Lyapunov functions also serve as local cost measures in motion planning to quantify connection difficulty between unicycle poses \cite{park_kuipers_IROS2015, danielson_berntorp_cairano_weiss_ACC2020}, as an alternative to standard additively weighted Euclidean translation and cosine orientation distances \cite{palmieri_arras_IROS2014}.
In this paper, aligned with such uses of closed-loop prediction and Lyapunov functions in the literature, we demonstrate an application of the newly proposed dual-headway pose control, its explicit feedback motion prediction, and the associated unicycle pose distance for optimal sampling-based feedback motion planning.

\subsection{Contributions and Organization of the Paper}

This paper introduces a new dual-headway unicycle pose control and motion prediction approach for optimal sampling-based feedback motion planning, enabling safe and smooth motion around obstacles while reaching a desired unicycle pose with minimal travel and turning effort.
In summary, the major contributions of our paper are as follows:

\indent $\bullet$ We propose a dual-headway unicycle pose control approach that asymptotically brings almost all unicycle poses to any given goal pose using state feedback linearization.

\indent $\bullet$ We show the closed-loop motion under dual-headway control is bounded by the convex hull of the unicycle position, goal position, and their headway and tailway points.  

\indent $\bullet$ We apply dual-headway unicycle control and motion prediction for optimal sampling-based feedback motion planning with minimal travel and turning effort.

\noindent We demonstrate the effectiveness of our integrated planning and control approach in numerical simulations by showing that minimizing dual-headway translation and orientation distances significantly outperforms standard Euclidean translation and cosine orientation distances, generating smoother motion with minimal travel and turning effort.

The rest of the paper is organized as follows. 
\refsec{sec.adaptive_dual_headway_unicycle_control} presents the dual-headway unicycle pose control and motion prediction.
\refsec{sec.optimal_unicycle_feedback_motion_planning} describes how to perform optimal unicycle feedback motion planning using forward and backward dual-headway motion control primitives and unicycle pose distances.
\refsec{sec.numerical_simulations} demonstrates numerical simulation examples. 
\refsec{sec.conclusions} concludes with a summary of our work and future research directions.

\section{Adaptive Dual-Headway Unicycle Control}
\label{sec.adaptive_dual_headway_unicycle_control}

\subsection{Kinematic Unicycle Robot Model}

We consider a kinematic unicycle robot whose state is represented by its 2D position $\pos \in \R^2$ and forward orientation angle $\ori \in [ -\pi, \pi )$ that is measured in radians counterclockwise from the horizontal axis.
The equations of motion of the kinematic unicycle robot model are given by
\begin{align} \label{eq.UnicycleDynamics}
\dot{\pos} = \linvel \ovect{\ori} \quad \text{and} \quad \dot{\ori} = \angvel 
\end{align}
where  $\linvel \in \R$ and $\angvel \in \R$ are scalar control inputs specifying the linear and angular velocities, respectively.
Note that the unicycle dynamics are underactuated, with the nonholonomic constraint of no sideways motion, i.e., $\nvecTsmall{\ori}\dot{\pos} = 0$.

\subsection{Unicycle Pose Control via Headway and Tailway Points}
\label{sec.dual_headway_unicycle_pose_control}

Unicycle headway control is a standard full feedback linearization approach used to bring a unicycle robot to a desired goal position by utilizing a headway (also known as offset) point that is located a certain (e.g., fixed or varying) distance in front of the robot and follows simple linear reference dynamics to asymptotically converge to the goal position \cite{isleyen_vandewouw_arslan_CDC2023}.
Although global convergence to the goal position is ensured, standard unicycle headway position control methods pay little or no attention to controlling the final approach orientation of the unicycle. 
Inspired by the standard unicycle headway control methods, to reach a goal position $\goalpos \in \R^{2}$ with a specified goal approach orientation $\goalori \in [-\pi, \pi)$, we consider dual headway points at varying distances: one headway point in front of the current unicycle pose and one tailway point behind the goal pose, as
\begin{subequations}\label{eq.headway_tailway_points}
\begin{align}
\headpos &:= \pos + \headcoef \norm{\pos - \goalpos} \ovectsmall{\ori}
\\
\goaltailpos &:= \goalpos - \goaltailcoef \norm{\pos - \goalpos} \ovectsmall{\goalori}
\end{align}
\end{subequations}
where $\headcoef > 0$ and $\goaltailcoef > 0$ are constant positive headway and tailway coefficients, respectively.
One can observe that the time rate of change of the headway and tailway points under the unicycle control dynamics in \refeq{eq.UnicycleDynamics} are given by
{
\begin{align*}
\headposdot &=   \!\plist{\!1\! + \headcoef \tfrac{\tr{(\pos - \goalpos)\!}}{\norm{\pos-\goalpos}} \ovectsmall{\ori}\!} \! \ovectsmall{\ori} \linvel   + \headcoef \norm{\pos \!-\! \goalpos\!}  \nvectsmall{\ori} \angvel,
\\
\goaltailposdot &= -  \goaltailcoef \tfrac{\tr{(\pos - \goalpos)}}{\norm{\pos-\goalpos}} \ovectsmall{\ori} \ovectsmall{\goalori} \linvel.
\end{align*}
}%
Using the first-order proportional error feedback as a reference headway-point dynamics towards the tailway point as
\begin{align}\label{eq.headway_reference_dynamics}
\headposdot = - \refcoef \plist{\headpos - \goaltailpos}
\end{align}
we design a dual-headway unicycle motion controller, denoted by $\ctrl_{\goalpos, \goalori}(\pos, \ori) \!=\! \plist{\linvel_{\goalpos, \goalori}(\pos, \ori), \angvel_{\goalpos, \goalori}(\pos, \ori)}$ that determines the linear velocity $\linvel_{\goalpos,\goalori}(\pos, \ori)$ and the angular velocity $\angvel_{\goalpos, \goalori}(\pos, \ori)$ for the unicycle robot model in \refeq{eq.UnicycleDynamics} as
\begin{subequations}\label{eq.dual_headway_control}
\begin{align}
\linvel_{\goalpos, \goalori}(\pos, \ori) & = \frac{-\refcoef\tr{\plist{\headpos - \goaltailpos}} \ovectsmall{\ori}}{1 + \headcoef\tfrac{\tr{\pos - \goalpos}}{\norm{\pos - \goalpos}} \ovectsmall{\ori}},
\\
\angvel_{\goalpos, \goalori}(\pos, \ori) &= \frac{-\refcoef\tr{\plist{\headpos - \goaltailpos}} \nvectsmall{\ori}}{ \headcoef\norm{\pos - \goalpos}},
\end{align}
\end{subequations}
where $\refcoef \!>\! 0$ is a fixed positive control gain for the reference dynamics in \refeq{eq.headway_reference_dynamics}, and $\headcoef > 0$ is the headway coefficient in~\refeq{eq.headway_tailway_points}.

\begin{lemma}\label{lem.headway_tailway_distance}
\emph{(Headway-Tailway Distance)} For any $\headcoef + \goaltailcoef < 1$, the distance between the headway and tailway points bounds the distance between the current and goal positions as 
\begin{align}
\norm{\pos - \goalpos} \leq \tfrac{1}{1 - \headcoef - \goaltailcoef} \norm{\headpos - \goaltailpos}.
\end{align}
\end{lemma}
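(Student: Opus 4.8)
The plan is to write $\headpos - \goaltailpos$ explicitly in terms of $\pos - \goalpos$ using the definitions in \refeq{eq.headway_tailway_points}, and then apply the reverse triangle inequality. Substituting, one gets
\begin{align*}
\headpos - \goaltailpos = \plist{\pos - \goalpos} + \headcoef \norm{\pos - \goalpos}\ovectsmall{\ori} + \goaltailcoef \norm{\pos - \goalpos}\ovectsmall{\goalori},
\end{align*}
so the headway--tailway displacement is the current-to-goal displacement plus two correction terms whose magnitudes are controlled by $\headcoef$ and $\goaltailcoef$.

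Next I would bound the norm from below. By the reverse triangle inequality,
\begin{align*}
\norm{\headpos - \goaltailpos} \geq \norm{\pos - \goalpos} - \headcoef \norm{\pos - \goalpos}\,\norm{\ovectsmall{\ori}} - \goaltailcoef \norm{\pos - \goalpos}\,\norm{\ovectsmall{\goalori}}.
\end{align*}
Since $\ovectsmall{\ori}$ and $\ovectsmall{\goalori}$ are unit vectors (as $\norm{\ovect{\phi}} = \sqrt{\cos^2\phi + \sin^2\phi} = 1$ for any angle $\phi$), the right-hand side equals $\plist{1 - \headcoef - \goaltailcoef}\norm{\pos - \goalpos}$. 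Finally, using the hypothesis $\headcoef + \goaltailcoef < 1$ so that the factor $1 - \headcoef - \goaltailcoef$ is strictly positive, I can divide both sides by it without reversing the inequality, which yields the claimed bound $\norm{\pos - \goalpos} \leq \tfrac{1}{1 - \headcoef - \goaltailcoef}\norm{\headpos - \goaltailpos}$.

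There is no real obstacle here: the statement follows directly from the triangle inequality once the algebraic substitution is made, and the condition $\headcoef + \goaltailcoef < 1$ is used only to guarantee that the scaling constant is positive (and finite) so the final division is legitimate. The one point worth stating carefully is that the bound is vacuously trivial when $\pos = \goalpos$ (both sides are zero, and one can check $\headpos = \goaltailpos = \goalpos$ in that case), so the argument covers that degenerate configuration as well.
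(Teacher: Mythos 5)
Your proof is correct and is essentially the same argument as the paper's: both rest on the identity $\headpos - \goaltailpos = (\pos - \goalpos) + \norm{\pos - \goalpos}\plist{\headcoef \ovectsmall{\ori} + \goaltailcoef \ovectsmall{\goalori}}$ followed by the triangle inequality (the paper applies it to $\pos - \goalpos$ and rearranges; your reverse-triangle-inequality form is the same inequality read the other way). The only cosmetic difference is that you explicitly note the degenerate case $\pos = \goalpos$, which the paper leaves implicit.
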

\begin{proof}
See \refapp{app.lem.headway_tailway_distance}.
\end{proof}

Hence, if \mbox{$\headcoef + \goaltailcoef < 1$}, an asymptotically decreasing headway–tailway distance implies an asymptotically decreasing distance to the goal, as observed below.

\begin{proposition}\label{prop.headway_tailway_distance_decay}
\emph{(Headway-Tailway Distance Decay)} For $\headcoef + \goaltailcoef < 1$,  the dual-headway unicycle controller in \refeq{eq.dual_headway_control} ensures that the distance between the headway and tailway points are strictly decreasing away from the goal, i.e., for any $(\pos, \ori), (\goalpos, \goalori) \in \R^2 \! \times \! [-\pi, \pi)$ with $\pos \neq \goalpos$, one has 
{\small
\begin{align*}
\frac{d}{dt}\norm{\headpos - \goaltailpos}^2 &\leq -2 \refcoef \norm{\headpos - \goaltailpos}^2 \plist{\!1 - \tfrac{\goaltailcoef \absval{\tfrac{\tr{(\goalpos - \pos)}}{\norm{\goalpos - \pos}}\ovectsmall{\ori}}}{1 - \headcoef \tfrac{\tr{(\goalpos - \pos)}}{\norm{\goalpos - \pos}}\ovectsmall{\ori} }\!} 
\end{align*}
}
where the upper bound is strictly negative for $\pos \neq \goalpos$. 
\end{proposition}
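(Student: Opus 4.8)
The plan is to verify the inequality by differentiating $\norm{\headpos - \goaltailpos}^2$ along the closed-loop unicycle dynamics \refeq{eq.UnicycleDynamics} driven by the controller \refeq{eq.dual_headway_control}, which is designed precisely so that $\headposdot$ equals the reference dynamics \refeq{eq.headway_reference_dynamics}. First I would write $\tfrac{d}{dt}\norm{\headpos - \goaltailpos}^2 = 2\tr{(\headpos - \goaltailpos)}\plist{\headposdot - \goaltailposdot}$ and substitute $\headposdot = -\refcoef(\headpos - \goaltailpos)$ to obtain $\tfrac{d}{dt}\norm{\headpos - \goaltailpos}^2 = -2\refcoef\norm{\headpos - \goaltailpos}^2 - 2\tr{(\headpos - \goaltailpos)}\goaltailposdot$. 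Since the first term is already in the desired form, the whole computation reduces to bounding the cross term $\tr{(\headpos - \goaltailpos)}\goaltailposdot$ from below; here $\pos \neq \goalpos$ is what keeps $\goaltailposdot$ and the controller \refeq{eq.dual_headway_control} well-defined.

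Next I would substitute the expression for $\goaltailposdot$ recorded above together with the closed-form linear velocity $\linvel_{\goalpos,\goalori}(\pos,\ori)$ from \refeq{eq.dual_headway_control}, which turns the cross term into a single rational quantity whose numerator is the product of the three scalars $\tfrac{\tr{(\pos - \goalpos)}}{\norm{\pos-\goalpos}}\ovectsmall{\ori}$, $\tr{(\headpos - \goaltailpos)}\ovectsmall{\goalori}$, $\tr{(\headpos - \goaltailpos)}\ovectsmall{\ori}$, scaled by $\refcoef\goaltailcoef$, and whose denominator is $1 + \headcoef\tfrac{\tr{(\pos - \goalpos)}}{\norm{\pos-\goalpos}}\ovectsmall{\ori}$. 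Two elementary estimates then finish the bound: by Cauchy--Schwarz, since $\ovectsmall{\ori}$ and $\ovectsmall{\goalori}$ are unit vectors, the two inner products of $\headpos - \goaltailpos$ against them are each at most $\norm{\headpos - \goaltailpos}$ in magnitude; and since $\headcoef \le \headcoef + \goaltailcoef < 1$ while $\absval{\tfrac{\tr{(\pos - \goalpos)}}{\norm{\pos-\goalpos}}\ovectsmall{\ori}} \le 1$, the denominator is bounded below by $1 - \headcoef > 0$, so dividing by it preserves the inequality direction. Collecting the $\refcoef\goaltailcoef$ factors and using $\tfrac{\tr{(\pos - \goalpos)}}{\norm{\pos-\goalpos}}\ovectsmall{\ori} = -\tfrac{\tr{(\goalpos - \pos)}}{\norm{\goalpos-\pos}}\ovectsmall{\ori}$ yields exactly the stated inequality.

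To conclude strict negativity of the upper bound when $\pos \neq \goalpos$, I would abbreviate $s \ldf \tfrac{\tr{(\goalpos - \pos)}}{\norm{\goalpos-\pos}}\ovectsmall{\ori} \in [-1,1]$, so that $1 - \headcoef s \ge 1 - \headcoef > 0$, and show the bracketed factor $1 - \tfrac{\goaltailcoef\absval{s}}{1 - \headcoef s}$ is strictly positive; this amounts to $\goaltailcoef\absval{s} + \headcoef s < 1$, which holds because $\goaltailcoef\absval{s} + \headcoef s \le (\headcoef + \goaltailcoef)\absval{s} \le \headcoef + \goaltailcoef < 1$. It then remains only to note $\norm{\headpos - \goaltailpos} > 0$ for $\pos \neq \goalpos$, which is immediate from \reflem{lem.headway_tailway_distance}: $\headpos = \goaltailpos$ would force $\norm{\pos - \goalpos} \le \tfrac{1}{1 - \headcoef - \goaltailcoef}\norm{\headpos - \goaltailpos} = 0$. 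Hence the right-hand side is a strictly negative multiple of $\norm{\headpos - \goaltailpos}^2 > 0$.

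There is no deep difficulty here; the proposition is essentially a bookkeeping computation. The step needing the most care is tracking the sign and nonvanishing of the denominator $1 + \headcoef\tfrac{\tr{(\pos - \goalpos)}}{\norm{\pos-\goalpos}}\ovectsmall{\ori}$ — which underlies both the well-posedness of the controller and the legitimacy of dividing through by it — and applying the magnitude bound on the triple product before collecting the $\refcoef\goaltailcoef$ coefficient, so that the final constant matches the form in the statement.
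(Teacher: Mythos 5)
Your proposal is correct and follows essentially the same route as the paper's proof: differentiate $\norm{\headpos - \goaltailpos}^2$, use the reference dynamics to get the $-2\refcoef\norm{\headpos-\goaltailpos}^2$ term, bound the cross term $\tr{(\headpos-\goaltailpos)}\goaltailposdot$ via the closed-form $\linvel_{\goalpos,\goalori}$ and Cauchy--Schwarz, and conclude strict negativity from $\goaltailcoef\absval{s}+\headcoef s \le (\headcoef+\goaltailcoef)\absval{s} < 1$. Your explicit appeal to \reflem{lem.headway_tailway_distance} for $\norm{\headpos-\goaltailpos}>0$ is a small detail the paper leaves implicit, but otherwise the two arguments coincide.
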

\begin{proof}
See \refapp{app.headway_tailway_distance_decay}.
\end{proof}

\noindent Another characteristic of dual-headway control is achieving forward motion in finite time and maintaining it over time.

\begin{proposition}\label{prop.forward_motion_in_finite_time}
\emph{\!(Forward Motion in Finite Time)}
For $\headcoef + \goaltailcoef \!< 1$, if the unicycle pose satisfies \mbox{$\tfrac{\tr{(\goaltailpos - \headpos)}}{\norm{\goaltailpos - \headpos}} \ovectsmall{\ori}\! > \!-1$}, the dual-headway unicycle controller in \refeq{eq.dual_headway_control} switches to forward motion with $\linvel_{\goalpos, \goalori}(\pos, \ori) > 0$ in finite time, because of  the comparison lemma \cite{khalil_NonlinearSystems2001} and
{\footnotesize
\begin{align*}
\scalebox{0.92}{$
\tfrac{\diff }{\diff t} \tr{(\goaltailpos \! - \!\headpos)\!}\! \ovectsmall{\ori} \!\geq\!  - \refcoef(1 \! -\! \headcoef \!-\! \goaltailcoef) \tr{(\goaltailpos \! - \!\headpos)\!} \!\ovectsmall{\ori} \!+\! \tfrac{\refcoef\! \plist{\!\!\tr{(\goaltailpos \! - \headpos)\!} \!\nvectsmall{\ori}\!}^{\!2}\!\!}{\headcoef \norm{\pos - \goalpos}}$}
\end{align*}
}%
for $-1 < \tfrac{\tr{(\goaltailpos - \headpos)}}{\norm{\goaltailpos - \headpos}} \ovectsmall{\ori} \leq 0$.
\end{proposition}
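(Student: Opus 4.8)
The plan is to reduce the statement to the sign of a single scalar, obtain the displayed scalar differential inequality by direct differentiation, and then invoke the comparison lemma. Write $L:=\norm{\goaltailpos-\headpos}$ and $s:=\tr{(\goaltailpos-\headpos)}\ovectsmall{\ori}$, so the hypothesis reads $s/L>-1$ and the target is a finite time at which $s>0$. Since $\headcoef+\goaltailcoef<1$ forces $\headcoef<1$, the denominator $1+\headcoef\tfrac{\tr{(\pos-\goalpos)}}{\norm{\pos-\goalpos}}\ovectsmall{\ori}$ in \refeq{eq.dual_headway_control} is at least $1-\headcoef>0$, while its numerator equals $\refcoef s$; hence $\linvel_{\goalpos,\goalori}(\pos,\ori)>0$ exactly when $s>0$, and it suffices to show $s$ becomes positive along the closed loop in finite time.

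I would get the differential inequality by differentiating $s$ along \refeq{eq.UnicycleDynamics}--\refeq{eq.dual_headway_control}: substituting the reference law \refeq{eq.headway_reference_dynamics} for $\headposdot$, the expression listed for $\goaltailposdot$, the identity $\tfrac{\diff}{\diff t}\ovectsmall{\ori}=\angvel\,\nvectsmall{\ori}$, and \refeq{eq.dual_headway_control} for $\angvel$, then collecting terms, gives
\[
\dot s \,=\, -\refcoef s\,\tfrac{1+\headcoef c_1+\goaltailcoef c_1 c_2}{1+\headcoef c_1} \,+\, \tfrac{\refcoef\,(\tr{(\goaltailpos-\headpos)}\nvectsmall{\ori})^2}{\headcoef\,\norm{\pos-\goalpos}},
\]
with $c_1:=\tfrac{\tr{(\pos-\goalpos)}}{\norm{\pos-\goalpos}}\ovectsmall{\ori}$ and $c_2:=\tr{\ovectsmall{\goalori}}\ovectsmall{\ori}$. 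For $s\le0$ one has $-\refcoef s\ge0$, and $c_1c_2\ge-1$, $\headcoef c_1\ge-\headcoef$, $1+\headcoef c_1\ge1-\headcoef$ yield $\tfrac{1+\headcoef c_1+\goaltailcoef c_1 c_2}{1+\headcoef c_1}\ge1-\headcoef-\goaltailcoef$, which reproduces exactly the displayed lower bound on $\dot s$ on $\{-L<s\le0\}$.

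Now I would run the comparison argument. On $\{-L<s\le0,\ \pos\neq\goalpos\}$, the identity $s^2+(\tr{(\goaltailpos-\headpos)}\nvectsmall{\ori})^2=L^2$ and \reflem{lem.headway_tailway_distance} (giving $L>0$ and $\norm{\pos-\goalpos}\le\tfrac{L}{1-\headcoef-\goaltailcoef}$) make the driving term strictly positive, so $\dot s\ge-\refcoef(1-\headcoef-\goaltailcoef)s>0$ and $s$ strictly increases; by \refprop{prop.headway_tailway_distance_decay}, $L=\norm{\headpos-\goaltailpos}$ is nonincreasing, so $\norm{\pos-\goalpos}$ stays bounded and comparable to $L$. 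In the normalized variable $\sigma:=s/L\in(-1,0]$ one has $\dot\sigma=\tfrac{\dot s}{L}-\sigma\tfrac{\dot L}{L}$ with $|\dot L/L|$ uniformly bounded (from $|\linvel|\le\tfrac{\refcoef L}{1-\headcoef}$) and, from the inequality above, $\tfrac{\dot s}{L}\ge\tfrac{\refcoef(1-\headcoef-\goaltailcoef)}{\headcoef}(1-\sigma^2)$ for $s\le0$; hence near $\sigma=0$ the first term is bounded below by a positive constant and the second is $O(|\sigma|)$, so there is $\epsilon_0\in(0,1)$ and $c_0>0$ with $\dot\sigma\ge c_0$ on $\{-\epsilon_0\le\sigma\le0\}$, and once the trajectory enters that slab, $\sigma$ (hence $\linvel$) turns positive within time $\epsilon_0/c_0$. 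While $s\le0$ the trajectory stays in $\{-1\le\sigma\le0\}$ ($\sigma_0>-1$ rules out starting on, and $\dot\sigma\ge0$ at $\sigma=-1$ rules out exiting through, $\{\sigma=-1\}$), and since $s\ge s(0)e^{-\refcoef(1-\headcoef-\goaltailcoef)t}$ cannot converge to a negative limit, the comparison lemma \cite{khalil_NonlinearSystems2001} drives $s\to0^-$; the remaining step is to conclude $\sigma$ actually reaches $[-\epsilon_0,0]$ and does not linger near $-1$ while $L\to0$.

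That last step is the main obstacle. The degenerate case is exactly the singular "heading-reversed" configurations with $\sigma=-1$, where $\tr{(\goaltailpos-\headpos)}\nvectsmall{\ori}=0$, $\angvel=0$, $\linvel<0$, and both terms of the differential inequality vanish — the unicycle simply backs toward the goal — and I would handle it by showing $\{\sigma=-1\}$ is repelling for the closed loop away from its two equilibria (the symmetry-breaking effect), via a local analysis of the angular dynamics in \refeq{eq.dual_headway_control} near $\{\sigma=-1\}$, or a LaSalle-type argument on $\{s\le0\}$. Here $\headcoef+\goaltailcoef<1$ enters a second time: it is exactly what forces $\tr{(\goaltailpos-\headpos)}\nvectsmall{\ori}\neq0$ whenever the unicycle sits on the forward boundary $\sigma=0$, so the driving term cannot vanish at the boundary the trajectory must cross. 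Everything else — the reduction, the differential inequality, and the crossing estimate once $\sigma$ is near $0$ — is routine bounding.
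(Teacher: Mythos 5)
Your reduction to the sign of $s=\tr{(\goaltailpos-\headpos)}\ovectsmall{\ori}$ and your derivation of the displayed differential inequality coincide with the paper's proof in \refapp{app.forward_motion_in_finite_time}: the paper likewise computes $\tfrac{\diff}{\diff t}s=-\linvel_{\goalpos,\goalori}\plist{1+c_1(\headcoef+\goaltailcoef c_2)}+\refcoef\plist{\tr{(\goaltailpos-\headpos)}\nvectsmall{\ori}}^2/(\headcoef\norm{\pos-\goalpos})$ (with your $c_1,c_2$) and then bounds the first term below by $-\refcoef(1-\headcoef-\goaltailcoef)s$. Your combined estimate $\tfrac{1+\headcoef c_1+\goaltailcoef c_1c_2}{1+\headcoef c_1}\ge \tfrac{1-\headcoef-\goaltailcoef}{1-\headcoef}\ge 1-\headcoef-\goaltailcoef$ is in fact the cleaner route: the paper's two-step version passes through the intermediate assertion $\linvel_{\goalpos,\goalori}\le\refcoef s$, which holds only when $c_1\le 0$ (goal ahead of the robot), whereas your bound needs no such restriction.

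The divergence is in the final step, and there the comparison cuts in your favor rather than against you. The paper's entire conclusion is one sentence: since the residual term is strictly positive, $\tfrac{\diff}{\diff t}s$ strictly exceeds $-\refcoef(1-\headcoef-\goaltailcoef)s$, and the comparison lemma is said to force $s>0$ in finite time. It performs none of the analysis near $\sigma=s/L=-1$ (with $L=\norm{\goaltailpos-\headpos}$) that you correctly identify as the crux: $\dot s>-\lambda s$ alone yields only $\liminf_{t\to\infty}s\ge 0$, and the residual $\refcoef(L^2-s^2)/(\headcoef\norm{\pos-\goalpos})$ vanishes exactly on $\{\sigma=-1\}$, the set containing the symmetric straight-back-up configurations. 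Your additional machinery --- lower-bounding the residual by a multiple of $L(1-\sigma^2)$ via \reflem{lem.headway_tailway_distance}, establishing uniform positivity of $\dot\sigma$ on a slab $-\epsilon_0\le\sigma\le 0$, and explicitly flagging the possibility of lingering near $\sigma=-1$ while $L\to 0$ --- goes beyond anything in the paper. So the gap you leave open is genuine, but it is equally open in the paper's own proof; you have reproduced everything the paper actually establishes and have honestly isolated the one step that its proof asserts rather than proves.
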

\begin{proof}
See \refapp{app.forward_motion_in_finite_time}.
\end{proof}

\begin{proposition}\label{prop.persistent_forward_motion}
\emph{(Persistent Forward Motion)} For $\headcoef < 1$, once the unicycle starts moving forward with%
\footnote{For $\headcoef < 1$, the following forward motion equivalences hold:
\begin{align*}
\linvel_{\goalpos, \goalori}(\pos, \ori) \geq 0 &\Longleftrightarrow\tr{\plist{\goaltailpos - \headpos}} \ovectsmall{\ori} \geq 0 \\
& \Longleftrightarrow\tfrac{\tr{\plist{\goalpos - \pos}}}{\norm{\goalpos - \pos}}\ovectsmall{\ori} \geq \headcoef + \goaltailcoef \tr{\ovectsmall{\goalori}\!}\!\ovectsmall{\ori}
\end{align*}
where these equivalences are also valid with strict inequalities. 
} 
$\linvel_{\goalpos, \goalori}(\pos, \ori) \geq 0$, the dual-headway unicycle controller in \refeq{eq.dual_headway_control} persistently generates positive linear velocity away from the goal, i.e.,
\begin{align*}
\linvel_{\goalpos, \goalori}(\pos, \ori) = 0 \Longrightarrow \tfrac{\diff}{\diff t} \linvel_{\goalpos, \goalori}(\pos, \ori) > 0 \quad  \forall \pos \neq \goalpos.
\end{align*}
\end{proposition}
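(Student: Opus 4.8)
The plan is to exploit that the hypothesis $\headcoef < 1$ keeps the denominator of the linear-velocity law in \refeq{eq.dual_headway_control} bounded away from zero, so that on the set where $\linvel_{\goalpos,\goalori}$ vanishes the sign of its time derivative is controlled by a single, easily differentiated scalar. Concretely, Cauchy--Schwarz gives $\absval{\tfrac{\tr{(\pos - \goalpos)}}{\norm{\pos - \goalpos}}\ovectsmall{\ori}} \le 1$, so the denominator $D := 1 + \headcoef\tfrac{\tr{(\pos-\goalpos)}}{\norm{\pos-\goalpos}}\ovectsmall{\ori}$ satisfies $D \ge 1 - \headcoef > 0$; this is exactly where $\headcoef < 1$ enters. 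Writing $\linvel_{\goalpos,\goalori} = N/D$ with $N := -\refcoef\tr{(\headpos - \goaltailpos)}\ovectsmall{\ori}$, I then have $\linvel_{\goalpos,\goalori}(\pos,\ori) = 0 \iff N = 0 \iff \tr{(\headpos - \goaltailpos)}\ovectsmall{\ori} = 0$ (the equivalences recorded in the footnote), and by the quotient rule $\tfrac{\diff}{\diff t}\linvel_{\goalpos,\goalori} = \dot N / D$ whenever $N = 0$, so it suffices to show $\dot N > 0$ on this set.

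To differentiate $N$ there, I would use that, by construction of the controller, $\headpos$ obeys the reference dynamics $\headposdot = -\refcoef(\headpos - \goaltailpos)$ of \refeq{eq.headway_reference_dynamics}, while the expression for $\goaltailposdot$ recorded just before \refeq{eq.headway_reference_dynamics} shows $\goaltailposdot$ is proportional to $\linvel_{\goalpos,\goalori}$ and hence vanishes on $\{\linvel_{\goalpos,\goalori} = 0\}$; consequently $\tfrac{\diff}{\diff t}(\headpos - \goaltailpos) = -\refcoef(\headpos - \goaltailpos)$ there. Combining this with $\tfrac{\diff}{\diff t}\ovectsmall{\ori} = \oridot\,\nvectsmall{\ori} = \angvel_{\goalpos,\goalori}(\pos,\ori)\,\nvectsmall{\ori}$ and the defining relation $\tr{(\headpos - \goaltailpos)}\ovectsmall{\ori} = 0$, the term proportional to $\tr{(\headpos-\goaltailpos)}\ovectsmall{\ori}$ drops out, and after substituting $\angvel_{\goalpos,\goalori}$ from \refeq{eq.dual_headway_control} I expect
\begin{align*}
\dot N \;=\; -\refcoef\,\angvel_{\goalpos,\goalori}(\pos,\ori)\,\tr{(\headpos - \goaltailpos)}\nvectsmall{\ori} \;=\; \frac{\refcoef^{2}\bigl(\tr{(\headpos - \goaltailpos)}\nvectsmall{\ori}\bigr)^{2}}{\headcoef\,\norm{\pos - \goalpos}} \;\ge\; 0,
\end{align*}
which is consistent with specializing the lower bound of \refprop{prop.forward_motion_in_finite_time} to the level set $\tr{(\goaltailpos - \headpos)}\ovectsmall{\ori} = 0$.

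It remains to sharpen this to a strict inequality by ruling out $\tr{(\headpos - \goaltailpos)}\nvectsmall{\ori} = 0$, which is the only delicate point. If $\tr{(\headpos - \goaltailpos)}\nvectsmall{\ori} = 0$ held simultaneously with $\tr{(\headpos - \goaltailpos)}\ovectsmall{\ori} = 0$, then, since $\{\ovectsmall{\ori},\nvectsmall{\ori}\}$ is an orthonormal basis of $\R^{2}$, we would have $\headpos = \goaltailpos$; but \reflem{lem.headway_tailway_distance} then gives $\norm{\pos - \goalpos} \le \tfrac{1}{1-\headcoef-\goaltailcoef}\norm{\headpos - \goaltailpos} = 0$, contradicting $\pos \neq \goalpos$ (this invokes the standing hypothesis $\headcoef + \goaltailcoef < 1$; equivalently, $\headpos = \goaltailpos$ with $\pos \neq \goalpos$ would force $\norm{\headcoef\ovectsmall{\ori} + \goaltailcoef\ovectsmall{\goalori}} = 1$, which is impossible when $\headcoef + \goaltailcoef < 1$). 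Hence $\dot N > 0$, so $\tfrac{\diff}{\diff t}\linvel_{\goalpos,\goalori}(\pos,\ori) = \dot N/D > 0$ for every $\pos \neq \goalpos$ with $\linvel_{\goalpos,\goalori}(\pos,\ori) = 0$, which is the claim (and the displayed implication yields, via a Nagumo-type argument, persistence of $\linvel_{\goalpos,\goalori} \ge 0$). I expect the main obstacle to be the bookkeeping of the second step --- tracking which terms vanish on the zero level set of $\linvel_{\goalpos,\goalori}$ and propagating the chain rule correctly through $\ovectsmall{\ori}$ --- rather than any new estimate, since beyond that only \reflem{lem.headway_tailway_distance} is used.
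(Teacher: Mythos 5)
Your proof is correct and takes essentially the same route as the paper's: both reduce the claim to the sign of $\tfrac{\diff}{\diff t}\tr{(\goaltailpos - \headpos)}\ovectsmall{\ori}$ on its zero level set, where the $\linvel$-proportional terms vanish and only the angular-velocity contribution survives, yielding the same quadratic expression $\refcoef\bigl(\tr{(\goaltailpos - \headpos)}\nvectsmall{\ori}\bigr)^{2}/(\headcoef\norm{\pos - \goalpos})$, which equals $\refcoef\norm{\goaltailpos - \headpos}^{2}/(\headcoef\norm{\pos - \goalpos})$ by orthonormality. Your explicit appeal to \reflem{lem.headway_tailway_distance} (hence to the standing hypothesis $\headcoef + \goaltailcoef < 1$) to rule out $\headpos = \goaltailpos$ and obtain strict positivity is a detail the paper's proof asserts without justification, so that added care is welcome.
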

\begin{proof}
See \refapp{app.prop.persistent_forward_motion}.
\end{proof}

Once in forward motion, dual-headway control reduces the distance to the goal and increases alignment with the goal.

\begin{proposition}\label{prop.distance_to_goal_decay}
\emph{(Distance-to-Goal Decay under Forward Motion)}
For $\goaltailcoef \! \leq \! \headcoef \! <\! 1$, the dual-headway unicycle controller \refeq{eq.dual_headway_control} decreases the distance-to-goal under forward motion, i.e.,
\begin{align*}
\linvel_{\goalpos, \goalori}(\pos, \ori) > 0 \Longrightarrow \tfrac{\diff}{\diff t} \norm{\pos - \goalpos}^2 < 0 \quad \quad \forall \pos  \neq \goalpos.
\end{align*}
\end{proposition}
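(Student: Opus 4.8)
The plan is to differentiate $\norm{\pos - \goalpos}^2$ along the closed-loop dynamics and reduce the resulting sign condition to the forward-motion characterization already recorded in the footnote to \refprop{prop.persistent_forward_motion}.

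First, using the unicycle kinematics \refeq{eq.UnicycleDynamics}, I would compute
\begin{align*}
\tfrac{\diff}{\diff t}\norm{\pos - \goalpos}^2 = 2\tr{(\pos - \goalpos)}\posdot = 2\,\linvel_{\goalpos, \goalori}(\pos, \ori)\, \tr{(\pos - \goalpos)}\ovectsmall{\ori},
\end{align*}
so that under forward motion $\linvel_{\goalpos, \goalori}(\pos, \ori) > 0$ the sign of the derivative is exactly the sign of $\tr{(\pos - \goalpos)}\ovectsmall{\ori} = -\,\tr{(\goalpos - \pos)}\ovectsmall{\ori}$. It therefore suffices to show that $\tr{(\goalpos - \pos)}\ovectsmall{\ori} > 0$ whenever $\pos \neq \goalpos$ and the controller is in forward motion.

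Next, I would invoke the forward-motion equivalence from the footnote to \refprop{prop.persistent_forward_motion}, which is valid with strict inequalities since $\headcoef < 1$:
\begin{align*}
\linvel_{\goalpos, \goalori}(\pos, \ori) > 0 \;\Longleftrightarrow\; \tfrac{\tr{(\goalpos - \pos)}}{\norm{\goalpos - \pos}}\ovectsmall{\ori} > \headcoef + \goaltailcoef\, \tr{\ovectsmall{\goalori}}\ovectsmall{\ori}.
\end{align*}
Because $\tr{\ovectsmall{\goalori}}\ovectsmall{\ori} = \cos(\goalori - \ori) \geq -1$, and using the hypothesis $\goaltailcoef \leq \headcoef$, the right-hand side is bounded below by $\headcoef - \goaltailcoef \geq 0$. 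Hence $\tfrac{\tr{(\goalpos - \pos)}}{\norm{\goalpos - \pos}}\ovectsmall{\ori} > 0$, and since $\norm{\goalpos - \pos} > 0$ for $\pos \neq \goalpos$ we get $\tr{(\goalpos - \pos)}\ovectsmall{\ori} > 0$. Substituting back into the derivative identity above yields $\tfrac{\diff}{\diff t}\norm{\pos - \goalpos}^2 < 0$, as claimed.

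I do not expect a genuine obstacle: the differentiation is immediate and the only nontrivial ingredient — the sign of $\tr{(\goalpos - \pos)}\ovectsmall{\ori}$ under forward motion — is already packaged in the cited footnote. The single point that needs a little care is verifying that it is precisely the assumption $\goaltailcoef \leq \headcoef$ that keeps $\headcoef + \goaltailcoef \cos(\goalori - \ori)$ nonnegative in the worst case $\cos(\goalori - \ori) = -1$, so that the strict inequality defining forward motion propagates through to the strict decay of $\norm{\pos - \goalpos}^2$.
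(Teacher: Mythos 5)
Your proposal is correct and follows essentially the same route as the paper: differentiate $\norm{\pos - \goalpos}^2$ along the kinematics and use the forward-motion equivalence $\linvel_{\goalpos, \goalori}(\pos, \ori) > 0 \Leftrightarrow \tfrac{\tr{(\goalpos - \pos)}}{\norm{\goalpos - \pos}}\ovectsmall{\ori} > \headcoef + \goaltailcoef \tr{\ovectsmall{\goalori}}\ovectsmall{\ori} \geq \headcoef - \goaltailcoef \geq 0$ to fix the sign. Your writeup is in fact slightly cleaner, since the paper's displayed derivative carries a sign typo ($-2\tr{(\pos - \goalpos)}\ovectsmall{\ori}\,\linvel$ instead of $-2\tr{(\goalpos - \pos)}\ovectsmall{\ori}\,\linvel$) that your version avoids.
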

\begin{proof}
See \refapp{app.distance_to_goal_decay}.
\end{proof}

\begin{proposition}\label{prop.goal_alignment_under_forward_motion}
(Goal Alignment under Forward Motion)
For \mbox{$\goaltailcoef + 2 \headcoef < 1$}, if the unicycle is under forward motion, i.e., \mbox{$\tr{(\goaltailpos\! -\! \headpos)\!} \ovectsmall{\ori} \!\geq 0$}, then the dual-headway unicycle controller in \refeq{eq.dual_headway_control} asymptotically aligns the unicycle orientation $\ori$ with the goal orientation $\goalori$, because it continuously improves the alignment of the unicycle headway-to-tailway vector  with both the goal orientation and the average of the current and goal orientations as follows:
{\small
\begin{align*}
\begin{array}{l}
\tfrac{\tr{(\goaltailpos\! - \headpos)}}{\norm{\goaltailpos \!- \headpos}} \ovectsmall{\ori} \!\geq 0 ,\\
\tfrac{\tr{\plist{\goaltailpos - \headpos}}}{\norm{\goaltailpos - \headpos}} \ovectsmall{\goalori} \! \neq \pm 1 
\end{array} 
\!\! \Longrightarrow  \!\!
\begin{array}{l}
\frac{\diff}{\diff t} \tfrac{\tr{\plist{\goaltailpos - \headpos}}}{\norm{\goaltailpos - \headpos}} \ovectsmall{\goalori} \!> 0,\\
\frac{\diff}{\diff t} \tfrac{\tr{\plist{\goaltailpos - \headpos}}}{\norm{\goaltailpos - \headpos}} \plist{\!\ovectsmall{\ori} \!+\! \ovectsmall{\goalori}\!}\! > 0.
\end{array}
\end{align*}
}%
\end{proposition}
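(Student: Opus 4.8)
The plan is to track the headway-to-tailway vector $\vect{w} := \goaltailpos - \headpos$, whose unit direction $\hat{\vect{w}} := \vect{w}/\norm{\vect{w}}$ is exactly the quantity the claim projects onto $\ovectsmall{\goalori}$ and onto $\ovectsmall{\ori}+\ovectsmall{\goalori}$. Two structural facts about $\dot{\vect{w}}$ drive everything. First, the reference dynamics \refeq{eq.headway_reference_dynamics} give $\headposdot = -\refcoef\plist{\headpos - \goaltailpos} = \refcoef\vect{w}$, so $\dot{\vect{w}} = \goaltailposdot - \refcoef\vect{w}$ and the radial term $-\refcoef\vect{w}$ does not rotate $\hat{\vect{w}}$. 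Second, from \refeq{eq.headway_tailway_points} the tailway point translates only along $\ovectsmall{\goalori}$, i.e., $\goaltailposdot = \mu\,\ovectsmall{\goalori}$ with $\mu := -\goaltailcoef\,\tfrac{\tr{(\pos-\goalpos)}}{\norm{\pos-\goalpos}}\ovectsmall{\ori}\,\linvel_{\goalpos,\goalori}(\pos,\ori)$, so $\mu$ carries the sign of $\linvel_{\goalpos,\goalori}(\pos,\ori)\,\tfrac{\tr{(\goalpos-\pos)}}{\norm{\goalpos-\pos}}\ovectsmall{\ori}$. Moreover $\pos\neq\goalpos$ gives $\norm{\vect{w}}\geq(1-\headcoef-\goaltailcoef)\norm{\pos-\goalpos}>0$ by \reflem{lem.headway_tailway_distance}, so all denominators below are nonzero.

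Using these facts, differentiating a unit-vector projection along a fixed unit vector $\vect{a}$ yields $\tfrac{\diff}{\diff t}\tr{\hat{\vect{w}}}\vect{a} = \tfrac{1}{\norm{\vect{w}}}\plist{\tr{\goaltailposdot}\vect{a} - \plist{\tr{\hat{\vect{w}}}\vect{a}}\plist{\tr{\hat{\vect{w}}}\goaltailposdot}}$, the $-\refcoef\vect{w}$ contribution cancelling. Setting $\vect{a}=\ovectsmall{\goalori}$ and inserting $\goaltailposdot=\mu\,\ovectsmall{\goalori}$ gives $\tfrac{\diff}{\diff t}\tr{\hat{\vect{w}}}\ovectsmall{\goalori} = \tfrac{\mu}{\norm{\vect{w}}}\plist{1-\plist{\tr{\hat{\vect{w}}}\ovectsmall{\goalori}}^{2}}$, whose bracket is strictly positive exactly when $\tr{\hat{\vect{w}}}\ovectsmall{\goalori}\neq\pm1$; hence the first claimed inequality is equivalent to $\mu>0$. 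For that I would invoke the forward-motion characterization in the footnote of \refprop{prop.persistent_forward_motion}: under forward motion $\linvel_{\goalpos,\goalori}(\pos,\ori)\geq0$ and $\tfrac{\tr{(\goalpos-\pos)}}{\norm{\goalpos-\pos}}\ovectsmall{\ori}\geq\headcoef+\goaltailcoef\,\tr{\ovectsmall{\goalori}}\ovectsmall{\ori}$, which in the coefficient regime under consideration forces $\tfrac{\tr{(\goalpos-\pos)}}{\norm{\goalpos-\pos}}\ovectsmall{\ori}>0$; since \refprop{prop.persistent_forward_motion} keeps $\linvel_{\goalpos,\goalori}(\pos,\ori)>0$ throughout forward motion, $\mu>0$ follows.

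For the second inequality I would repeat the computation with $\vect{a}=\ovectsmall{\ori}$ but add the contribution of $\tfrac{\diff}{\diff t}\ovectsmall{\ori}=\angvel_{\goalpos,\goalori}(\pos,\ori)\,\nvectsmall{\ori}$; substituting $\angvel_{\goalpos,\goalori}(\pos,\ori)$ from \refeq{eq.dual_headway_control} turns that extra term into $\tfrac{\refcoef\plist{\tr{\vect{w}}\nvectsmall{\ori}}^{2}}{\headcoef\,\norm{\pos-\goalpos}\,\norm{\vect{w}}}\geq0$. Adding the $\ovectsmall{\ori}$ and $\ovectsmall{\goalori}$ derivatives and collapsing the cross terms with the planar identities $\tr{\ovectsmall{\goalori}}\ovectsmall{\ori}-\plist{\tr{\hat{\vect{w}}}\ovectsmall{\goalori}}\plist{\tr{\hat{\vect{w}}}\ovectsmall{\ori}}=\plist{\tr{\hat{\vect{w}}}\nvectsmall{\goalori}}\plist{\tr{\hat{\vect{w}}}\nvectsmall{\ori}}$ and $1-\plist{\tr{\hat{\vect{w}}}\ovectsmall{\goalori}}^{2}=\plist{\tr{\hat{\vect{w}}}\nvectsmall{\goalori}}^{2}$ gives
{\footnotesize
\begin{align*}
\tfrac{\diff}{\diff t}\tr{\hat{\vect{w}}}\plist{\ovectsmall{\ori}+\ovectsmall{\goalori}} &= \tfrac{\mu}{\norm{\vect{w}}}\plist{\tr{\hat{\vect{w}}}\nvectsmall{\goalori}}\plist{\tr{\hat{\vect{w}}}\nvectsmall{\goalori}+\tr{\hat{\vect{w}}}\nvectsmall{\ori}} + \tfrac{\refcoef\plist{\tr{\vect{w}}\nvectsmall{\ori}}^{2}}{\headcoef\,\norm{\pos-\goalpos}\,\norm{\vect{w}}}.
\end{align*}
}
The second summand is nonnegative, and for the first I would write $\vect{w}=\norm{\goalpos-\pos}\plist{\ovectsmall{\alpha}-\headcoef\ovectsmall{\ori}-\goaltailcoef\ovectsmall{\goalori}}$ with $\ovectsmall{\alpha}$ the unit direction of $\goalpos-\pos$, express $\tr{\hat{\vect{w}}}\nvectsmall{\goalori}$ and $\tr{\hat{\vect{w}}}\nvectsmall{\ori}$ through the angular offsets $\alpha-\ori$ and $\goalori-\ori$, and show, using $\goaltailcoef+2\headcoef<1$ together with the forward-motion inequality linking these offsets, that $\tr{\hat{\vect{w}}}\nvectsmall{\goalori}$ and $\tr{\hat{\vect{w}}}\nvectsmall{\goalori}+\tr{\hat{\vect{w}}}\nvectsmall{\ori}$ cannot have opposite signs, so their product, and hence the whole derivative, is $\geq0$ (strictly $>0$ off the degenerate configurations). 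I expect this sign bookkeeping for the cross term to be the main obstacle: it is precisely here that the sharper bound $\goaltailcoef+2\headcoef<1$ — stronger than the $\headcoef+\goaltailcoef<1$ of \reflem{lem.headway_tailway_distance} — is needed, to keep the perpendicular direction of $\vect{w}$ from drifting too far between $\ovectsmall{\goalori}$ and $\ovectsmall{\ori}$.

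Finally, for the asymptotic-alignment conclusion I would close with a LaSalle-type argument: along any bounded closed-loop trajectory with $\pos\neq\goalpos$, \refprop{prop.forward_motion_in_finite_time} and \refprop{prop.persistent_forward_motion} drive the robot into persistent forward motion, the two monotonicities above make $\tr{\hat{\vect{w}}}\ovectsmall{\goalori}$ converge to its supremum, the invariance principle rules out any limit other than $\tr{\hat{\vect{w}}}\ovectsmall{\goalori}=1$ (the set $\tr{\hat{\vect{w}}}\ovectsmall{\goalori}=-1$ being repelling), so $\hat{\vect{w}}\to\ovectsmall{\goalori}$, and then \reflem{lem.headway_tailway_distance}, \refprop{prop.headway_tailway_distance_decay} and \refprop{prop.distance_to_goal_decay} drive $\pos\to\goalpos$ and $\ori\to\goalori$.
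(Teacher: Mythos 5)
Your derivative computations coincide with the paper's: your $\mu/\norm{\vect{w}}$ is exactly the paper's coefficient $\alpha$, your $\angvel$-induced extra term is the paper's $\beta\,\tr{\ovectsmall{\ori}\!}\plist{\mat{I}-\hat{\vect{w}}\tr{\hat{\vect{w}}\!}}\ovectsmall{\ori}$, and your treatment of the first inequality (reduce to $\mu>0$ via the forward-motion characterization) matches the paper's. The genuine gap is your strategy for the second inequality: you propose to show that $\tr{\hat{\vect{w}}\!}\nvectsmall{\goalori}$ and $\tr{\hat{\vect{w}}\!}\nvectsmall{\goalori}+\tr{\hat{\vect{w}}\!}\nvectsmall{\ori}$ cannot have opposite signs, so that the $\mu$-weighted summand is separately nonnegative. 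This is false on the forward domain. Writing $\psi$ for the angle of $\hat{\vect{w}}$, take $\psi-\goalori=0.1$ and $\psi-\ori=-1.4$; this is realizable for small $\headcoef,\goaltailcoef$ by choosing the direction of $\goalpos-\pos$ suitably, it satisfies $\tr{\hat{\vect{w}}\!}\ovectsmall{\ori}=\cos(1.4)>0$ and $\tr{\hat{\vect{w}}\!}\ovectsmall{\goalori}=\cos(0.1)\neq\pm1$, and it gives $\mu>0$; yet $\tr{\hat{\vect{w}}\!}\nvectsmall{\goalori}=\sin(0.1)>0$ while $\sin(0.1)+\sin(-1.4)<0$, so your first summand is strictly negative. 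Positivity cannot be obtained term by term.

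The paper instead couples the indefinite cross term with the second summand, and this is also where $2\headcoef+\goaltailcoef<1$ actually enters --- not, as you conjecture, in the sign bookkeeping of the perpendicular components. With $\mat{P}=\mat{I}-\hat{\vect{w}}\tr{\hat{\vect{w}}\!}\succeq 0$, $\vect{a}=\ovectsmall{\goalori}$, $\vect{b}=\ovectsmall{\ori}$, the sum is $\alpha\plist{\tr{\vect{a}\!}\mat{P}\vect{a}+\tr{\vect{a}\!}\mat{P}\vect{b}}+\beta\,\tr{\vect{b}\!}\mat{P}\vect{b}$ with $\alpha\in[0,\refcoef)$ and, via \reflem{lem.headway_tailway_distance}, $\beta=\tfrac{\refcoef}{\headcoef}\tfrac{\norm{\goaltailpos-\headpos}}{\norm{\goalpos-\pos}}\geq\refcoef\tfrac{1-\headcoef-\goaltailcoef}{\headcoef}\geq\refcoef$, the last step being precisely $2\headcoef+\goaltailcoef\leq1$; then $\absval{\tr{\vect{a}\!}\mat{P}\vect{b}}\leq\tfrac{1}{2}\plist{\tr{\vect{a}\!}\mat{P}\vect{a}+\tr{\vect{b}\!}\mat{P}\vect{b}}$ and $0\leq\alpha<\beta$ let the $\beta$-term absorb the cross term. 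Two smaller remarks: your closing LaSalle paragraph is not required, since the proposition only asserts the two differential inequalities; and your claim that forward motion forces $\tfrac{\tr{(\goalpos-\pos)}}{\norm{\goalpos-\pos}}\ovectsmall{\ori}>0$ needs the same care as the paper's own $\alpha\geq0$ step when $\goaltailcoef>\headcoef$, but neither of these is the blocking issue --- the sign strategy for the cross term is.
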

\begin{proof}
See \refapp{app.goal_alignment_under_forward_motion}.
\end{proof}

\begin{figure}[t]
\centering
\begin{tabular}{@{}c@{\hspace{0.2mm}}c@{\hspace{0mm}}c@{}}
\begin{tabular}{@{}c@{}}
\includegraphics[width = 0.32\columnwidth]{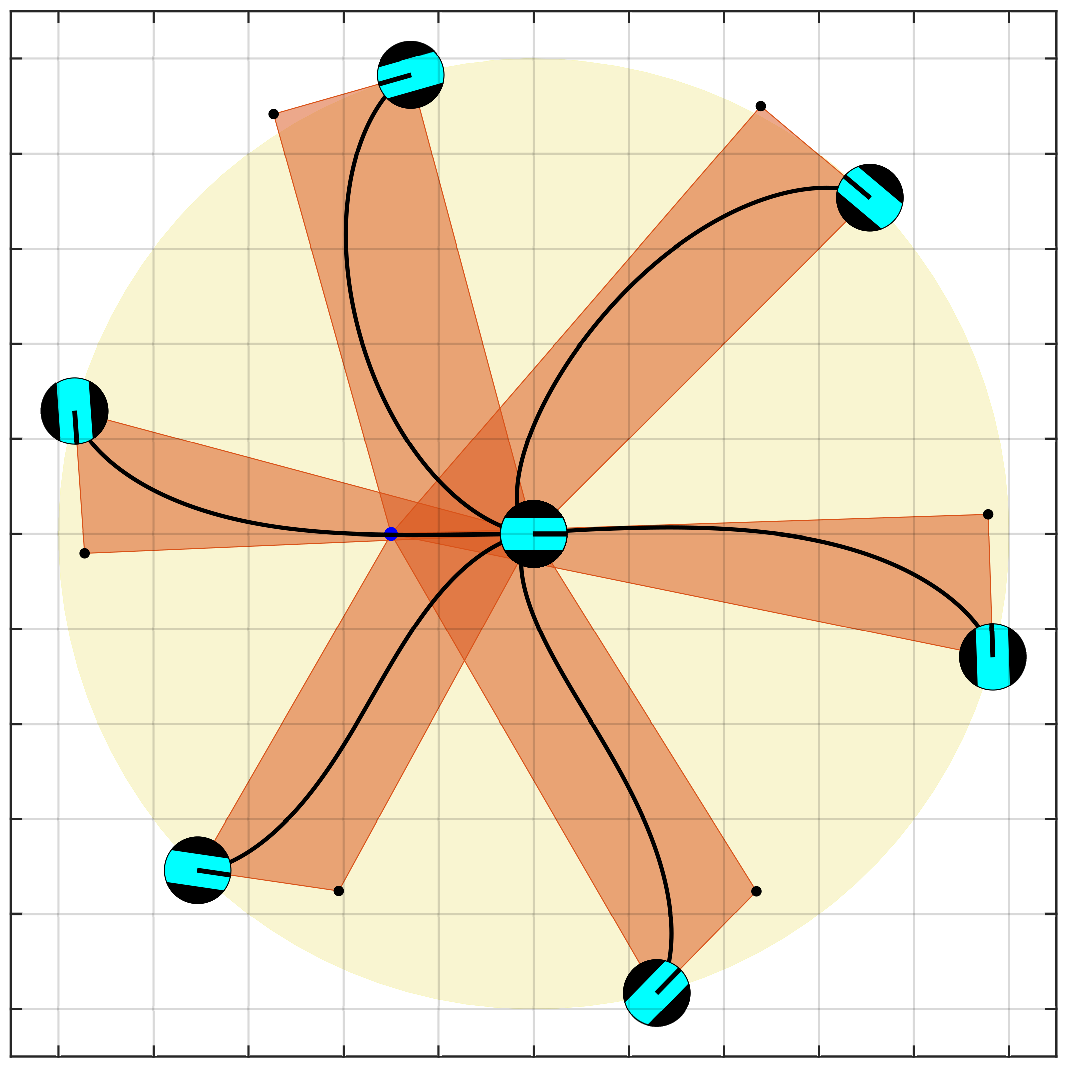}
\end{tabular} &
\begin{tabular}{@{}c@{}}
\includegraphics[width = 0.32\columnwidth]{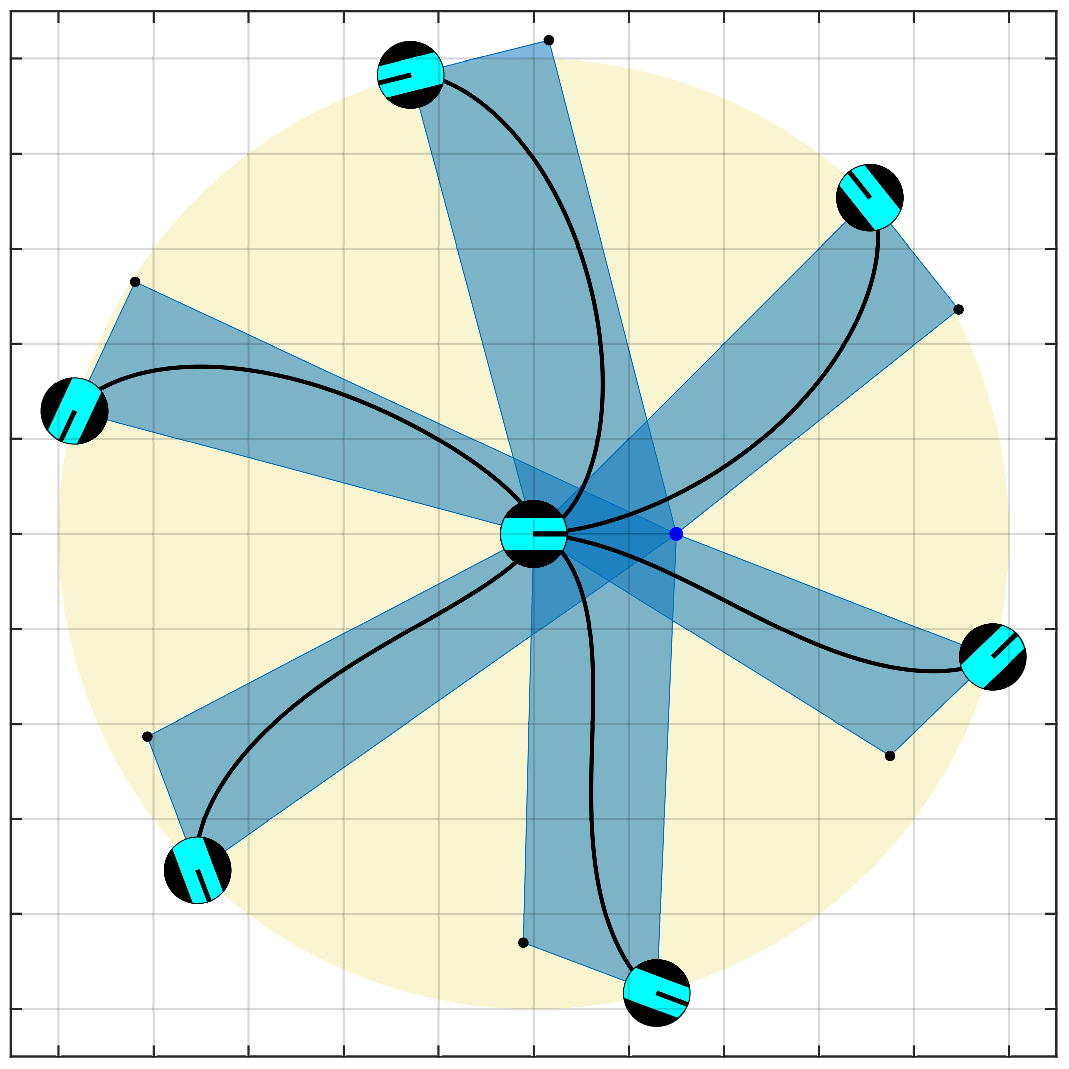} 
\end{tabular}
&
\begin{tabular}{@{\hspace{0mm}}c@{}}
\includegraphics[width = 0.39\columnwidth]{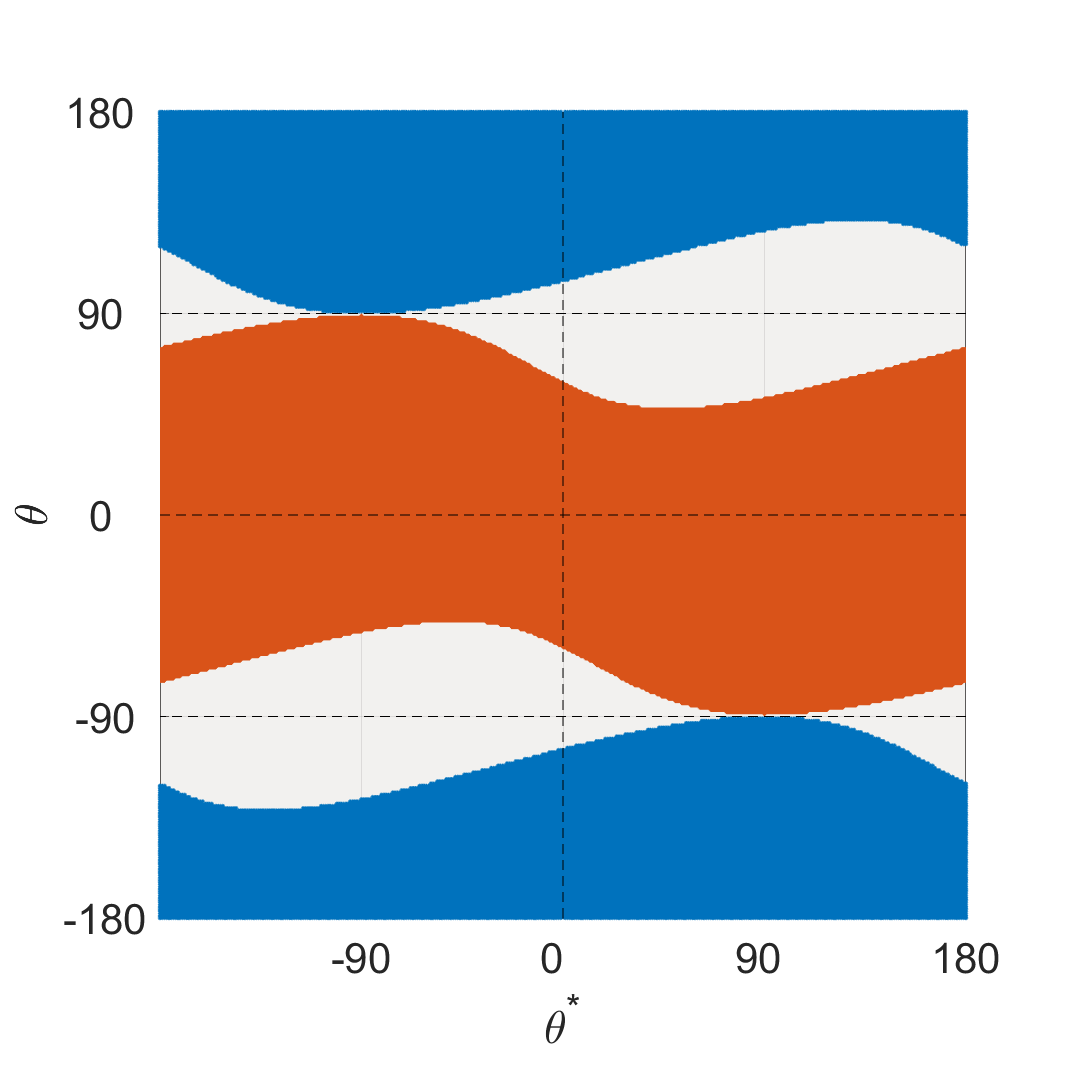}
\end{tabular}
\end{tabular}
\vspace{-6mm}
\caption{Convex feedback motion prediction bound (colored patch) on the closed-loop unicycle motion trajectory (black line) under forward (left, red) and backward (middle, blue) dual-headway unicycle control. (Right) The disjoint quotient space of the forward and backward dual-headway control domains, $\fwddomain_{\goalpos, \goalori}$ (red) and $\bwddomain_{\goalpos, \goalori}$ (blue), with $\pos \sim \scalebox{0.8}{$\begin{bmatrix}0 \\ 0 \end{bmatrix}$}$ and  $\goalpos \sim \scalebox{0.8}{$\begin{bmatrix}1 \\ 0 \end{bmatrix}$}$.   
}
\label{fig.dual_headway_motion_prediction}
\vspace{-3mm}
\end{figure}

\subsection{Forward and Backward Unicycle Motion Primitives}

As observed in \refprop{prop.forward_motion_in_finite_time}, dual-headway unicycle control might yield backward motion for some finite time before starting to approach the goal position forward. 
This initial backward motion often complicates the estimation of the spatial region needed to execute the motion, for example, as a regional bound on the closed-loop unicycle motion as feedback motion prediction \cite{isleyen_vandewouw_arslan_CDC2023}. 
To avoid such intricacies for motion planning, we consider restricting the domain of the dual-headway unicycle controller to persistently perform forward-approaching forward motion and backward-approaching backward motion with associated feedback motion predictions, as illustrated in \reffig{fig.dual_headway_motion_prediction} and \mbox{\reffig{fig.dual_headway_control_positive_inclusion}}.

\subsubsection{Forward Dual-Headway Unicycle Controller}

Based on the geometric properties of dual-headway unicycle pose control in \refsec{sec.dual_headway_unicycle_pose_control}, we define the forward dual-headway unicycle controller $\fwdctrl_{\goalpos, \goalori}=(\fwdlinvel_{\goalpos, \goalori}, \fwdangvel_{\goalpos, \goalori})$ as  in \refeq{eq.dual_headway_control} as
\begin{subequations}\label{eq.forward_dual_headway_unicycle_control}
\begin{align}
\fwdlinvel_{\goalpos, \goalori}(\pos, \ori) & := \tfrac{-\refcoef\tr{\plist{\headpos - \goaltailpos}} \ovectsmall{\ori}}{1 + \headcoef\tfrac{\tr{\pos - \goalpos}}{\norm{\pos - \goalpos}} \ovectsmall{\ori}},
\\
\fwdangvel_{\goalpos, \goalori}(\pos, \ori) &:= \tfrac{-\refcoef\tr{\plist{\headpos - \goaltailpos}} \nvectsmall{\ori}}{ \headcoef\norm{\pos - \goalpos}},
\end{align}
\end{subequations}
for any unicycle pose $(\pos, \ori)$ in the forward motion control domain  $\fwddomain_{\goalpos, \goalori}$ that is defined as
{\footnotesize
\begin{align}\label{eq.forward_unicycle_control_domain}
\fwddomain_{\goalpos, \goalori} \!\!:=\! \!\clist{\!(\pos, \ori)\! \!\in\! \R^2 \!\!\times\!\! [-\pi, \pi) \Big |  \tfrac{\tr{(\goaltailpos\! -\! \headpos)\!}\!}{\norm{\goaltailpos \!-\! \headpos}}\! \ovectsmall{\ori} \!\!\geq\! 0,\! \tfrac{\tr{(\goaltailpos \!-\! \headpos)\!}\!}{\norm{\goaltailpos \!-\! \headpos}}\! \ovectsmall{\goalori\!} \!\!>\!\! -1\! }
\end{align} 
}%
where the headway point \mbox{$\headpos = \pos \!+\! \headcoef \norm{\pos\! -\!\goalpos}\!\ovectsmall{\ori}$} of the current unicycle pose $(\pos, \ori)$ and the tailway point $\goaltailpos = \goalpos \!-\! \goaltailpos \norm{\pos \!-\! \goalpos} \! \ovectsmall{\goalori\!}$ of the goal pose $(\goalpos, \goalori)$ are defined as in \refeq{eq.headway_tailway_points} and $\headcoef, \goaltailcoef, \refcoef$ are positive headway, tailway, reference coefficients with $2\headcoef + \goaltailcoef < 1$.
Note that the first condition in \refeq{eq.forward_unicycle_control_domain} captures forward motion with positive velocity, and the second condition ensures forward alignment at the goal position, both of which increases under forward dual-headway control in \refeq{eq.forward_dual_headway_unicycle_control} (see \refprop{prop.persistent_forward_motion} and \refprop{prop.goal_alignment_under_forward_motion}).

Since the forward dual-headway unicycle controller decreases the distance to the goal (\refprop{prop.distance_to_goal_decay}), by leveraging feedback linearized unicycle dynamics, we determine a simple but accurate convex motion bound on the closed-loop unicycle trajectory that can be used for safety verification.

\begin{proposition} \label{prop.forward_unicycle_motion_prediction}
\emph{(Forward Unicycle Motion Prediction)}
Starting from any initial unicycle state $(\pos_0, \ori_0) \in \fwddomain_{\goalpos, \goalori}$ towards any goal pose $(\goalpos, \goalori)$, the closed-loop unicycle pose trajectory $(\pos(t), \ori(t))$ under the forward dual-headway unicycle controller  in \refeq{eq.forward_dual_headway_unicycle_control} is bounded for any  $t' \geq t $ as 
\begin{align*}
\pos(t') &\in \conv\plist{\pos(t), \headpos(t), \goaltailpos(t), \goalpos},
\\
\pos(t') &\in \ball(\goalpos, \norm{\pos(t) - \goalpos}),
\end{align*}  
where the convex trajectory bounds shrink over time as
{\small
\begin{align*}
\conv\plist{\pos(t'), \headpos(t'), \goaltailpos(t'), \goalpos} &\subseteq \conv\plist{\pos(t), \headpos(t), \goaltailpos(t), \goalpos}, 
\\
\ball(\goalpos, \norm{\pos(t') - \goalpos}) &\subseteq \ball(\goalpos, \norm{\pos(t) - \goalpos}).
\end{align*}
}%
\end{proposition}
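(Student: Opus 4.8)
The plan is to establish both bounds by exploiting the \emph{linear} structure of the closed-loop dynamics as seen through the headway point, together with the monotonicity results already proved. As a preliminary step, I would check that the forward control domain $\fwddomain_{\goalpos, \goalori}$ in \refeq{eq.forward_unicycle_control_domain} is positively invariant along the closed loop of \refeq{eq.forward_dual_headway_unicycle_control}: its first condition, $\tr{(\goaltailpos - \headpos)}\ovectsmall{\ori} \geq 0$, is preserved by the persistent‑forward‑motion property (\refprop{prop.persistent_forward_motion}), and its second condition, $\tfrac{\tr{(\goaltailpos - \headpos)}}{\norm{\goaltailpos - \headpos}}\ovectsmall{\goalori} > -1$, is preserved because this quantity has strictly positive time derivative whenever it is not $\pm 1$ (\refprop{prop.goal_alignment_under_forward_motion}), so a comparison‑lemma argument keeps it above $-1$. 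Since $\norm{\pos - \goalpos}$ is nonincreasing under forward motion (\refprop{prop.distance_to_goal_decay}), the trajectory stays bounded; and because $\linvel_{\goalpos, \goalori}$ is bounded by a constant multiple of $\norm{\pos - \goalpos}$ (its numerator scaling with $\norm{\headpos - \goaltailpos} = O(\norm{\pos - \goalpos})$ and its denominator staying $\geq 1 - \headcoef$), the distance $\norm{\pos - \goalpos}$ decays at most exponentially and never reaches $0$ in finite time, so the closed‑loop solution exists for all $t' \geq t$ with $\pos(t') \neq \goalpos$. The ball bound $\pos(t') \in \ball(\goalpos, \norm{\pos(t) - \goalpos})$ and its nesting $\ball(\goalpos, \norm{\pos(t') - \goalpos}) \subseteq \ball(\goalpos, \norm{\pos(t) - \goalpos})$ are then just restatements of this monotonicity.

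For the convex‑hull bound I would fix a reference time $t$, set $\mathcal{C} := \conv(\pos(t), \headpos(t), \goaltailpos(t), \goalpos)$, and argue in three nested layers. First, since $\goaltailpos(s) = \goalpos - \goaltailcoef \norm{\pos(s) - \goalpos}\ovectsmall{\goalori}$ with $0 \leq \norm{\pos(s) - \goalpos} \leq \norm{\pos(t) - \goalpos}$ for $s \geq t$, the tailway point $\goaltailpos(s)$ is a convex combination of $\goalpos$ and $\goaltailpos(t)$, hence $\goaltailpos(s) \in \mathcal{C}$. Second, the headway point obeys the linear reference dynamics $\headposdot = -\refcoef(\headpos - \goaltailpos)$ of \refeq{eq.headway_reference_dynamics}, so variation of constants gives $\headpos(t') = e^{-\refcoef(t'-t)}\headpos(t) + \refcoef \int_t^{t'} e^{-\refcoef(t'-s)} \goaltailpos(s)\,\diff s$, a combination whose weights are nonnegative and integrate to $1$; since $\headpos(t) \in \mathcal{C}$ and $\goaltailpos(s) \in \mathcal{C}$ for all $s \in [t,t']$, it follows that $\headpos(t') \in \mathcal{C}$. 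Third, substituting the identity $\ovectsmall{\ori} = (\headpos - \pos)/(\headcoef \norm{\pos - \goalpos})$ into $\posdot = \linvel_{\goalpos,\goalori}\ovectsmall{\ori}$ recasts the position dynamics as the linear time‑varying ODE $\posdot = \lambda(t)\,(\headpos(t) - \pos(t))$ with $\lambda(t) := \linvel_{\goalpos,\goalori}(\pos,\ori)/(\headcoef \norm{\pos - \goalpos}) \geq 0$, finite by the no‑finite‑time‑convergence observation above; the same variation‑of‑constants identity makes $\pos(t')$ a convex combination of $\pos(t)$ and $\{\headpos(s) : s \in [t,t']\} \subseteq \mathcal{C}$, so $\pos(t') \in \mathcal{C}$, which is the first claimed inclusion.

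The shrinking of the convex bounds then follows by applying these three layers with a later reference time in place of $t$: for $t'' \geq t' \geq t$ they give $\pos(t''), \headpos(t''), \goaltailpos(t'') \in \conv(\pos(t'), \headpos(t'), \goaltailpos(t'), \goalpos)$, and since $\goalpos$ also lies in this set, its convex hull with $\pos(t''), \headpos(t''), \goaltailpos(t'')$ is contained in it, giving the stated nesting; the ball nesting is again immediate from the monotonicity of $\norm{\pos(\cdot) - \goalpos}$.

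I expect the main obstacle to be the careful handling of well‑posedness and boundary cases rather than the convex‑combination identities themselves: one must confirm that $\lambda(t)$ stays finite and integrable (equivalently, that the trajectory never reaches $\goalpos$ in finite time) and that domain invariance survives the degenerate configurations where $\linvel_{\goalpos,\goalori} = 0$ momentarily or $\tfrac{\tr{(\goaltailpos - \headpos)}}{\norm{\goaltailpos - \headpos}}\ovectsmall{\goalori} = 1$. Both are controlled by the strict‑inequality versions of \refprop{prop.persistent_forward_motion} and \refprop{prop.goal_alignment_under_forward_motion} via the comparison lemma, but they must be invoked with some care to make the positive‑invariance argument airtight.
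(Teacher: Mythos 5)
Your proposal is correct and follows the same layered decomposition as the paper's proof --- first confining the tailway point to the segment toward the goal via the distance decay of \refprop{prop.distance_to_goal_decay}, then bounding the headway point, then the position, and obtaining the ball bound and the nesting from monotonicity of $\norm{\pos - \goalpos}$ --- but it differs in the mechanism used to certify the two convex-hull invariance steps. The paper invokes Nagumo's subtangentiality condition: the headway velocity $\headposdot = -\refcoef(\headpos - \goaltailpos)$ and the unicycle velocity $\posdot$ (which points toward $\headpos$ under forward motion) are tangent to or point into the candidate convex sets, so those sets are positively invariant. You instead write out variation-of-constants formulas for the linear reference dynamics and for $\posdot = \lambda(t)(\headpos - \pos)$ with $\lambda \geq 0$, exhibiting $\headpos(t')$ and $\pos(t')$ explicitly as convex (integral) combinations of earlier points already known to lie in the hull. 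The two routes prove the same inclusions; yours is more elementary and self-contained (no appeal to viability theory) and forces you to verify that $\lambda$ is finite and the solution exists for all time, which you do correctly by bounding $\linvel_{\goalpos,\goalori}$ by a constant multiple of $\norm{\pos - \goalpos}$ with denominator at least $1 - \headcoef$. Your preliminary positive-invariance check of $\fwddomain_{\goalpos, \goalori}$ via \refprop{prop.persistent_forward_motion} and \refprop{prop.goal_alignment_under_forward_motion} is also a welcome addition: the paper uses these propositions but leaves the invariance of the domain itself implicit, whereas your argument makes explicit why forward motion (and hence $\lambda \geq 0$) persists along the whole trajectory.
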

\begin{proof}
See \refapp{app.forward_unicycle_motion_prediction}.
\end{proof}

\begin{figure}[t]
\centering
\vspace{3mm}
\begin{tabular}{@{}c@{\hspace{0.01\columnwidth}}c@{}}
\includegraphics[width = 0.49\columnwidth]{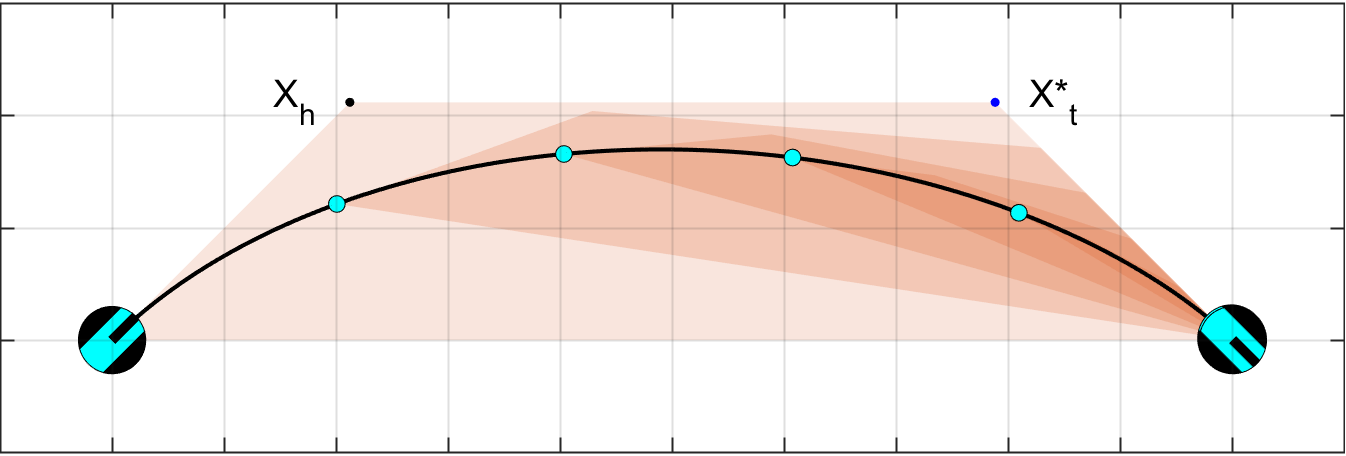} &
\includegraphics[width = 0.49\columnwidth]{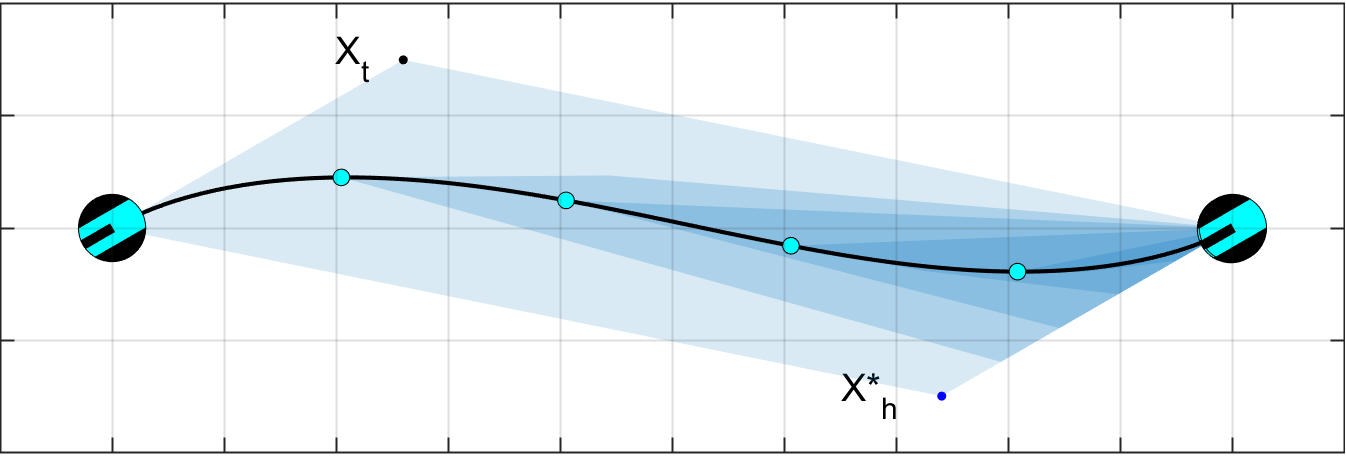}
\\[0mm]
\includegraphics[width = 0.49\columnwidth]{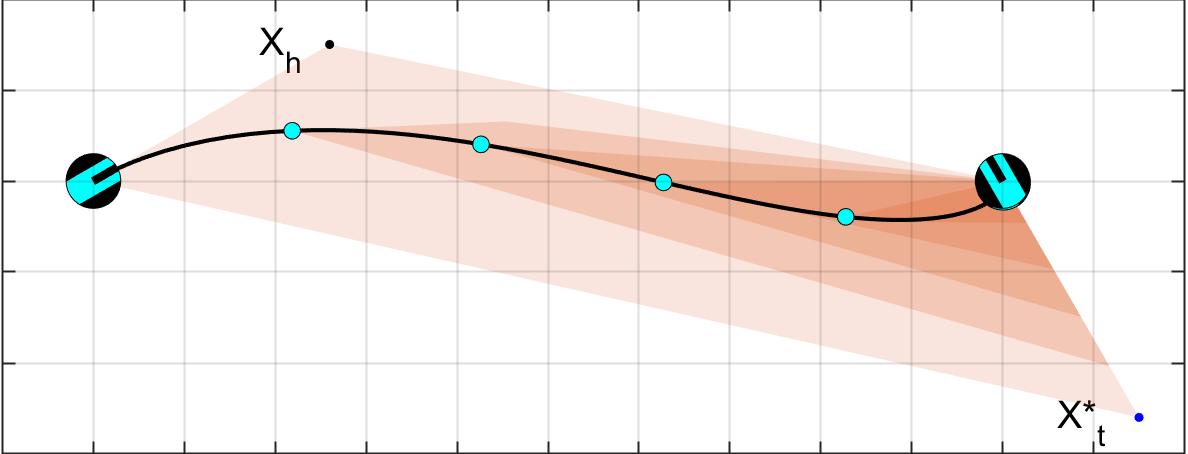} &
\includegraphics[width = 0.49\columnwidth]{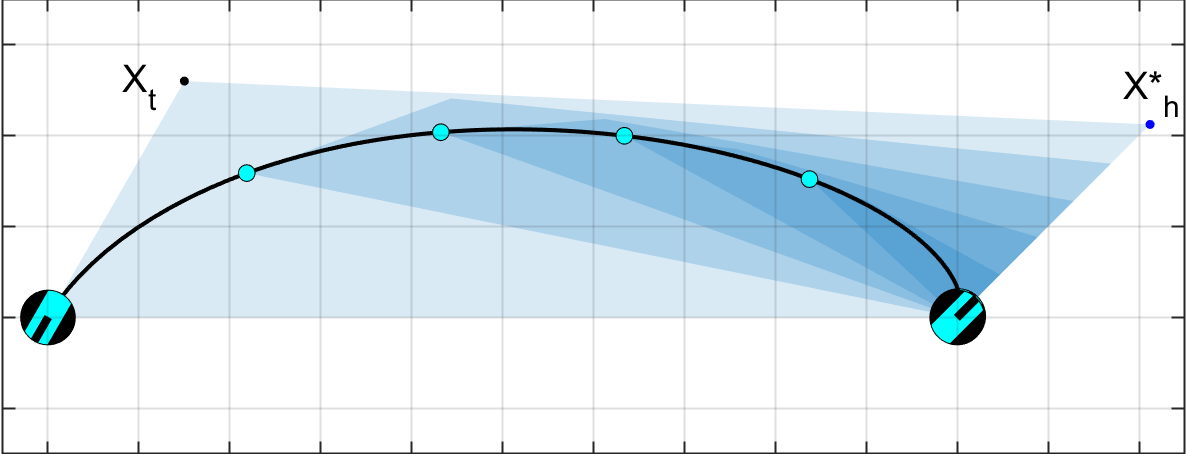}
\\[-1mm]
\end{tabular}
\vspace{-2mm}
\caption{
Positive inclusion of convex feedback motion prediction for (left) forward  and (right) backward dual-headway unicycle control.
}
\label{fig.dual_headway_control_positive_inclusion}
\vspace{-3mm}
\end{figure}

\subsubsection{Backward Dual-Headway Unicycle Controller}     
\label{sec.backward_dual_headway_unicycle_control}

Similar to the construction of dual-headway unicycle motion control in \refsec{sec.dual_headway_unicycle_pose_control}, using the tailway point $\tailpos$ of the current unicycle pose $(\pos, \ori)$ and the headway point $\goalheadpos$ of the goal pose $(\goalpos, \goalori)$ that are defined as%
\footnote{Note that the tailway point $\tailpos$ and the headway point $\goalheadpos$ under the unicycle dynamics in \refeq{eq.UnicycleDynamics} continuously evolve as
\begin{align*}
\tailposdot & = \plist{1 - \tailcoef \tfrac{\tr{(\pos - \goalpos)}}{\norm{\pos - \goalpos}}\ovectsmall{\ori}} \ovectsmall{\ori} \linvel  - \tailcoef \norm{\pos - \goalpos} \nvectsmall{\ori} \angvel, 
\\
\goalheadposdot & = \goalheadcoef  \tfrac{\tr{(\pos - \goalpos)}}{\norm{\pos - \goalpos}}\ovectsmall{\ori} \ovectsmall{\goalori} \linvel.
\end{align*}
}
\begin{subequations}
\begin{align}
\tailpos &:= \pos - \tailcoef \norm{\pos - \goalpos}\ovectsmall{\ori}, 
\\
\goalheadpos&:= \goalpos + \goalheadcoef \norm{\pos - \goalpos}\ovectsmall{\goalori},
\end{align}
\end{subequations}
and the tailway reference dynamics
\begin{align}
\tailposdot = -\refcoef\plist{\tailpos - \goalheadpos}
\end{align}
we design the backward dual-headway unicycle controller, denoted by $\bwdctrl_{\goalpos, \goalori}(\pos, \ori) = (\bwdlinvel_{\goalpos, \goalori}(\pos,\ori), \bwdangvel_{\goalpos, \goalori}(\pos, \ori))$, as 
\begin{subequations}\label{eq.backward_dual_headway_unicycle_control}%
\begin{align}
\overleftarrow{\linvel}_{\goalpos, \goalori}(\pos, \ori) & :=\frac{-\refcoef\tr{\plist{\tailpos - \goalheadpos}}\ovectsmall{\ori}}{1 - \tailcoef\tfrac{\tr{(\pos - \goalpos)}}{\norm{\pos - \goalpos}}\ovectsmall{\ori}},
\\
\overleftarrow{\angvel}_{\goalpos, \goalori}(\pos, \ori) & := \frac{\refcoef\tr{\plist{\tailpos - \goalheadpos}}\nvectsmall{\ori}}{\tailcoef \norm{\pos - \goalpos}},
\end{align}
\end{subequations}
for any unicycle pose $(\pos, \ori)$ in the the backward motion domain $\bwddomain_{\goalpos, \goalori}$ that is defined to be
{\footnotesize
\begin{align}\label{eq.backward_unicycle_control_domain}
\bwddomain_{\goalpos, \goalori} \!\!:=\!\! \clist{\!(\pos, \ori) \!\!\in \!\R^2 \!\!\times\! \![-\pi, \pi) \Big |  \tfrac{\tr{(\goalheadpos \!-\! \tailpos)\!}\!}{\norm{\goalheadpos \!-\! \tailpos}} \!\ovectsmall{\ori} \!\!\leq\! 0,\! \tfrac{\tr{(\goalheadpos - \tailpos)\!}\!}{\norm{\goalheadpos \!-\! \tailpos}}\! \ovectsmall{\goalori\!} \!\!<\!\! +1 \!}
\end{align} 
}%
where $\tailcoef, \goalheadcoef, \refcoef >0$ are constant positive tailway, headway, and reference coefficients, respectively, with $2 \tailcoef + \goalheadcoef < 1$.
The first condition in \refeq{eq.backward_unicycle_control_domain} corresponds to the backward motion with negative linear velocity, and the second condition guarantees backward alignment at the goal position, both of which are decreasing under the backward dual-headway unicycle controller in \refeq{eq.backward_dual_headway_unicycle_control} (due to its similarity with forward motion control in  \refprop{prop.persistent_forward_motion} and \refprop{prop.goal_alignment_under_forward_motion}).
Hence, due to their symmetry, the forward and backward dual-headway motion control share similar geometric properties.

\begin{proposition}\label{eq.backward_unicycle_motion_prediction}
\emph{(Backward Unicycle Motion Prediction)}
For any goal pose $(\goalpos, \goalori)$ and $2\tailcoef+\goalheadcoef < 1$, the backward dual-headway unicycle controller in \refeq{eq.backward_dual_headway_unicycle_control} ensures that the closed-loop unicycle pose trajectory $(\pos(t), \ori(t))$, starting from any $(\pos(0), \ori(0)) \in \bwddomain_{\goalpos, \goalori}$, is bounded for any $t' \geq t \geq 0$ as 
\begin{align*}
&\pos(t') \in \conv\plist{\pos(t), \tailpos(t), \goalheadpos(t), \goalpos},  
\\
&\pos(t') \in \ball(\goalpos, \norm{\pos(t) - \goalpos}),
\end{align*}
where the trajectory bounds are positively inclusive, i.e.,
{\small
\begin{align*}
\conv\plist{\pos(t'), \tailpos(t'), \goalheadpos(t'), \goalpos} &\subseteq \conv\plist{\pos(t), \tailpos(t), \goalheadpos(t), \goalpos},\\
\ball(\goalpos, \norm{\pos(t') - \goalpos}) &\subseteq \ball(\goalpos, \norm{\pos(t) - \goalpos}).
\end{align*}
}%
\end{proposition}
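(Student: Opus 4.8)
The plan is to mirror the proof of \refprop{prop.forward_unicycle_motion_prediction} using the symmetry between forward and backward dual-headway control. First I would observe that the backward controller in \refeq{eq.backward_dual_headway_unicycle_control} is constructed so that the tailway point $\tailpos$ of the current pose satisfies the feedback-linearized reference dynamics $\tailposdot = -\refcoef(\tailpos - \goalheadpos)$, exactly analogous to \refeq{eq.headway_reference_dynamics}. Consequently $\tailpos(t')$ moves along the straight segment from $\tailpos(t)$ toward $\goalheadpos(t')$, and a backward analogue of \reflem{lem.headway_tailway_distance} (valid because $2\tailcoef + \goalheadcoef < 1$ implies $\tailcoef + \goalheadcoef < 1$) gives $\norm{\pos - \goalpos} \leq \tfrac{1}{1-\tailcoef-\goalheadcoef}\norm{\tailpos - \goalheadpos}$, so the tailway--headway distance controls the distance to goal. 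A backward analogue of \refprop{prop.distance_to_goal_decay} (the ordering hypothesis $\goalheadcoef \leq \tailcoef$ is implied by $2\tailcoef+\goalheadcoef<1$ only together with $\goalheadcoef \le \tailcoef$, so I would invoke the symmetric version of that proposition, which holds under the stated domain and coefficient conditions for backward motion) then yields $\tfrac{\diff}{\diff t}\norm{\pos - \goalpos}^2 < 0$ whenever $\bwdlinvel_{\goalpos,\goalori}(\pos,\ori) < 0$ and $\pos \neq \goalpos$. This immediately gives the ball bound $\pos(t') \in \ball(\goalpos, \norm{\pos(t)-\goalpos})$ and its positive inclusion $\ball(\goalpos,\norm{\pos(t')-\goalpos}) \subseteq \ball(\goalpos,\norm{\pos(t)-\goalpos})$ for $t' \geq t$.

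Next I would establish the convex-hull bound. The key geometric fact is that, under the backward controller restricted to $\bwddomain_{\goalpos,\goalori}$, the three auxiliary points move inward: $\goalpos$ is fixed; $\tailpos(t')$ travels on the segment toward $\goalheadpos(t')$ by the reference dynamics; $\goalheadpos$ lies on the ray from $\goalpos$ through $\goalpos + \ovectsmall{\goalori}$ at distance $\goalheadcoef\norm{\pos-\goalpos}$, which shrinks since $\norm{\pos-\goalpos}$ is decreasing, so $\goalheadpos(t')$ slides toward $\goalpos$ along that ray; and $\pos(t')$ itself lies in $\ball(\goalpos,\norm{\pos(t)-\goalpos})$. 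To show $\pos(t') \in \conv(\pos(t),\tailpos(t),\goalheadpos(t),\goalpos)$ for all $t' \geq t$, I would argue exactly as in the forward case: the instantaneous velocity $\posdot(t')$ points into the current convex hull $\conv(\pos(t'),\tailpos(t'),\goalheadpos(t'),\goalpos)$ because $\posdot = \bwdlinvel_{\goalpos,\goalori}\ovectsmall{\ori}$ and, on the domain, the orientation vector $\ovectsmall{\ori}$ (scaled by the negative velocity) points from $\pos$ back toward the region spanned by $\tailpos$, $\goalheadpos$, $\goalpos$ — this is precisely what the domain conditions $\tfrac{\tr{(\goalheadpos - \tailpos)}}{\norm{\goalheadpos-\tailpos}}\ovectsmall{\ori} \leq 0$ and $\tfrac{\tr{(\goalheadpos-\tailpos)}}{\norm{\goalheadpos-\tailpos}}\ovectsmall{\goalori} < 1$ encode, and they are forward-invariant by the backward analogues of \refprop{prop.persistent_forward_motion} and \refprop{prop.goal_alignment_under_forward_motion}. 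Combining forward invariance of the domain with the shrinking of each vertex gives the nested inclusion $\conv(\pos(t'),\tailpos(t'),\goalheadpos(t'),\goalpos) \subseteq \conv(\pos(t),\tailpos(t),\goalheadpos(t),\goalpos)$, and chaining this over $[t,t']$ yields $\pos(t') \in \conv(\pos(t),\tailpos(t),\goalheadpos(t),\goalpos)$.

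Concretely the argument would be organized as: (i) verify the backward reference dynamics and the segment motion of $\tailpos$; (ii) state and invoke the backward versions of \reflem{lem.headway_tailway_distance}, \refprop{prop.persistent_forward_motion}, \refprop{prop.distance_to_goal_decay}, and \refprop{prop.goal_alignment_under_forward_motion} (each obtained from the forward statements by the reflection $\ori \mapsto \ori + \pi$, $\headcoef \leftrightarrow \tailcoef$, $\goaltailcoef \leftrightarrow \goalheadcoef$, which maps $\fwddomain$ to $\bwddomain$ and $\fwdctrl$ to $\bwdctrl$); (iii) deduce forward invariance of $\bwddomain_{\goalpos,\goalori}$ and monotone decrease of $\norm{\pos-\goalpos}$; (iv) deduce the ball bounds and their nesting; (v) show each of the four defining points of the convex hull moves into the current hull (or stays fixed), hence the hull is positively invariant and shrinking; (vi) conclude the stated inclusions for all $t' \geq t$.

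The main obstacle I anticipate is item (v): carefully verifying that $\posdot(t')$ — and the drift of $\tailpos$ and $\goalheadpos$ — all point inward relative to the time-varying quadrilateral, since the hull's vertices are themselves moving. The clean way around this is to fix the outer hull $\mathcal{H} := \conv(\pos(t),\tailpos(t),\goalheadpos(t),\goalpos)$ at time $t$ and show directly that $\mathcal{H}$ is forward-invariant for the closed-loop flow: it suffices to check that on each bounding facet of $\mathcal{H}$ the velocity $\posdot$ has nonpositive component along the outward normal, using the domain inequalities together with the fact that $\tailpos$, $\goalheadpos$ remain inside $\mathcal{H}$ (which follows from the segment motion toward $\goalheadpos$, the inward sliding of $\goalheadpos$, and Nagumo's theorem / the subtangentiality condition). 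Everything else is a routine transcription of the forward proof via the stated symmetry, so the write-up should largely read "the claim follows by applying \refprop{prop.forward_unicycle_motion_prediction} to the reflected system."
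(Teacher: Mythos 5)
Your proposal is correct and follows essentially the same route as the paper, which proves this proposition by noting it follows from the same argument as \refprop{prop.forward_unicycle_motion_prediction} under the forward/backward symmetry; your write-up simply makes explicit the ingredients the paper reuses (the reference dynamics for the auxiliary point, the backward analogues of \refprop{prop.persistent_forward_motion} and \refprop{prop.distance_to_goal_decay} for the shrinking ball, and Nagumo's subtangentiality condition for positive invariance of the convex hull). The only caveat, which is inherited from the paper's own forward proof rather than introduced by you, is that the coefficient ordering needed for the distance-to-goal decay is not literally implied by $2\tailcoef+\goalheadcoef<1$ alone, as you correctly flag.
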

\begin{proof}
The result follows from a similar argument as in the proof of \refprop{prop.forward_unicycle_motion_prediction} in \refapp{app.forward_unicycle_motion_prediction}.
\end{proof}     


\section{Optimal Unicycle Feedback Motion Planning}
\label{sec.optimal_unicycle_feedback_motion_planning}

In this section, we present an example application of dual-headway unicycle pose control for optimal sampling-based feedback motion planning around obstacles using rapidly exploring random trees (RRTs) \cite{lavalle_kuffner_IJRR2001, karaman_frazzoli_IJRR2011}. 
For ease of exposition, we consider a unicycle robot with a circular robot body shape with radius of $\radius$ moving in a planar environment $\workspace \subseteq \R^{2}$ with known obstacles $\obstspace$ where the space of collision-free robot position is given by
\begin{align*}
\freespace : = \clist{\pos \in \workspace\big | \ball(\pos, \radius) \cap \obstspace = \varnothing}
\end{align*}  
where $\ball(\pos, \radius):= \clist{\vect{y} \in \R^{2} \big | \norm{\vect{y} - \pos} \leq \radius}$ denotes the 2D Euclidean ball centered at $\pos \in \R^{2}$ with radius $\radius > 0$.

\subsection{Elements of Sampling Based Planning for Unicycles}
\label{sec.elements_of_sampling_based_planning}

To perform optimal sampling-based unicycle motion planning, we outline essential components below, including sampling, distance, neighborhood, projection, local cost, and safety verification of unicycle poses and their connections.

\subsubsection{Random Sampling of Unicycle Poses}

Uniform sampling of collision-free unicycle poses can be performed using rejection sampling (i.e., repeatedly drawing sample points from a box-shaped subset of $\R^2$ containing $\freespace$ until a collision-free position in $\freespace$ is found) combined with uniform orientation sampling over $[-\pi, \pi)$. 
We denote such a uniform unicycle pose sampler by $\unisample(\freespace \times [-\pi, \pi)) \mapsto (\pos, \ori)$, which returns independently and identically distributed unicycle pose samples $(\pos, \ori)$ from $\freespace \times [-\pi, \pi)$.
For goal-biased sampling toward a given global goal pose $(\goalpos, \goalori) \in \freespace \times [-\pi, \pi]$, denoted by $\unisample(\freespace \times [-\pi, \pi), (\goalpos, \goalori))$, we generate a sample unicycle pose exactly at the goal pose with a fixed ( small) probability $p^* \in [0,1]$, and perform uniform unicycle pose sampling $\unisample(\freespace \times [-\pi, \pi))$ otherwise (with probability $(1 - p^*)$), until establishing a connection to the goal pose.  
When enabled, in informed sampling, we also reject sample unicycle poses whose travel cost, via their nearest neighbor to the start pose, exceeds the travel cost from the start pose to the goal pose, which becomes effective once a path from the start to the goal is found.

\subsubsection{Distance of Unicycle Poses}
The distance, denoted by $\unidist((\pos, \ori), (\hat{\pos}, \hat{\ori}))$, between two unicycle poses $(\pos, \ori)$ and $(\hat{\pos}, \hat{\ori})$ in $\mathbb{R}^{2} \times [-\pi, \pi)$ is a nonnegative symmetric function that quantifies the dissimilarity between the two unicycle poses, see \reftab{tab.unidist}.%
\footnote{Note that $\unidist$ does not need to define a true metric in $\mathbb{R}^2 \times [-\pi, \pi)$, which must also be diminish to zero for identical unicycle poses and satisfy the triangle inequality.}
A common choice of unicycle distance is an additive weighted combination of unicycle translation and orientation distances \cite{palmieri_arras_IROS2014}, defined as follows: 
\begin{align*}
\unidist := \alpha \, \unidist_{\mathrm{translate}} + \beta \, \unidist_{\mathrm{orient}}  
\end{align*}
where $\alpha \geq 0$ and $\beta \geq 0$ are nonnegative weights for the unicycle translation and orientation distances.
For example, a standard choice for unicycle translation distance is the Euclidean distance, i.e.,
\begin{align*}
\unidist_{\euclidean}((\pos, \ori), (\hat{\pos}, \hat{\ori})) := \norm{\pos - \hat{\pos}},
\end{align*}
while a standard choice for unicycle orientation distance is the cosine distance,\footnote{Note that the cosine distance, $1 - \tr{\ovectsmall{\ori}}\ovectsmall{\hat{\ori}}$, is not a metric on the unit circle, but one can easily define a geodesic distance as $\left | \mathrm{arctan2}\plist{\tr{\nvectsmall{\ori}}\ovectsmall{\hat{\ori}}, \tr{\ovectsmall{\ori}}\ovectsmall{\hat{\ori}}}\right|$ which measures the absolute angular different between $\ovectsmall{\ori}$ and $\ovectsmall{\hat{\ori}}$.} i.e.,
\begin{align*}
\unidist_{\cosine}((\pos, \ori), (\hat{\pos}, \hat{\ori})) := 1 - \tr{\ovectsmall{\ori}}\ovectsmall{\hat{\ori}} \in \blist{0,2}.
\end{align*}
We also find it useful to define another unicycle translation distance as the product of the Euclidean distance and the cosine distance, plus one, as follows:
\begin{align*}
\unidist_{\euclideancosine}((\pos, \ori), (\hat{\pos}, \hat{\ori})):= \norm{\pos - \hat{\pos}} \plist{\!2 \!- \!\scalebox{0.95}{$\tr{\ovectsmall{\ori}\!}\!\ovectsmall{\hat{\ori}}$}\!}
\end{align*}
which is tightly lower and upper bounded as 
\begin{align*}
\norm{\pos - \hat{\pos}} \leq \unidist_{\euclideancosine}((\pos, \ori), (\hat{\pos}, \hat{\ori})) \leq 3 \norm{\pos - \hat{\pos}}
\end{align*}
where the lower and upper bounds are realized when the poses are perfectly aligned or opposite.

\begin{table}[t]
\caption{Distance Between Unicycle Poses}
\label{tab.unidist}
\centering
\vspace{-3mm}
\begin{tabular}{@{}c@{\hspace{2mm}}c@{}}
\hline
\hline 
\\[-2mm]

Distance Type & $\unidist((\pos, \ori), (\hat{\pos}, \hat{\ori}))$
\\
\hline
\\[-2mm]
\begin{tabular}{@{}c@{}}
Euclidean  Translation
\end{tabular}
 & $\norm{\pos - \hat{\pos}}$
\\
\begin{tabular}{@{}c@{}}
Cosine  Orientation
\end{tabular}
& $1 - \tr{\ovectsmall{\ori}}\ovectsmall{\hat{\ori}}$ 
\\
\begin{tabular}{@{}c@{}}
Euclidean-Cosine \\ Translation
\end{tabular}
& $\norm{\pos - \hat{\pos}} \plist{2 - \tr{\ovectsmall{\ori}}\ovectsmall{\hat{\ori}}}$
\\
\begin{tabular}{@{}c@{}}
Dual-Headway \\  Translation
\end{tabular}
& \!\!\!\!\!\! \hspace{-4mm}$\norm{\pos \!-\! \hat{\pos}} \plist{\!2\kappa \!+\! \min\plist{\nlist{\!\tfrac{\pos - \hat{\pos}}{\norm{\pos - \hat{\pos}}} \! \pm\! \kappa \ovectsmall{\ori} \!\pm\! \kappa \ovectsmall{\hat{\ori}}\!}}\!\!}$
\\
\begin{tabular}{@{}c@{}}
Dual-Headway \\  Orientation
\end{tabular} 
& \!\!\!\! $\min\plist{\nlist{\tfrac{\pos - \hat{\pos}}{\norm{\pos - \hat{\pos}}} \pm \kappa \ovectsmall{\ori} \pm \kappa \ovectsmall{\hat{\ori}}}} \!-\! 1 \! + \!2\kappa$
\\
\begin{tabular}{@{}c@{}}
Additively Weighted \\ Translation \& Orientation 
\end{tabular}
&
$\alpha \, \unidist_{\mathrm{translate}} + \beta \, \unidist_{\mathrm{orient}}$
\\
\hline
\\[-2.5mm]
\hline
\end{tabular}
\vspace{-3mm}
\end{table}

Inspired from the dual-headway unicycle pose control in \refsec{sec.adaptive_dual_headway_unicycle_control}, we define the dual-headway translation distance as the minimum travel distance through headway and tailway points of unicycle poses as%
\footnote{Inspired by the convexity property of B\'ezier curves \cite{arslan_tiemessen_TRO2022}, we consider the headway and tailway points as the intermediate control points of a third-order Bézier curve joining the start and goal positions since B\'ezier curves are also bounded by the convex hull of their control points and the length of a B\'ezier curves is bound above by B\'ezier polygon length.}
\footnote{One can also consider the minimum distance between the headway and tailway points of unicycle poses as a unicycle pose distance measure as 
\begin{align*}
& \unidist_{\headtail}((\pos, \ori), (\hat{\pos}, \hat{\ori})) := \min\plist{
\begin{array}{@{}c@{}}
\norm{\head{\pos} - \tail{\hat{\pos}}}, \norm{\tail{\pos} - \head{\hat{\pos}}}
\end{array}
}  
\\
& \hspace{10mm} = \underbrace{\norm{\pos - \hat{\pos}}}_{\substack{\text{straight-line} \\ \text{distance}}} \underbrace{\min\plist{\nlist{\frac{\pos - \hat{\pos}}{\norm{\pos - \hat{\pos}}} \pm \kappa \ovectsmall{\ori} \pm \kappa \ovectsmall{\hat{\ori}} }}}_{\substack{\text{a measure of orientational mismatch}\\ \in [1 - 2\kappa, 1 + 2\kappa]}} 
\end{align*}
whose ratio to the Euclidean distance, respectively, defines the headway-tailway orientation distance as 
\begin{align*}
& \unidist_{\headtailori} (\!(\pos, \ori), (\hat{\pos}, \hat{\ori})\!) \! := \! \frac{\unidist_{\headtail}(\!(\pos, \ori), (\hat{\pos}, \hat{\ori})\!)}{\norm{\pos - \hat{\pos}}} \!-\! 1 \!+\! 2\kappa
\\
& \hspace{15mm} = \min\plist{\nlist{\frac{\pos - \hat{\pos}}{\norm{\pos - \hat{\pos}}} \pm \kappa \ovectsmall{\ori} \pm \kappa \ovectsmall{\hat{\ori}} }} -1 + 2\kappa
\\
& \hspace{15mm} = \unidist_{\dualheadori} (\!(\pos, \ori), (\hat{\pos}, \hat{\ori})\!)
\end{align*} 
which is exactly equivalent to the dual-headway orientation distance.
} 
{\small
\begin{align*}
&\unidist_{\dualhead}((\pos, \ori), (\hat{\pos}, \hat{\ori})) \\
&\hspace{10mm}  := \min\plist{
\begin{array}{@{}c@{}}
\norm{\pos \!-\! \head{\pos}} + \norm{\head{\pos} \!-\! \tail{\hat{\pos}}} + \norm{\tail{\hat{\pos}} \!-\! \hat{\pos}}, \\ 
\norm{\pos \!-\! \tail{\pos}} + \norm{\tail{\pos} \!-\! \head{\hat{\pos}}} + \norm{\head{\hat{\pos}} \!-\! \hat{\pos}} 
\end{array}
}  
\\
& \hspace{10mm} = \underbrace{\norm{\pos \!-\! \hat{\pos}}}_{\substack{\text{straight-line} \\ \text{distance}}} \underbrace{\min\plist{2 \headtailcoef + \nlist{\frac{\pos \!-\! \hat{\pos}}{\norm{\pos \!-\! \hat{\pos}}} \!\pm\! \headtailcoef \ovectsmall{\ori} \!\pm\! \headtailcoef \ovectsmall{\hat{\ori}} }}}_{\substack{\text{a measure of orientational mismatch}\\ \in [1, 1 + 4\headtailcoef]}} 
\end{align*}
}%
where the headway and tailway points of unicycle pose $(\pos, \ori)$ and $(\hat{\pos}, \hat{\ori})$, associated with a shared identical headway and tailway coefficient $\headtailcoef \in (0, \tfrac{1}{2})$, are defined as in \refeq{eq.headway_tailway_points} as
\begin{align}\label{eq.all_headway_tailway_points}
\begin{array}{@{}l@{}}
\head{\pos}\!:=\! \pos + \headtailcoef \norm{\pos - \hat{\pos}} \ovectsmall{\ori},
\quad
\head{\hat{\pos}}\!:=\! \hat{\pos} + \headtailcoef \norm{\pos - \hat{\pos}} \ovectsmall{\hat{\ori}},
\\
\tail{\pos}\!:=\! \pos - \headtailcoef \norm{\pos - \hat{\pos}}\ovectsmall{\ori},
\quad
\tail{\hat{\pos}}\!:=\! \hat{\pos} - \headtailcoef \norm{\pos - \hat{\pos}}\ovectsmall{\hat{\ori}}.
\end{array} \!\!\!\!
\end{align}
As highlighted above, the dual-headway unicycle distance behaves as the product of translation and orientation distances.
Accordingly, we define the dual-headway orientation distance as the ratio of the dual-headway translation distance to the Euclidean translation distance as
{\small
\begin{align*}
& \unidist_{\dualheadori} ((\pos, \ori), (\hat{\pos}, \hat{\ori}))  
\\
& \hspace{10mm}:= \frac{\unidist_{\dualhead}((\pos, \ori), (\hat{\pos}, \hat{\ori}))}{\norm{\pos - \hat{\pos}}} - 1
\\
& \hspace{10mm} = \min\underbrace{\plist{\nlist{\frac{\pos - \hat{\pos}}{\norm{\pos - \hat{\pos}}} \pm \kappa \ovectsmall{\ori} \pm \kappa \ovectsmall{\hat{\ori}} }}}_{\in \blist{1 - 2\headtailcoef, 1 + \headtailcoef}} - 1 + 2\headtailcoef
\end{align*} 
}%
for $\pos \neq \hat{\pos}$; and $2\kappa - \kappa \nlist{\ovectsmall{\ori} + \ovectsmall{\hat{\ori}}}$ otherwise.
Note that under both the forward and backward dual-headway unicycle control policies in \refsec{sec.adaptive_dual_headway_unicycle_control}, the Euclidean distance to the goal, the distance between headway and tail points, and consequently the dual-headway translation distance all decrease (see \refprop{prop.headway_tailway_distance_decay} and \refprop{prop.distance_to_goal_decay}).
Moreover, the dual-headway orientation distance is also better aligned with the actual total turning during closed-loop motion compared to the cosine orientation distance, as seen in \reffig{fig.dual_headway_total_turning}.


\begin{figure}[t]
\centering
\begin{tabular}{@{}c@{\hspace{1mm}}c@{\hspace{1mm}}c@{}}
\begin{tabular}{@{\hspace{-1mm}}c@{}}
\includegraphics[width = 0.355\columnwidth]{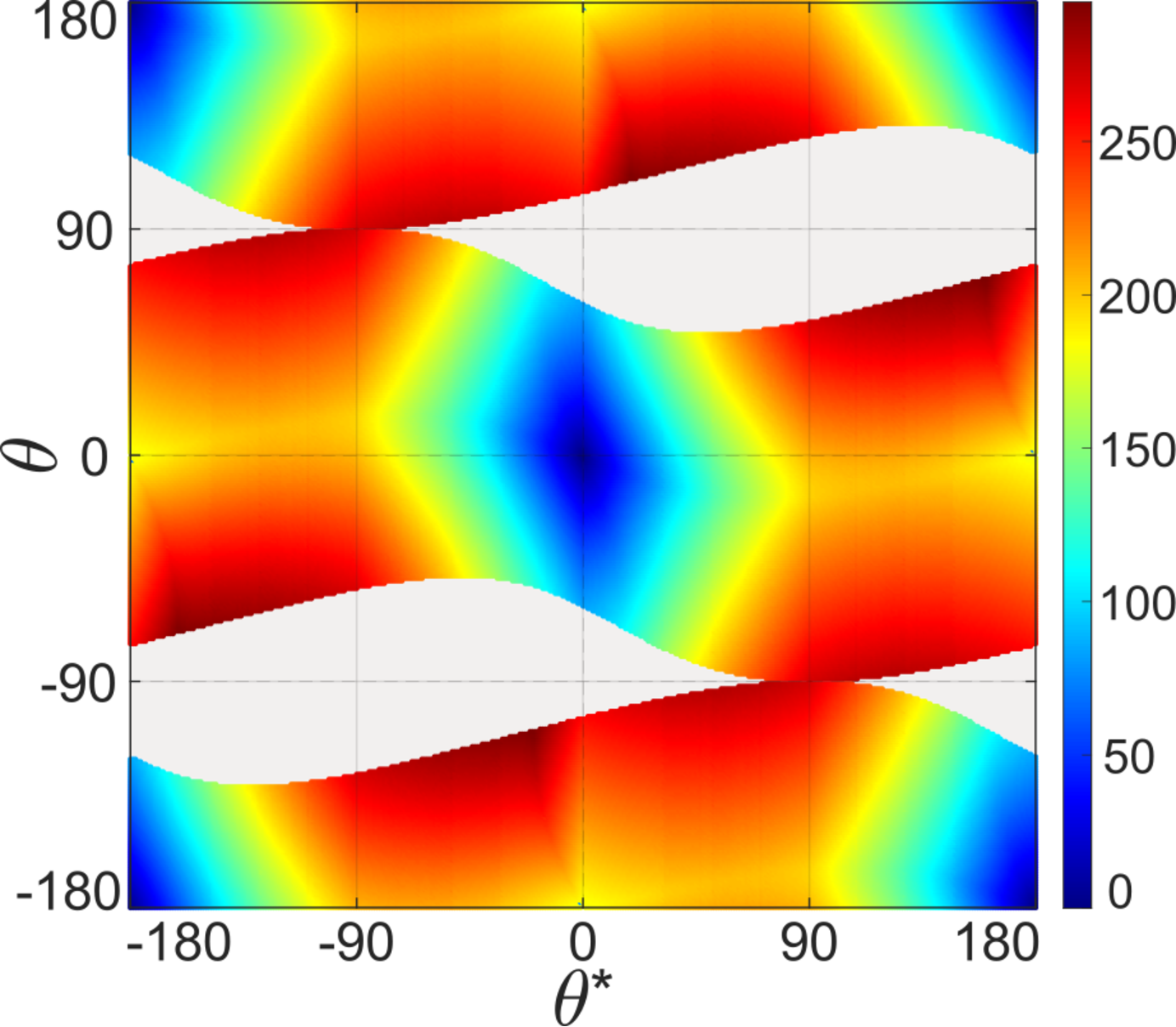} 
\end{tabular}
&
\begin{tabular}{@{}c@{}}
\includegraphics[width = 0.315\columnwidth]{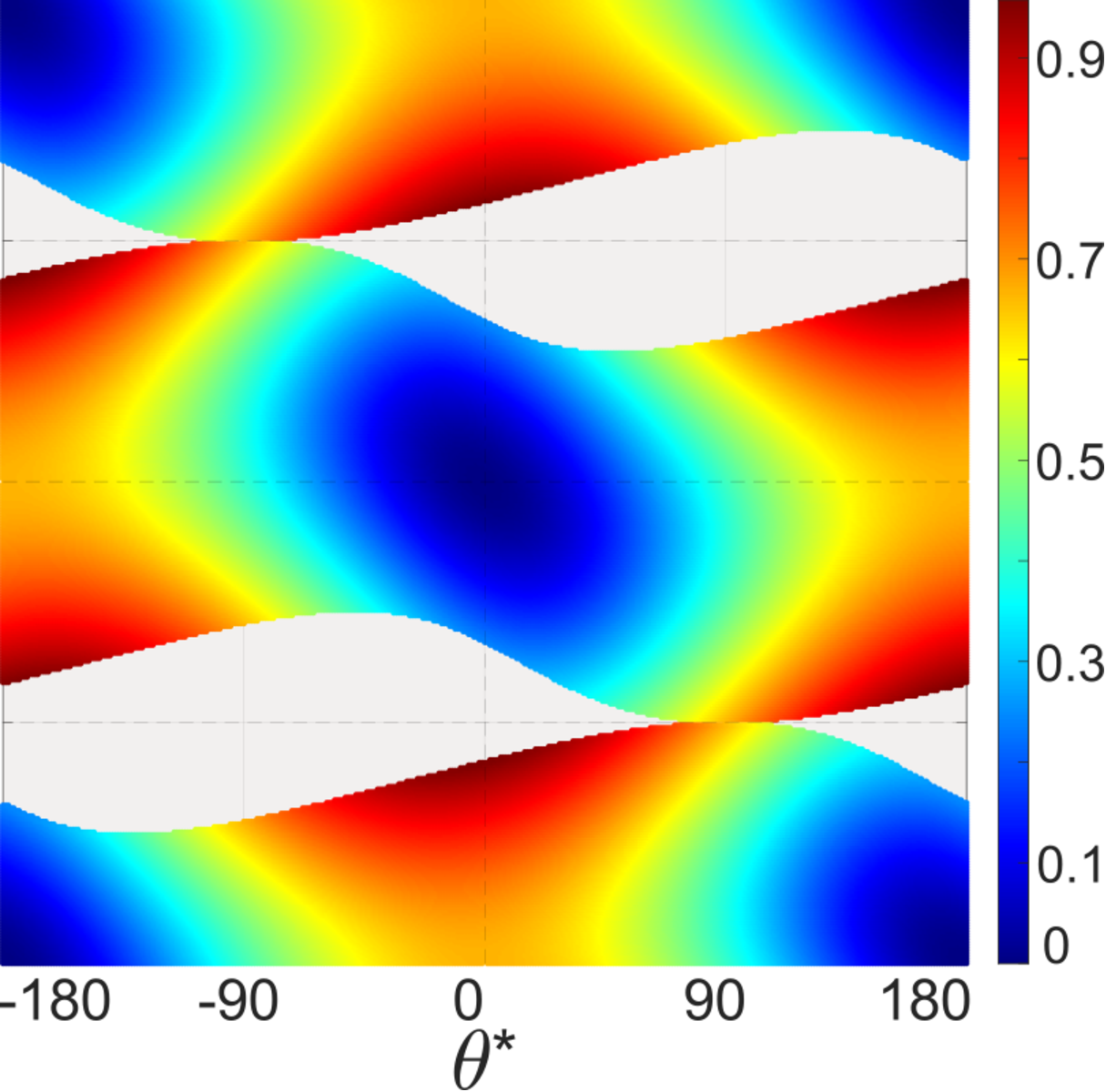} 
\end{tabular}&
\begin{tabular}{@{}c@{}}
\includegraphics[width = 0.315\columnwidth]{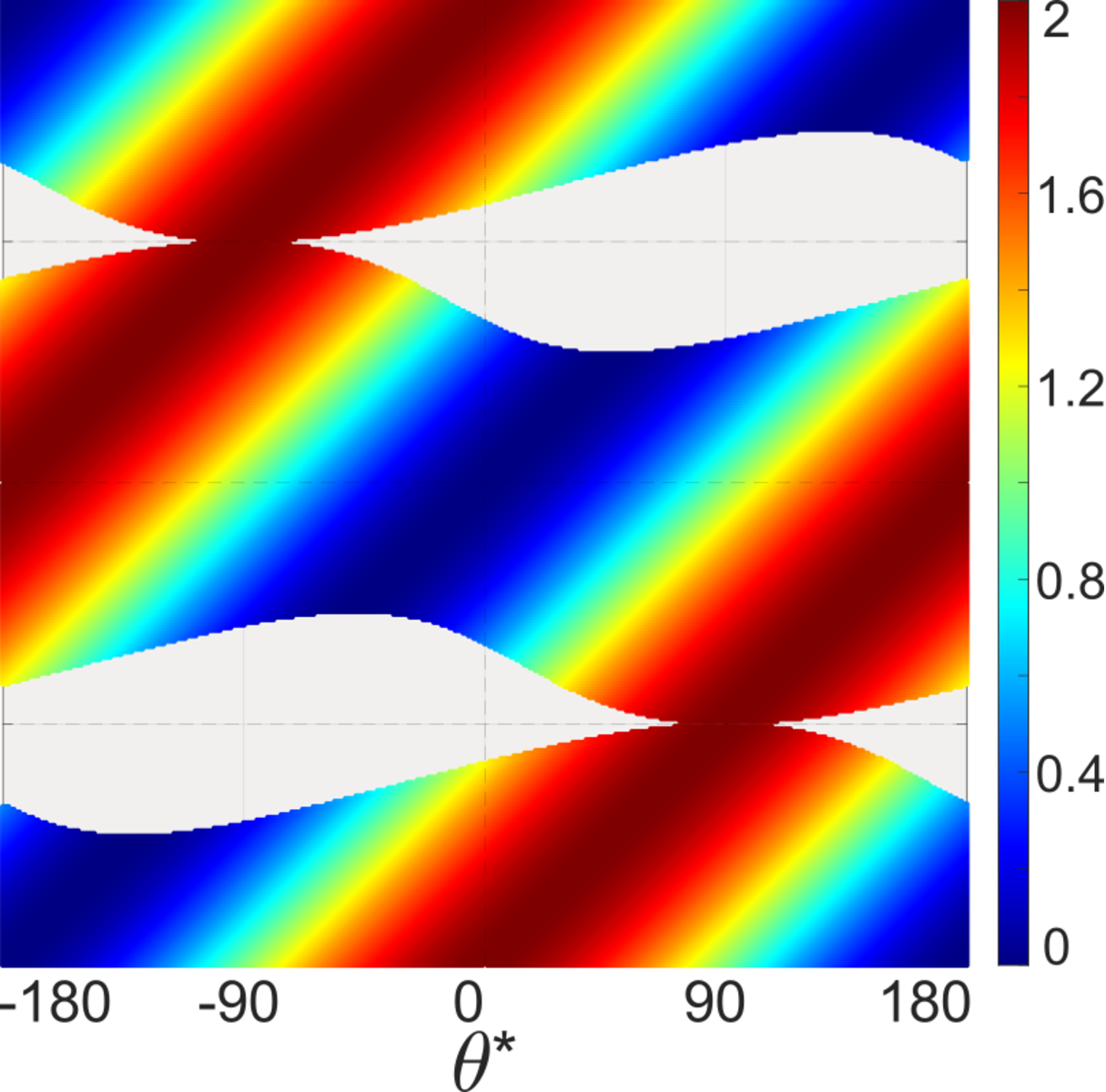} 
\end{tabular}
\end{tabular}
\vspace{-5mm}
\caption{Total turning effort estimation of dual-headway unicycle control. (left) Numerically computed total absolute turning along the closed-loop motion trajectory, (middle) dual-headway orientation distance, and (right) cosine orientation distance over the quotient space of the forward and backward dual-headway unicycle control domains, shown in \reffig{fig.dual_headway_motion_prediction}.
}
\label{fig.dual_headway_total_turning}
\vspace{-3mm}
\end{figure}

\subsubsection{Neighborhood of Unicycle Poses}
Given a set of ordered sample unicycle poses $\unisampleset:= \plist{\big.(\pos_1, \ori_1), \ldots, (\pos_m,\ori_m)}$, the nearest neighbor of $(\pos, \ori)$ from the sample unicycle poses $\unisampleset$ can be selected as the sample unicycle pose with the minimum distance,  based on a weighted translational and orientational unicycle distance, $\unidist= \alpha \unidist_{\translate} + \beta \unidist_{\orient}$, as  
\begin{align*}
&\uninearest_{\unisampleset}(\pos, \ori):= \argmin_{(\hat{\pos}, \hat{\ori}) \in \unisampleset} \unidist((\pos, \ori), (\hat{\pos}, \hat{\ori})). 
\end{align*}   
We define the coupled and decoupled translational and orientational neighborhoods of a unicycle pose $(\pos, \ori)$ from $\unisampleset$~as 
\begin{align*}
\neighbor_{\unisampleset, \Deltaradius} (\pos, \ori) &\!:=\! \clist{\!(\hat{\pos}, \hat{\ori}) \!\in\! \unisampleset \big | \unidist((\pos,\ori),(\hat{\pos}, \hat{\ori})) \!\leq\! \Deltaradius\!},
\\
\neighbor_{\unisampleset, (\Deltapos, \Deltaori)}(\pos, \ori)&\!:=\! \!\clist{\!(\hat{\pos}, \hat{\ori}) \!\!\in\! \unisampleset \Big | 
\scalebox{0.85}{$
\begin{array}{@{}l@{}}
\unidist_{\translate}((\pos,\ori),(\hat{\pos}, \hat{\ori})) \!\leq\! \Deltapos, 
\\
\unidist_{\orient}((\pos,\ori),(\hat{\pos}, \hat{\ori})) \!\leq\! \Deltaori
\end{array}
$}
\!},
\end{align*}
where $\Deltaradius \geq 0$ is a combined translational and orientational neighborhood tolerance, while $\Deltapos \geq 0$ and $\Deltaori \geq 0$ are separate translational and orientational neighborhood tolerances.
Due to the additive form of the unicycle distance, we have
{\small
\begin{align*}
\Deltaradius \geq  \alpha \Deltapos + \beta \Deltaori \Longrightarrow \neighbor_{\unisampleset, \Deltaradius} (\pos, \ori) \supseteq \neighbor_{\unisampleset, (\Deltapos, \Deltaori)}(\pos, \ori),
\\
\Deltaradius \leq \min(\alpha \Deltapos, \beta\Deltaori) \Longleftrightarrow \neighbor_{\unisampleset, \Deltaradius} (\pos, \ori) \subseteq \neighbor_{\unisampleset, (\Deltapos, \Deltaori)}(\pos, \ori).
\end{align*}
}%
We find the decoupled sample neighborhood $\neighbor_{\unisampleset, (\Deltapos, \Deltaori)}(\pos, \ori)$ to be more effective and precise in practice for determining translational and orientational similarity, whereas the coupled version is more effective for selecting the nearest neighbor using $\uninearest_{\unisampleset}(\pos,\ori)$.
Accordingly, we define the sample neighbor set as follows:%
\footnote{As a discrete neighborhood notion, one can also  further restrict the neighbor set to the $k$-nearest neighbors of $(\pos, \ori)$ from a set of unicycle poses $\unisampleset$, for some positive maximum number of neighbors $0 < k \leq |\unisampleset|$, defined as follows:
%
{
\begin{align*}
\neighbor_{\unisampleset, k}(\pos, \ori) := \clist{(\pos_{\mathrm{sort}_{\unisampleset, (\pos, \ori)}(i)}, \ori_{\mathrm{sort}_{\mathcal{U}, (\pos, \ori)}(i)}) \big | i = 1, \ldots, k} 
\end{align*}
}%
where $\mathrm{sort}_{\unisampleset, (\pos, \ori)}: \clist{1, \ldots, m} \rightarrow \clist{1, \ldots, m}$ is a bijective function that represents the sorting permutation of the unicycle poses in $\mathcal{U}$ based on their distance to the unicycle pose $(\pos, \ori)$ such that 
\begin{align*}
& \unidist((\pos_{\mathrm{sort}_{\mathcal{U}, (\pos, \ori)}(i)}, \ori_{\mathrm{sort}_{\mathcal{U}, (\pos, \ori)}(i)}, (\pos, \ori)) \\
& \quad\quad \leq \unidist((\pos_{\mathrm{sort}_{\mathcal{U}, (\pos, \ori)}(i+1)}, \ori_{\mathrm{sort}_{\mathcal{U}, (\pos, \ori)}(i+1)}, (\pos, \ori)) 
\end{align*}
for all $i = 1, \ldots, m$.
}
\begin{align*}
\unineighbor_{\unisampleset}(\pos, \ori):= \neighbor_{\unisampleset, (\Deltapos, \Deltaori)} (\pos, \ori).
\end{align*}

\subsubsection{Projection of Unicycle Poses for Local Steering}


To expand a random motion planning graph with a bounded step size during the exploration of collision-free space, one often needs to project a sample unicycle pose onto a continuous local neighborhood of its nearest neighbor among the sample unicycle poses.
A unicycle pose $(\pos, \ori)$ can be projected onto a continuous local neighborhood of another unicycle pose $(\hat{\pos}, \hat{\ori})$ using a unicycle distance measure $\unidist$ as 
\begin{align*}
\uniproj_{\deltaradius} ((\hat{\pos}, \hat{\ori}), (\pos, \ori))\!:=\! \hspace{-6mm} \argmin_{\substack{(\bar{\pos}, \bar{\ori}) \in  \R^{2}  \times  [-\pi, \pi) \\ \unidist((\bar{\pos}, \bar{\ori}), (\hat{\pos}, \hat{\ori})) \leq \deltaradius}}
 \hspace{-6mm} \unidist((\bar{\pos}, \bar{\ori}), (\pos, \ori)) 
\end{align*}  
where $\deltaradius \geq 0$ is a positive neighborhood radius.
Since finding an explicit expression for unicycle pose projection based on a generic distance measure is generally difficult, we alternatively consider a decoupled unicycle position and orientation projection based on the cross-product form of the local neighborhood using Euclidean and cosine distances as
\begin{align*}
&\uniproj_{(\deltapos, \deltaori)} ((\hat{\pos}, \hat{\ori}), (\pos, \ori))\\
& \quad := \plist{
\begin{array}{@{}c@{}}
\argmin\limits_{\substack{\bar{\pos} \in \R^2 \\ \norm{\bar{\pos} - \hat{\pos}} \leq \deltapos}} \norm{\bar{\pos} - \pos}, \argmin\limits_{\substack{\bar{\ori} \in [-\pi, \pi) \\ 1 - \scalebox{0.6}{$\tr{\ovectsmall{\bar{\ori}}} \ovectsmall{\hat{\ori}}$} \leq \deltaori}} \!\!\!\!\! \plist{\!1 \!-\! \tr{\ovectsmall{\bar{\ori}}\!}\! \ovectsmall{\ori}\!}
\end{array}
\!\!}
\end{align*}
where  $\deltapos \geq 0$ and $\deltaori \geq 0$ are positive maximum step sizes for translation and orientation changes, and 
the explicit form of Euclidean translation projection is given by
{\small
\begin{align*}
&\argmin_{\substack{\bar{\pos} \, \in\, \R^{2} \\ \norm{\bar{\pos} - \hat{\pos}} \leq \deltapos}} \norm{\bar{\pos} - \pos} = \left \{ \begin{array}{@{}l@{\,}l@{}}
\pos & \text{, if } \norm{\pos - \hat{\pos}} \leq \deltapos,
\\
\deltapos \frac{\pos - \hat{\pos}}{\norm{\pos - \hat{\pos}}} + \hat{\pos} & \text{, otherwise,}
\end{array}
\right.
\end{align*}
}%
and the cosine orientation projection can be obtained as
{\small
\begin{align*}
&\argmin_{\substack{\bar{\ori} \in [-\pi, \pi) \\  1- \scalebox{0.6}{$\tr{\ovectsmall{\bar{\ori}}\!}\! \ovectsmall{\hat{\ori}}$} \leq \deltaori}} \hspace{-4mm}  \plist{1 - \scalebox{0.85}{$\tr{\ovectsmall{\bar{\ori}}\!}\! \ovectsmall{\ori}$}}
\\ 
&\hspace{8mm} =\! \left \{
\begin{array}{@{}l@{}l@{}}
\ori & \text{, if } 1 - \scalebox{0.65}{$\tr{\ovectsmall{\ori}} \ovectsmall{\hat{\ori}}$} \leq \deltaori,\\
\ori \! + \! \cos^{-1}(\deltaori) & \text{, elseif } \scalebox{0.65}{$\tr{\ovectsmall{(\ori \! + \!\cos^{-1}(\deltaori))}\!}\! \ovectsmall{\hat{\ori}} \!\geq\! \tr{\ovectsmall{(\ori \!-\! \cos^{-1}(\deltaori))} \!} \! \ovectsmall{\hat{\ori}}$} ,
\\
\ori \!-\! \cos^{-1}(\deltaori) & \text{, else.}
\end{array}
\right .  
\end{align*}
}%
\subsubsection{Safety Verification via Unicycle Motion Prediction}

Safety verification of closed-loop motion under dual-headway control can be performed using the distance from feedback motion prediction to obstacles, which can be efficiently computed for occupancy grid maps using distance transforms and for environments with convex obstacles using convex optimization \cite{arslan_pacelli_koditschek_IROS2017}.
Hence, we check the safe reachability of a unicycle pose $(\hat{\pos}, \hat{\ori})$ from another unicycle pose $(\pos, \ori)$ using forward and backward dual-headway control and their associated control domains and motion prediction as   
{
\begin{align*}
\mathrm{issafe((\pos, \ori), (\hat{\pos}, \hat{\ori}))} 
= \left \{ 
\begin{array}{@{}r@{\hspace{1mm}}l@{}}
1, & \text{if }  \begin{array}{@{}c@{}}
(\pos, \ori) \in \fwddomain_{\hat{\pos}, \hat{\ori}}, \\ \conv(\pos, \head{\pos}, \tail{\hat{\pos}}, \hat{\ori}), \hat{\pos}) \subseteq \freespace
\end{array} 
\\
1, & \text{if } \begin{array}{@{}c@{}}
(\pos, \ori) \in \bwddomain_{\hat{\pos}, \hat{\ori}}
\\
\conv(\pos, \tail{\pos}, \head{\hat{\pos}}, \hat{\pos}) \subseteq \freespace
\end{array} 
\\
0, & \text{otherwise}
\end{array}
\right.
\end{align*}
}%
where the related headway and tailway points are  defined as in \refeq{eq.all_headway_tailway_points}. 
Note that the domains $\fwddomain_{\hat{\pos}, \hat{\ori}}$ and $\bwddomain_{\hat{\pos}, \hat{\ori}}$ of the forward and backward dual-headway unicycle controllers in \refeq{eq.forward_unicycle_control_domain} and \refeq{eq.backward_unicycle_control_domain}, respectively, are almost disjoint, with an intersection of measure zero, see \reffig{fig.dual_headway_motion_prediction}. 

\subsubsection{Local Cost of Unicycle Pose Control for Planning}


A local cost, denoted by $\localcost((\pos, \ori), (\hat{\pos}, \hat{\ori}))$, is a heuristic measure of the cost of unicycle steering control for moving from $(\pos, \ori)$ to $(\hat{\pos}, \hat{\ori})$ under the dual-headway unicycle control.     
Hence, any unicycle pose distance $\unidist$ in \reftab{tab.unidist} can be selected as a local cost heuristic, i.e., $\localcost = \unidist$, or one may simply use a uniform local cost for all valid motion, i.e., $\localcost=1$. 
As one might expect, to better align control and planning, it is technically more suitable to use the dual-headway unicycle distance associated with dual-headway unicycle control, as also demonstrated in numerical simulations in \refsec{sec.numerical_simulations}.

\subsection{Optimal Rapidly Exploring Random Trees for Unicycles}

In this section, we simply adapt the optimal rapidly-exploring random trees algorithm from \cite{karaman_frazzoli_IJRR2011}, as described in \refalg{alg.optimal_unicycle_motion_planning}. 
For ease of notation, we denote a unicycle pose more compactly as $\unipose = (\pos, \ori)$ and introduce below a few graph-theoretical tools for the algorithmic description.

Let $\graph = \plist{\vertexset, \edgeset}$ denote an undirected, connected graph of unicycle poses $\vertexset$, where a pair of unicycle poses $\unipose, \hat{\unipose} \in \vertexset$ are connected if there is an edge $(\unipose, \hat{\unipose}) \in \edgeset$ with an edge cost of $\localcost(\unipose, \hat{\unipose}) > 0$. 
Let $\cost_{\graph}\plist{\uniposestart, \uniposegoal}$ denote the minimum total cost of traveling from $\uniposestart$ to $\uniposegoal$ over the graph $\graph$, which can be computed using standard graph search methods such as Dijkstra's algorithm.
Hence, we define the parent of a unicycle pose $\unipose \in \vertexset$ as its neighbor with the minimum total travel cost to a given start pose $\uniposestart \in \vertexset$ over the motion graph $\graph = (\vertexset, \edgeset)$ as 
\begin{align*}
\parent_{\graph}(\uniposestart,\unipose):= \argmin_{\substack{\hat{\unipose} \, \in\, \vertexset \\ (\hat{\unipose}, \unipose) \, \in\, \edgeset}} \cost_{\graph}(\uniposestart, \hat{\unipose}). 
\end{align*} 
Accordingly, using the essential building blocks described in \refsec{sec.elements_of_sampling_based_planning}, we present in \refalg{alg.optimal_unicycle_motion_planning} how to build a connected optimal rapidly-exploring random tree for sampling-based unicycle feedback motion planning.

\begin{algorithm}[h]
\caption{Optimal Rapidly-Exploring Random Tree\!\!}
\label{alg.optimal_unicycle_motion_planning}
\begin{footnotesize}
\SetKwInOut{Input}{Input}  
\Input{ $\freespace$ -- Collision-Free Space \\
$\uniposestart = (\pos, \ori) \in \freespace \times [-\pi,\pi)$ -- Start Unicycle Pose \\
$\uniposegoal  = (\goalpos, \goalori) \in \freespace \times [-\pi,\pi)$ -- Goal Unicycle Pose \\
$\NumSample$ -- Number of Iterations\\
}
\KwOut{$\graph=\plist{\vertexset, \edgeset}$ -- \mbox{Random Motion Planning~Graph\hspace{-10mm}} 
\\
\vspace{-2mm}
\hrulefill
}

$\vertexset \leftarrow \clist{\uniposestart}$; $\edgeset \leftarrow \varnothing$\\
\For{$i = 1, \ldots, \NumSample$}{
$\uniposerand \leftarrow \unisample\plist{\freespace \! \times \! [\pi, \pi), \uniposegoal}$\\
$\uniposenearest \leftarrow \uninearest_{\vertexset}\plist{\uniposerand}$\\
$\uniposenew \leftarrow \uniproj\plist{\uniposenearest, \uniposerand}$\\
\If{$\issafe\plist{\uniposenearest, \uniposenew}$}{
$\uniposemin \leftarrow \uniposenearest$\\
$\mathrm{mincost} \leftarrow \mathrm{cost}_{\graph}(\uniposestart, \uniposenearest) + \localcost(\uniposenearest, \uniposenew)$\\
$\uniposeneighbor \leftarrow \unineighbor_{\vertexset}(\uniposenew) $\\
\For{$\uniposenear \in \uniposeneighbor$}{
\mbox{$\mathrm{tempcost} \leftarrow \mathrm{cost}_{\graph}(\uniposestart, \uniposenear) + \localcost(\uniposenear, \uniposenew)$}\\
\If{$(\mathrm{tempcost} < \mathrm{mincost}) \wedge \issafe(\uniposenear, \uniposenew)$}{
$\uniposemin \leftarrow \uniposenear$\\
$\mathrm{mincost} \leftarrow \mathrm{tempcost}$\\
}
}
$\vertexset \leftarrow \vertexset \cup \clist{\uniposenew}$; $\edgeset \leftarrow \edgeset \cup \clist{\plist{\uniposemin, \uniposenew}}$\; 
\For{$\uniposenear \in \uniposeneighbor$}{
$\mathrm{mincost} \leftarrow \mathrm{cost}_{\graph}(\uniposestart, \uniposenear)$\\
\mbox{$\mathrm{tempcost} \leftarrow \mathrm{cost}_{\graph}\plist{\uniposestart, \uniposenew} + \localcost(\uniposenew, \uniposenear)$}\\
\If{\mbox{$(\mathrm{tempcost} < \mathrm{mincost}) \wedge  \issafe(\uniposenew, \uniposenear)$}}{
$\uniposeparent \leftarrow \parent_{\graph}(\uniposestart,\uniposenear)$ \\
\mbox{$\edgeset \leftarrow \edgeset   \cup  \clist{\plist{\uniposenew, \uniposenear}} \! \setminus \! \clist{\plist{\uniposeparent, \uniposenear}}$}\\
}
}
} 
}
\Return{$\graph=\plist{\vertexset, \edgeset}$}\\
\end{footnotesize}
\end{algorithm}

\subsection{Unicycle Feedback Control for Planning Execution}

One known advantage of feedback motion planning, which tightly couples planning and control, is that executing a plan is almost effortless.
More specifically, given a connected motion graph  $\graph=(\vertexset, \edgeset)$, constructed, for example, by \refalg{alg.optimal_unicycle_motion_planning}, we select a safely reachable local goal $(\hat{\pos}, \hat{\ori})$ at any given collision-free unicycle pose $\unipose = (\pos, \ori) \in \freespace \times [-\pi, \pi)$ by minimizing the total travel cost to reach a global goal $\uniposegoal = (\goalpos, \goalori) \!\in\! \vertexset$ as 
{\small
\begin{align*}
(\hat{\pos}, \hat{\ori})\! & = \mathrm{localgoal}_{\graph}(\unipose)  \!:=\! \!\!\!\argmin_{\substack{\hat{\unipose} \in \vertexset \\ \issafe(\unipose, \hat{\unipose})=1}} \! \!\!\localcost(\unipose, \hat{\unipose}) \!+\! \cost_{\graph}(\hat{\unipose}, \uniposegoal)
\end{align*}
}%
and then continuously navigate toward the local goal using forward or backward dual-headway unicycle control as
{\small
\begin{align*}
\ctrl_{\graph}(\pos, \ori) = \left \{ \begin{array}{@{}c@{}c@{}}
\fwdctrl_{\hat{\pos}, \hat{\ori}}(\pos, \ori) & \text{, if } (\pos, \ori) \in \fwddomain_{\hat{\pos}, \hat{\ori}}, \\
\bwdctrl_{\hat{\pos}, \hat{\ori}}(\pos, \ori) & \text{, if } (\pos, \ori) \in \bwddomain_{\hat{\pos}, \hat{\ori}}. 
\end{array}
\right.
\end{align*}
}%
It is important to observe that if any unicycle pose in the motion graph $\graph$ is safely reachable from the current unicycle pose $(\pos, \ori)$, then the unicycle remains connected to the motion graph for all future times and reaches the global goal pose $\uniposegoal = (\goalpos, \goalori)$ by visiting a finite number of intermediate unicycle poses, each with a decreasing travel cost to the global goal.
In other words, the unicycle motion graph corresponds to a collection of adjacent local control policies, associated with and abstracted by unicycle poses as control goals; and the minimum-cost local goal selection serves as a sequential composition strategy to systematically combine local navigation policies to achieve global navigation \cite{burridge_rizzi_koditschek_IJRR1999}.


\section{Numerical Simulations}
\label{sec.numerical_simulations}

In this section, we present numerical simulations demonstrating the influence of optimization objective, neighborhood size, and number of samples on the quality of the resulting unicycle motion. 
We also present the use of informed sampling and pruning for efficient and effective search and exploration in complex environments by balancing the trade-off between exploration and exploitation.\footnote{For all simulations, unless otherwise specified, we set the headway and tailway coefficients identically as  $\headtailcoef = \tfrac{1}{3}$; $(\Deltapos,\Deltaori)$-neighborhood is defined using the Euclidean translation and cosine orientation distances with $\Deltapos = 1.5$ and $\Deltaori=1 - \cos(\tfrac{\pi}{3})$; the $(\deltapos, \deltaori)$-projection parameters are set as $\deltapos = 1$ and $\deltaori = 1 - \cos(\tfrac{\pi}{6})$; the additive weights of dual-headway translation and orientation distances are selected as $\alpha =1$ and $\beta= 10$.} 

\begin{figure}[t]
\begin{tabular}{@{}c@{\hspace{0.5mm}}c@{\hspace{0.5mm}}c@{}}
\includegraphics[width = 0.33\columnwidth]{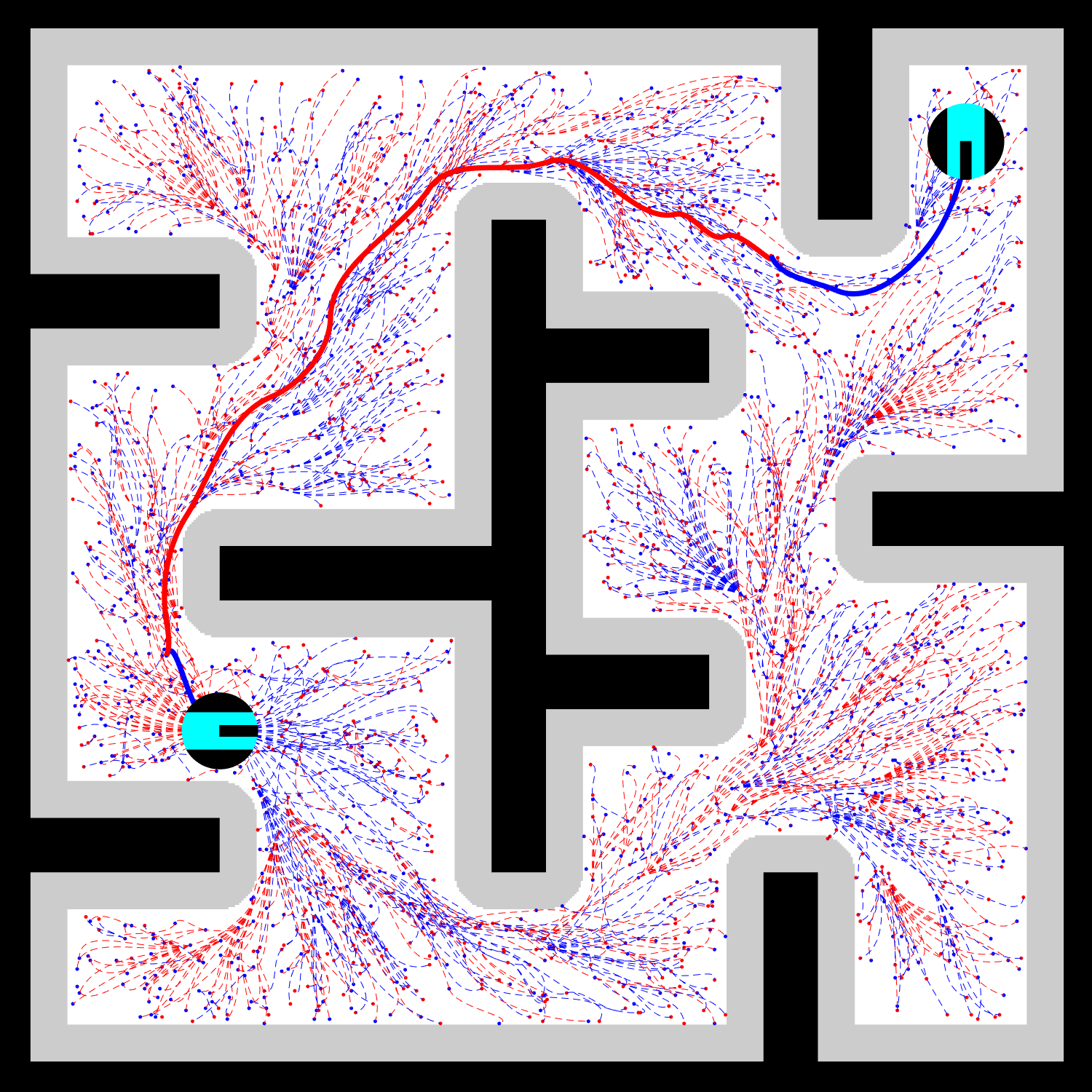} & \includegraphics[width = 0.33\columnwidth]{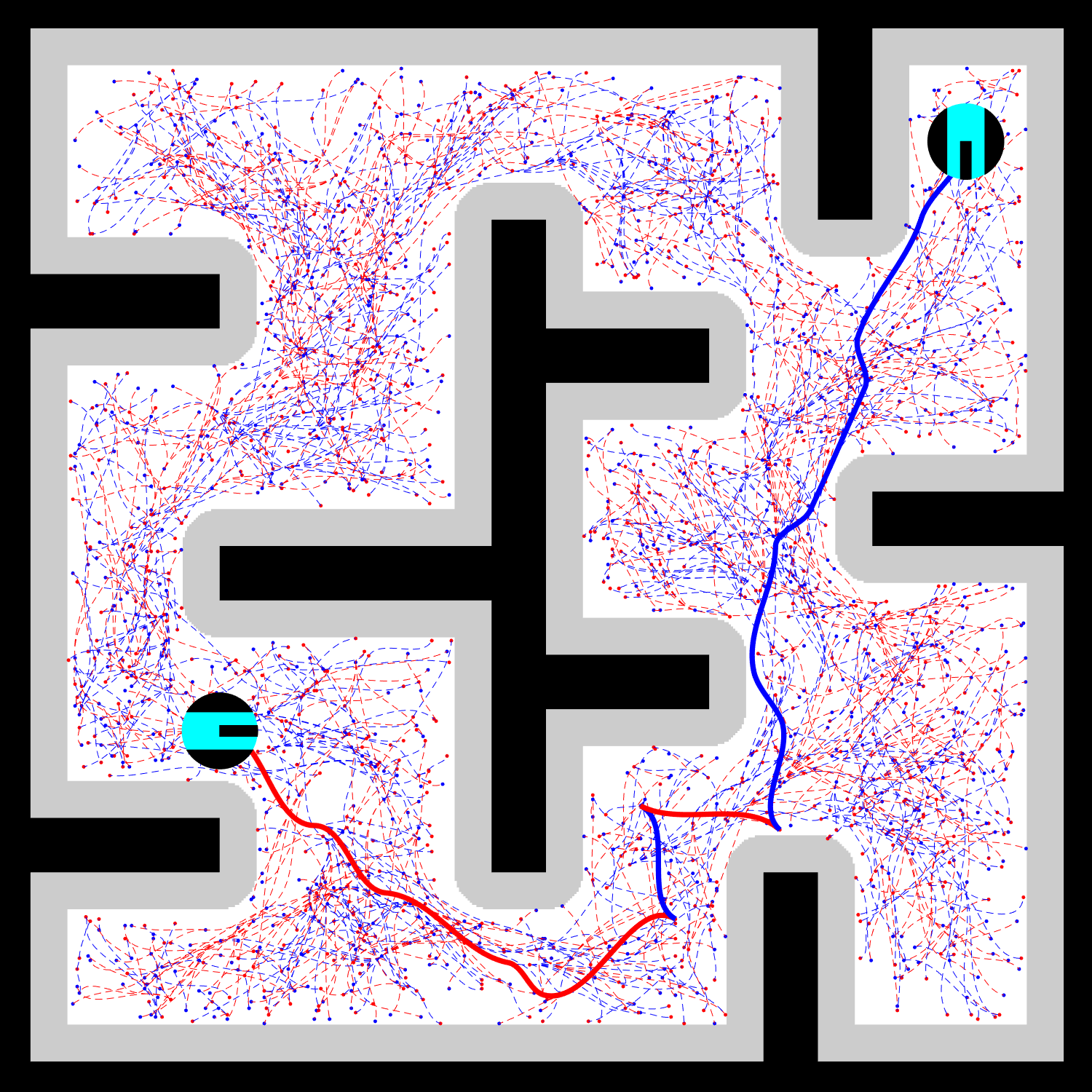} & \includegraphics[width = 0.33\columnwidth]{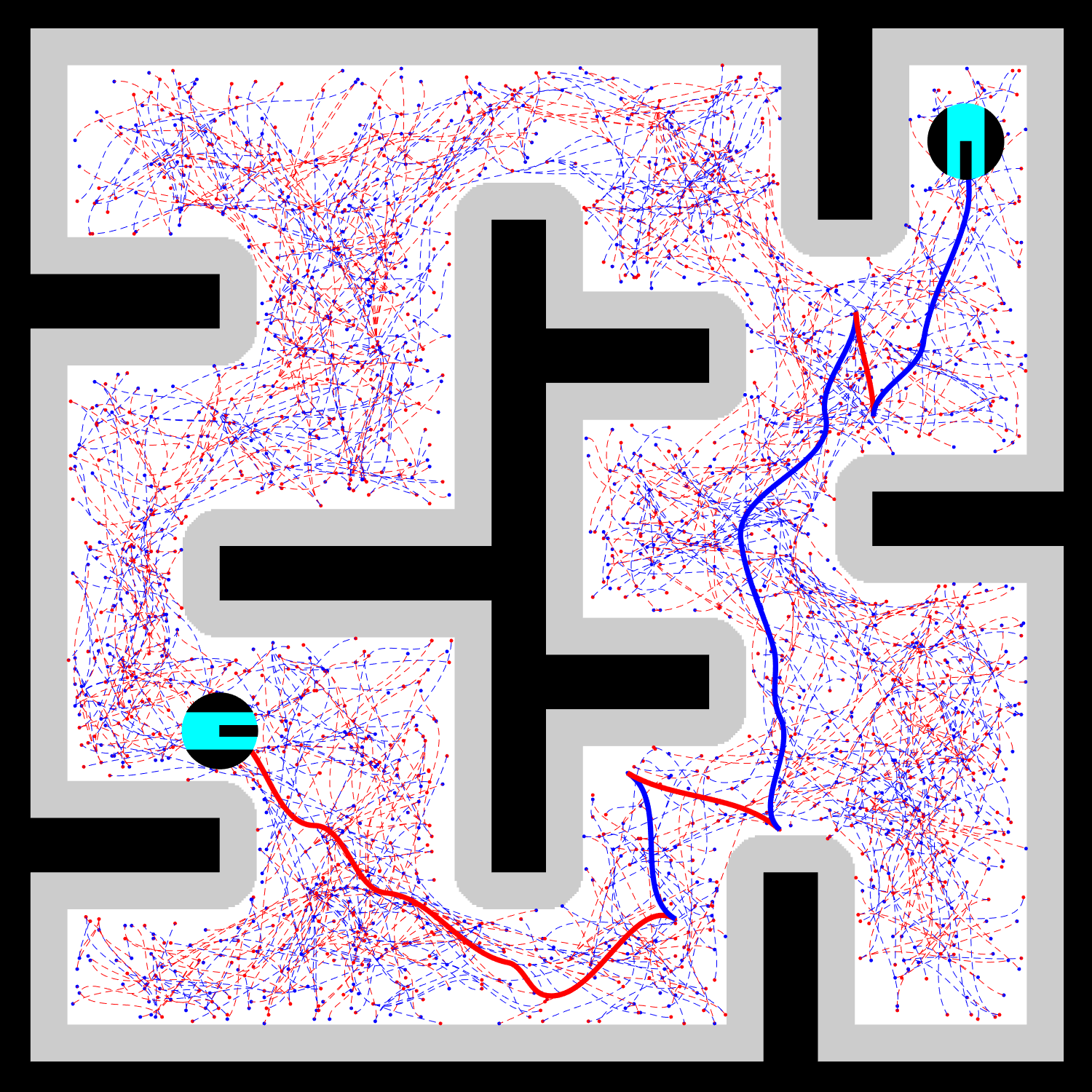}
\\
\includegraphics[width = 0.33\columnwidth]{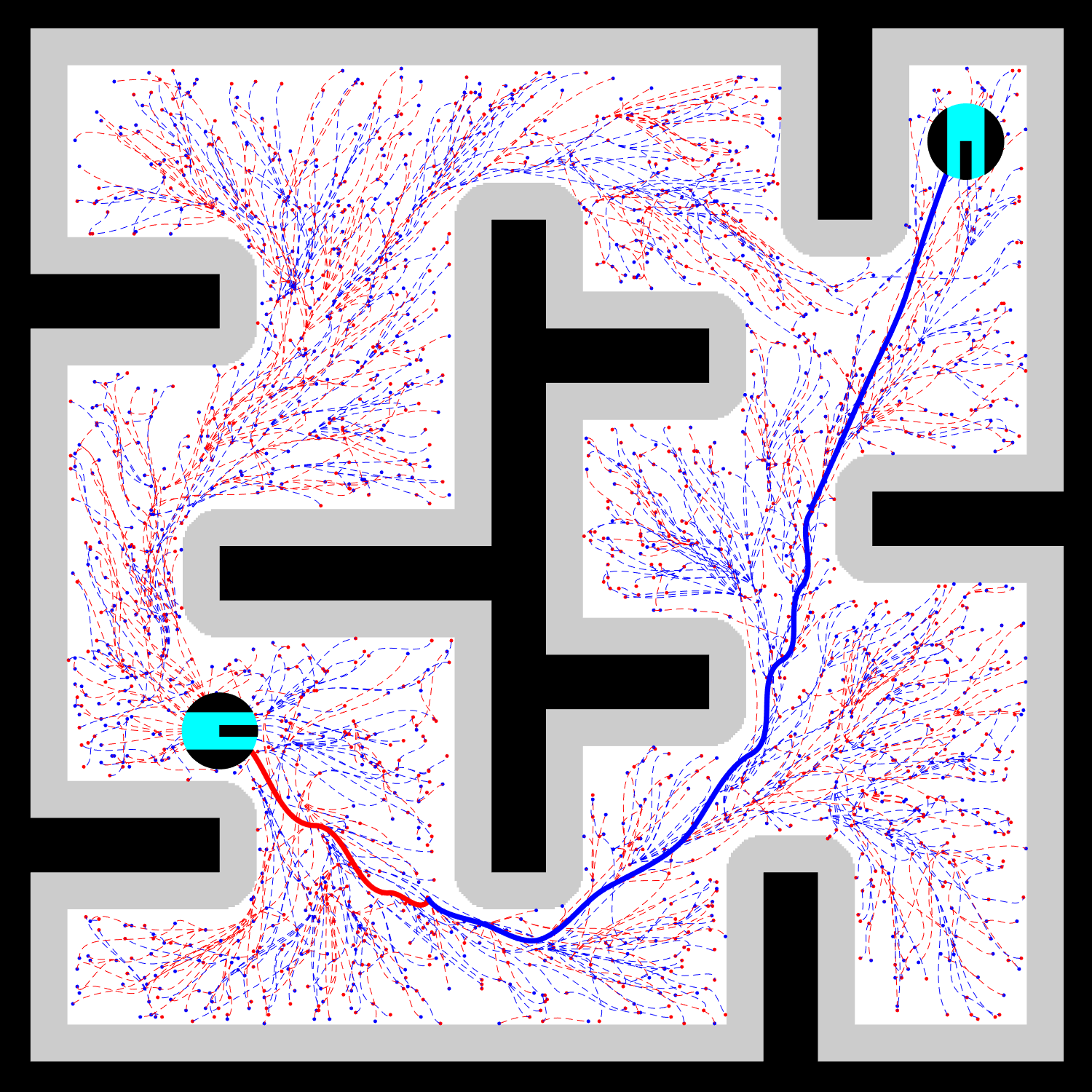} & \includegraphics[width = 0.33\columnwidth]{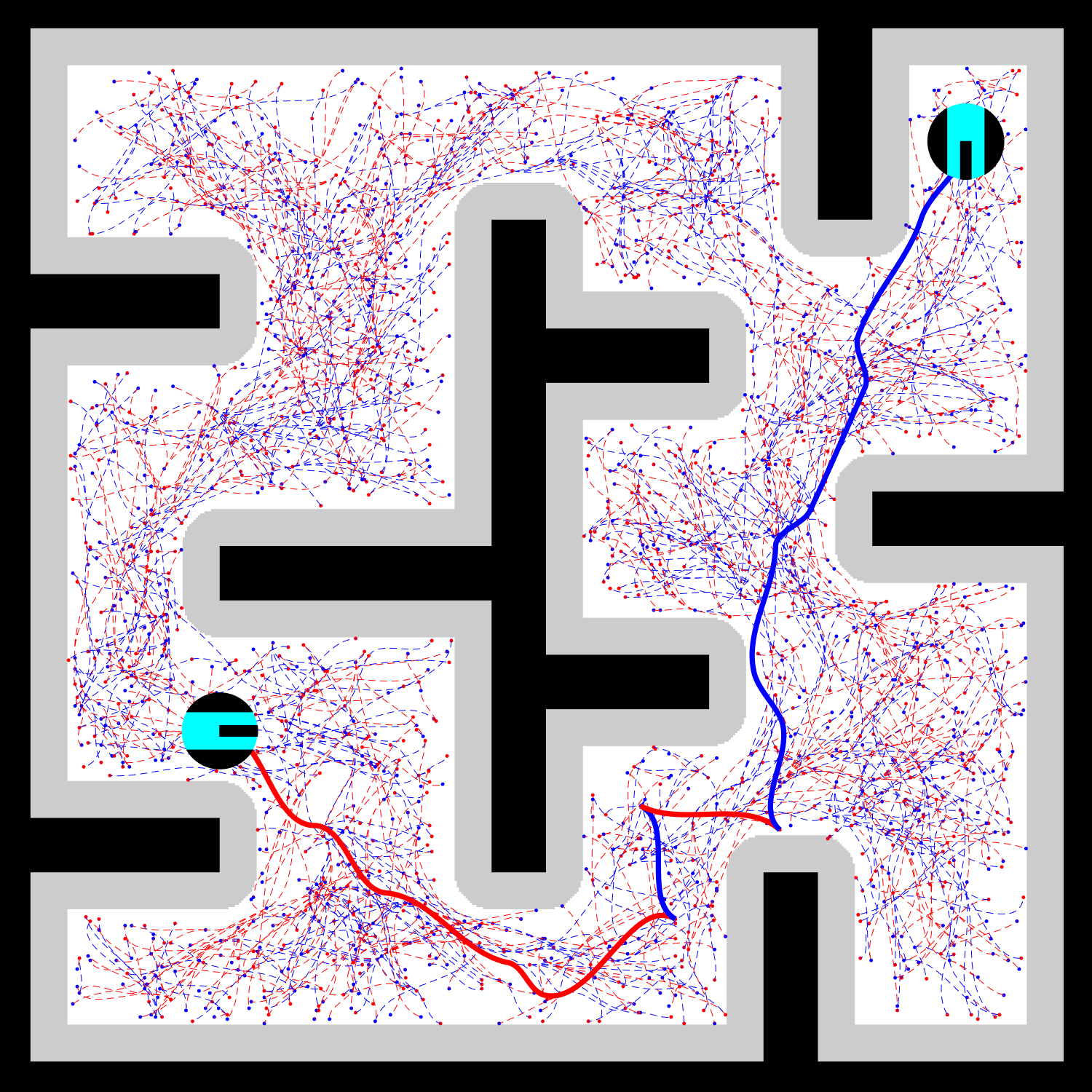} & \includegraphics[width = 0.33\columnwidth]{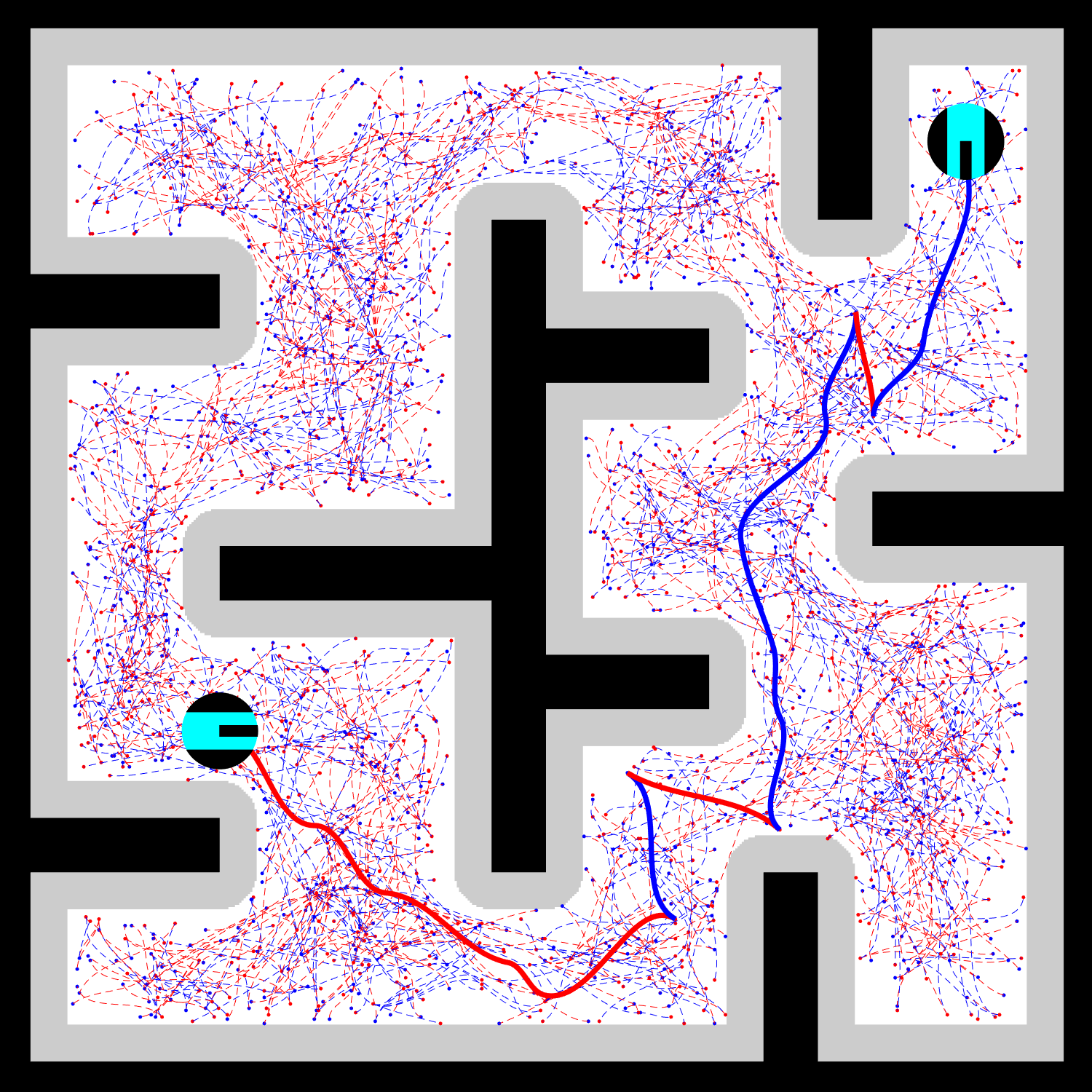}
\\
\includegraphics[width = 0.33\columnwidth]{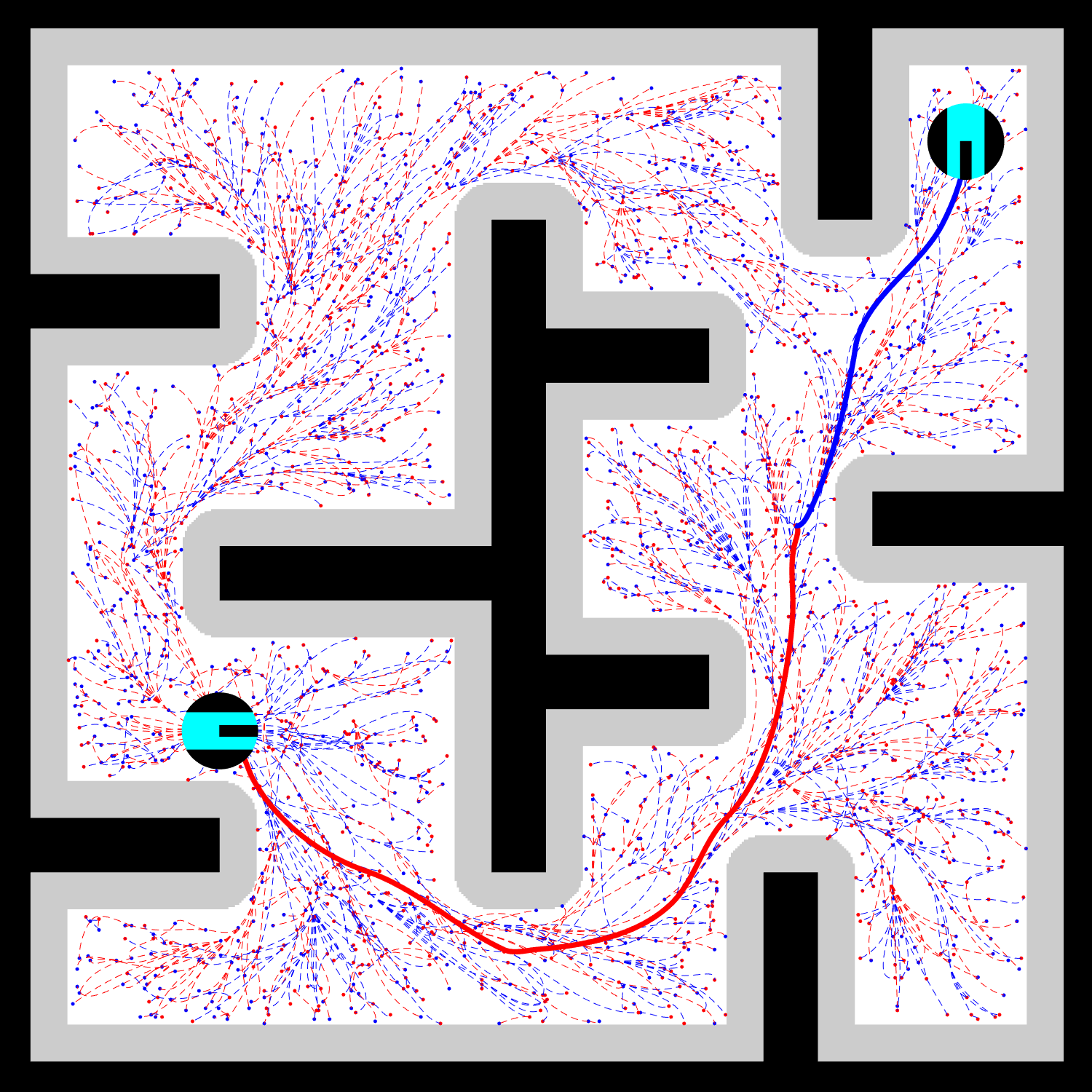} & \includegraphics[width = 0.33\columnwidth]{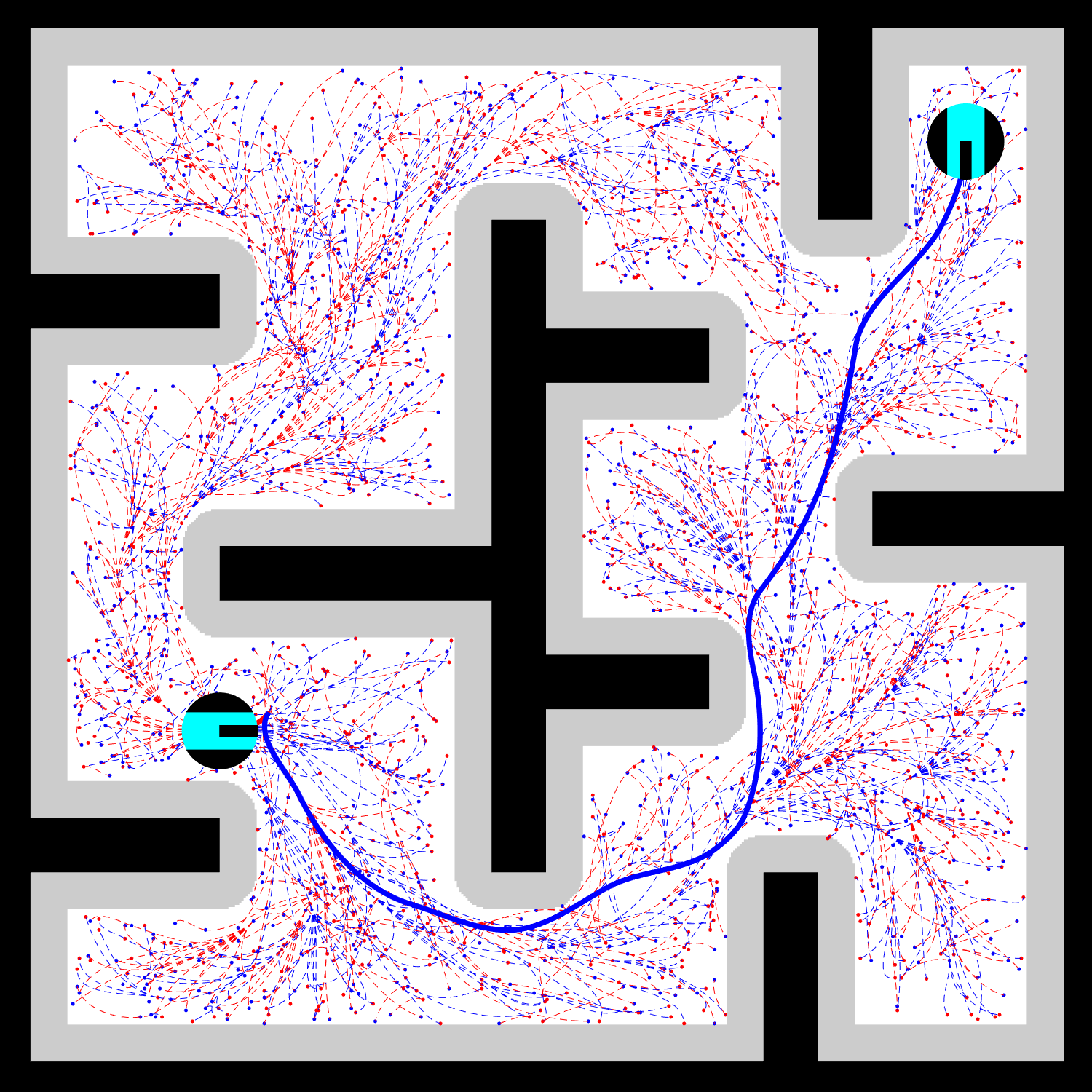} & \includegraphics[width = 0.33\columnwidth]{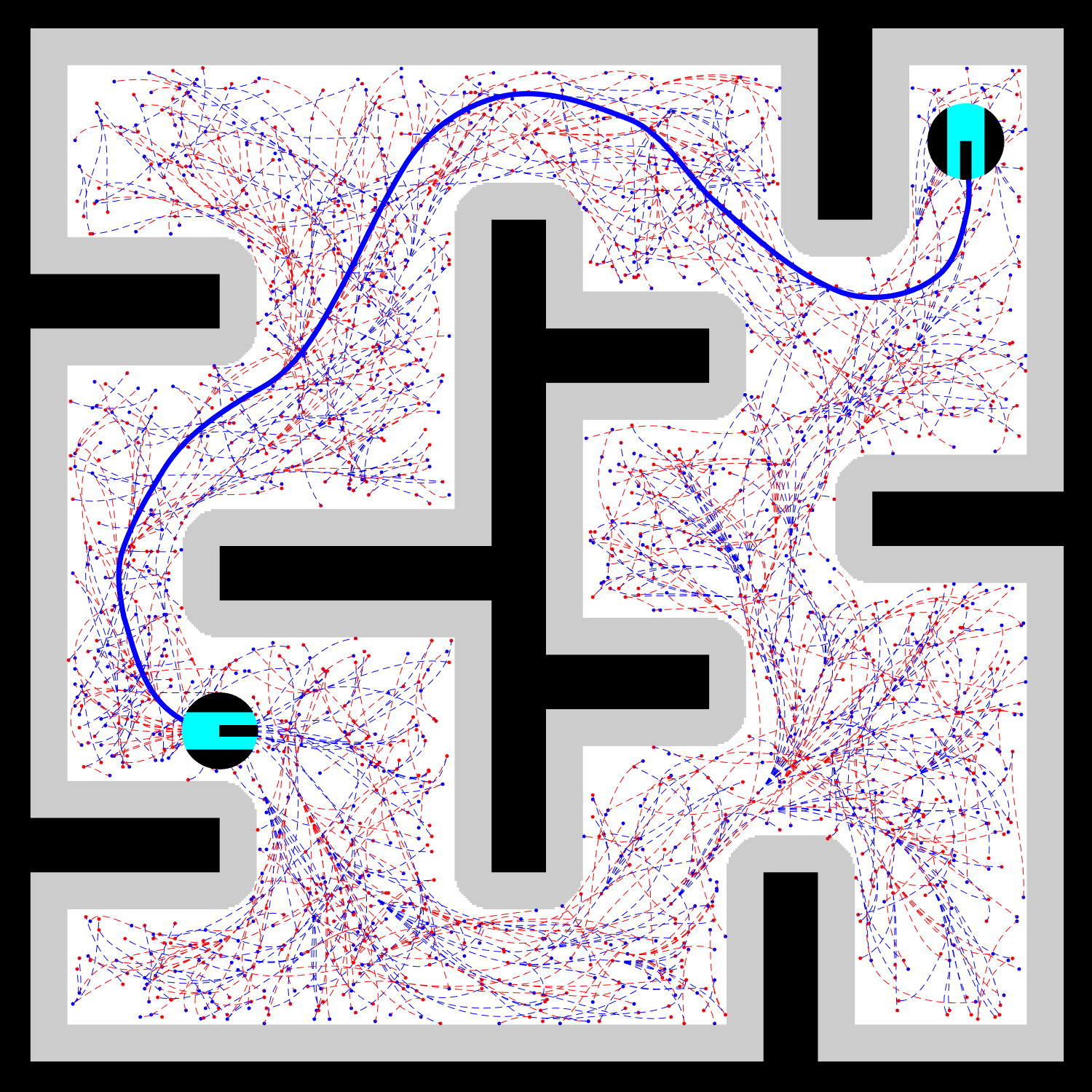}
\end{tabular}
\vspace{-4mm}
\caption{Optimal sampling-based unicycle feedback motion planning examples using forward and backward dual-headway motion primitives for different orientation distance weight and different translation and orientation distances: (top) weighted combination of Euclidean translation and cosine orientation distance, (middle) Euclidean-Cosine translation and Cosine orientation, (bottom) dual-headway translation and orientation distances; (left) $\alpha = 1.0, \beta=0.0$, (center) $\alpha = 1.0, \beta=2.0$, (right) $\alpha=1.0, \beta=10.0$.}
\label{fig.influence_of_optimization_objective}
\vspace{-3mm}
\end{figure}

\subsection{Influence of Optimization Objective on Unicycle Motion}

The optimal sampling-based unicycle motion planning approach in \refalg{alg.optimal_unicycle_motion_planning} uses randomized exploration with rewiring of local connections to minimize the total cost of a path joining the start and goal poses.
Thus, selecting an appropriate unicycle pose distance as the local connectivity cost plays a key role in the quality of the resulting optimal motion plan.
In \reffig{fig.influence_of_optimization_objective}, we present the optimal unicycle motion paths for three different additively weighted unicycle translation and rotation distances  under three different weight settings, using the same number of samples and the same neighborhood.
When there is no turning penalty (i.e., $\alpha = 1, \beta = 0$), as shown in \reffig{fig.influence_of_optimization_objective} (left), we consistently observe in numerical simulations that all translation distances, Euclidean, Euclidean-Cosine, and dual-headway translation distances, result in similar travel distances; however, smoothness slightly increases from Euclidean to Euclidean-Cosine to dual-headway distance, although all may exhibit sharp turns (i.e., turning in place) along the resulting motion.
With an increasing turning penalty (i.e., $\beta = 0$, $\beta = 2$, and $\beta = 10$ from left to right in \reffig{fig.influence_of_optimization_objective} while keeping $\alpha = 1$), we observe that optimization using dual-headway translation and orientation distances, as shown in \reffig{fig.influence_of_optimization_objective} (bottom), produces increasingly smoother unicycle paths, fully avoiding sharp turns and unnecessary changes in motion direction. 
In contrast, optimization based on cosine orientation distance in \reffig{fig.influence_of_optimization_objective} (top, middle) tends to favor similar orientations between adjacent unicycle poses and so results in a zigzag-like motion pattern between parallel unicycle poses at different positions.
In summary, as demonstrated in practice \cite{kuwata_teo_fiore_karaman_frazzoli_how_TCST2009, park_kuipers_IROS2015}, selecting an appropriate optimization objective using Lyapunov-like control-aware measures enhances planning effectiveness by bridging the gap between control and planning.   

\begin{figure}[t]
\begin{tabular}{@{}c@{\hspace{1mm}}c@{\hspace{1mm}}c@{}}
 \includegraphics[width = 0.33\columnwidth]{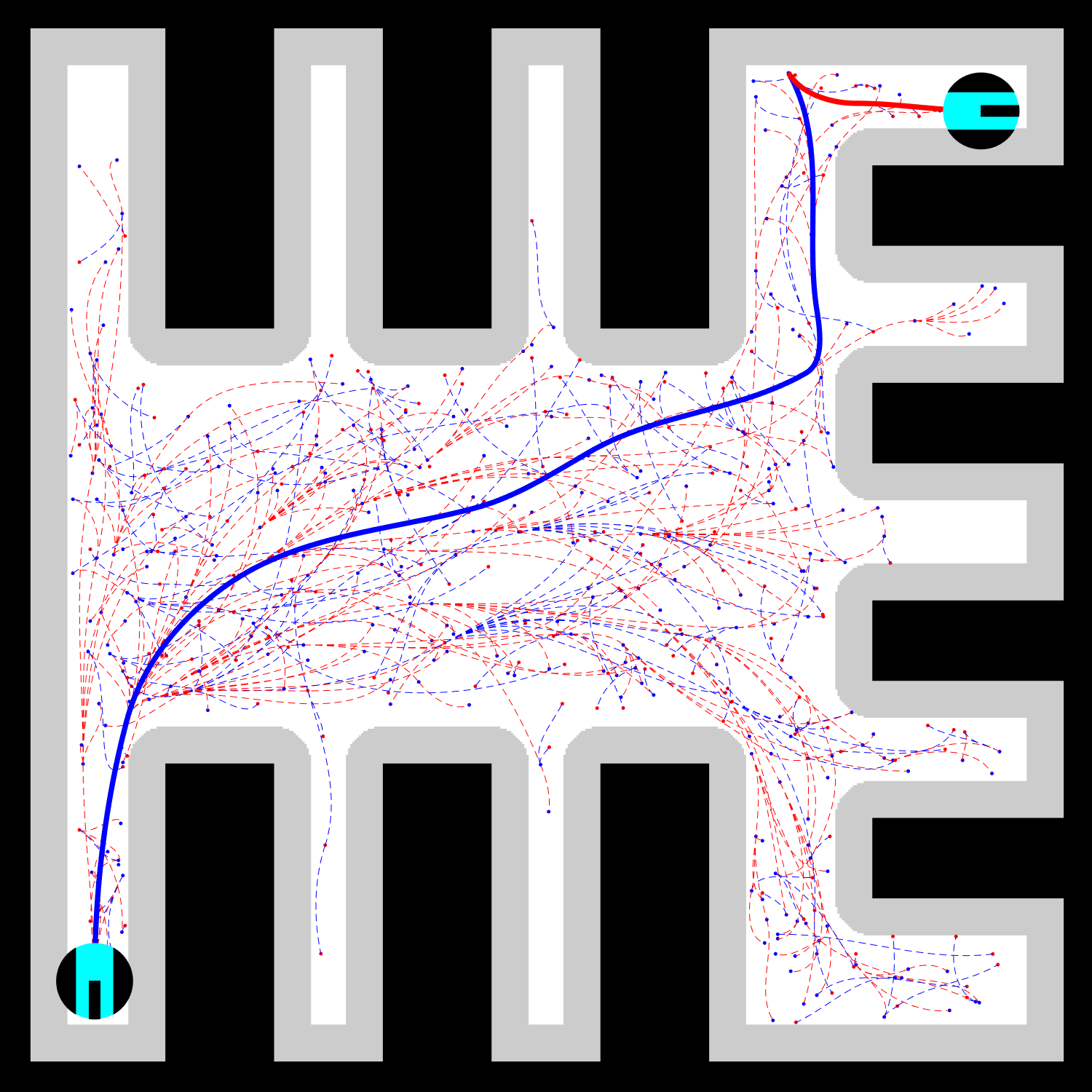} & \includegraphics[width = 0.33\columnwidth]{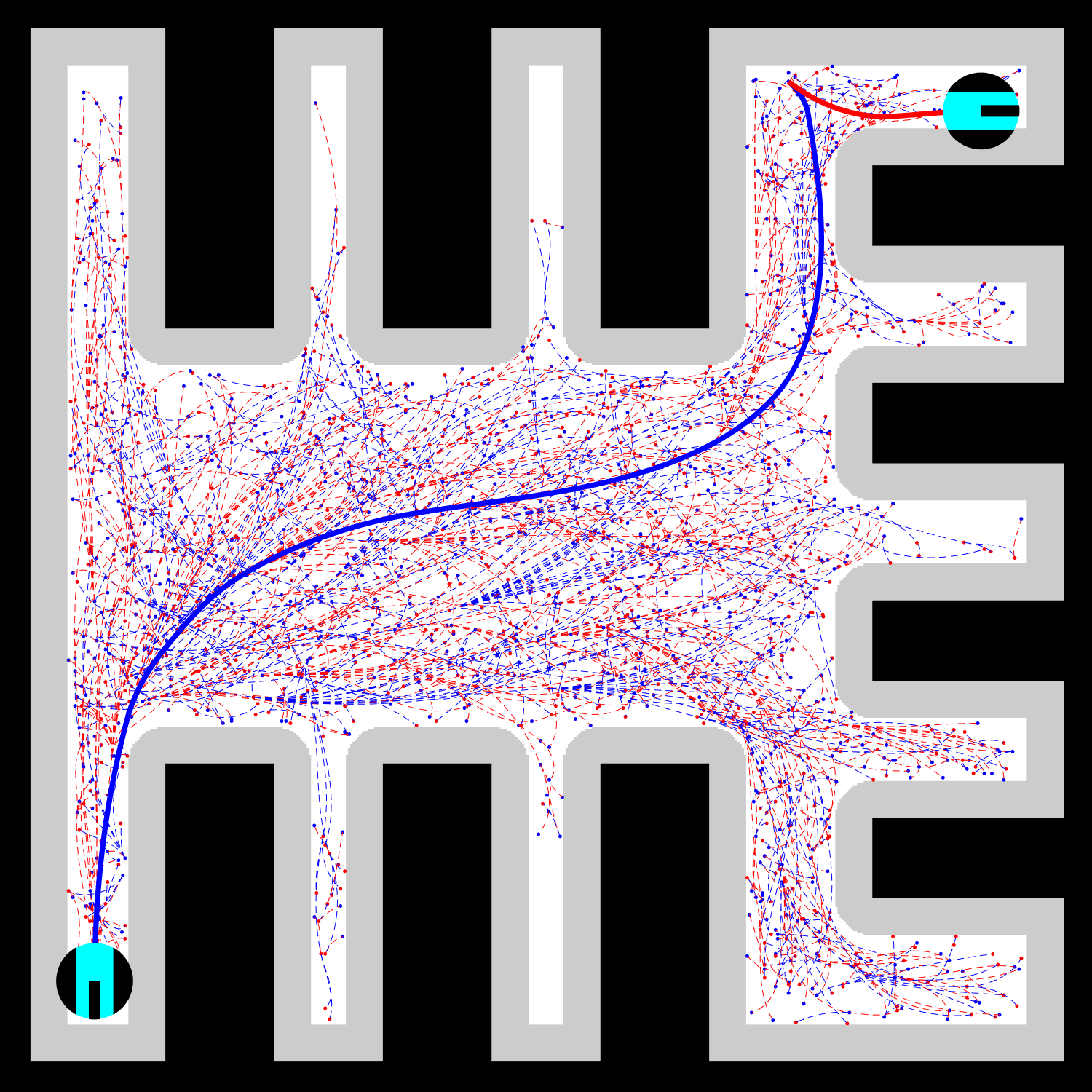} &  \includegraphics[width = 0.33\columnwidth]{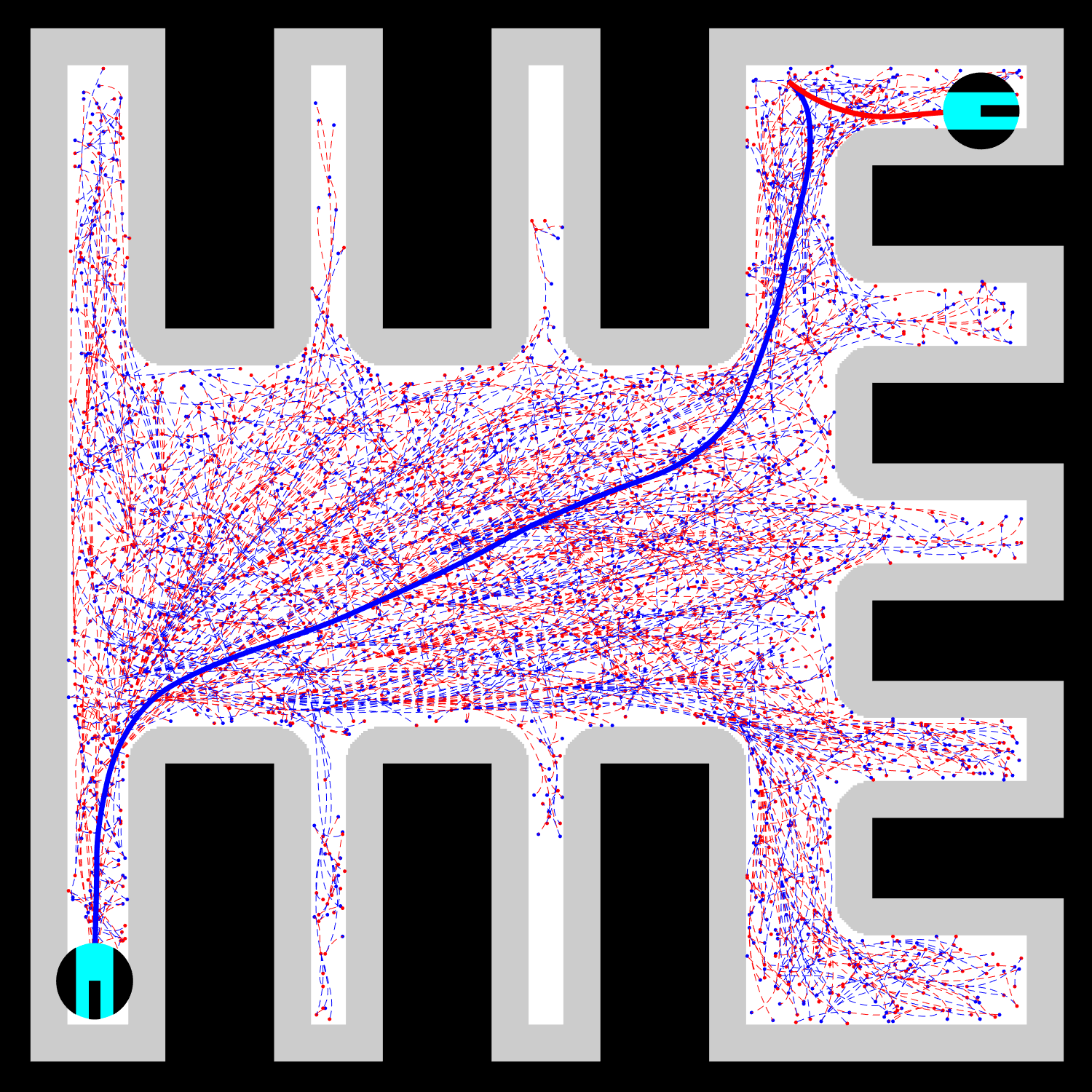} 
\\
\includegraphics[width = 0.33\columnwidth]{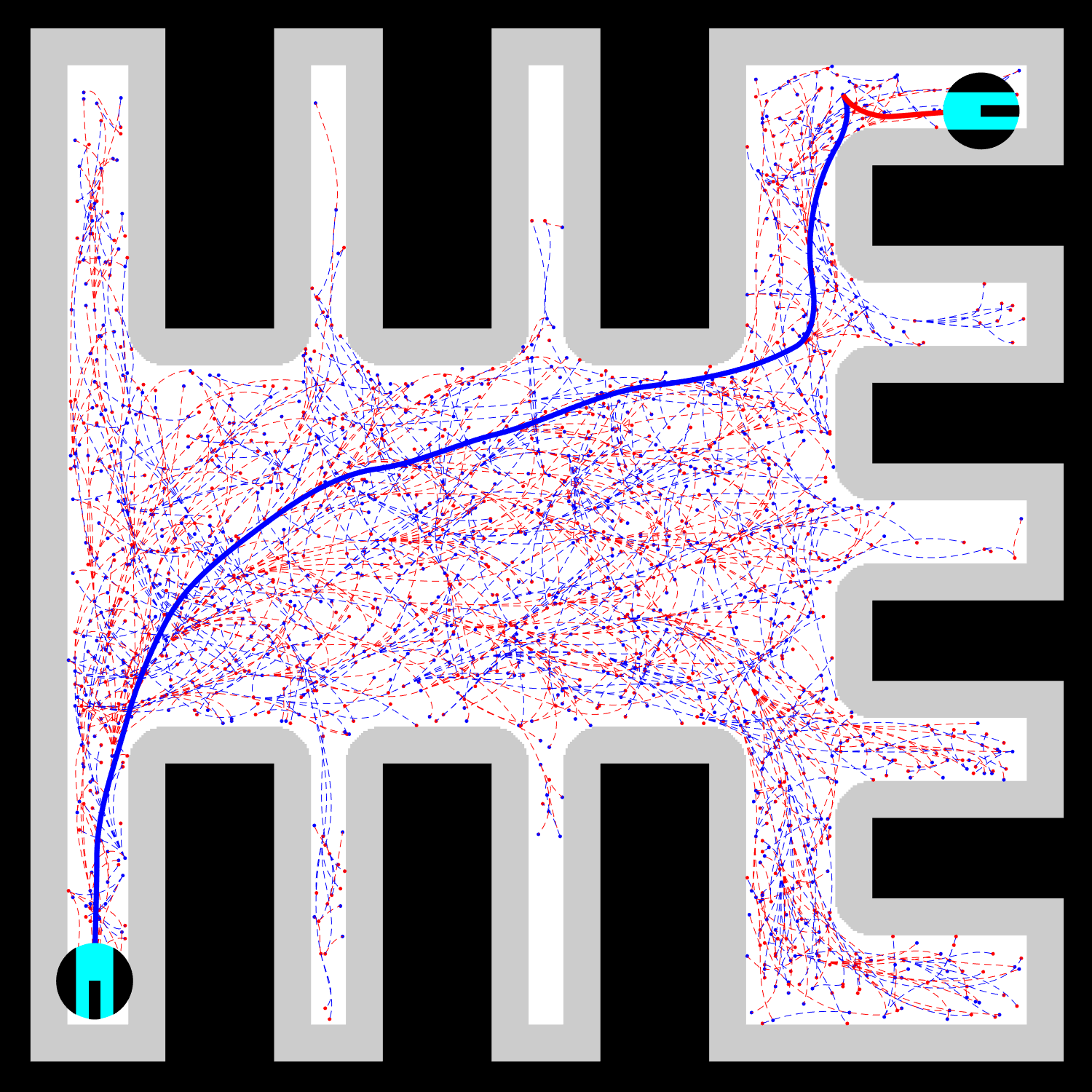} & \includegraphics[width = 0.33\columnwidth]{figures/RRT_env_parking-objective_dualheadway-beta_10-iteration_3000-nhood_medium.eps} & \includegraphics[width = 0.33\columnwidth]{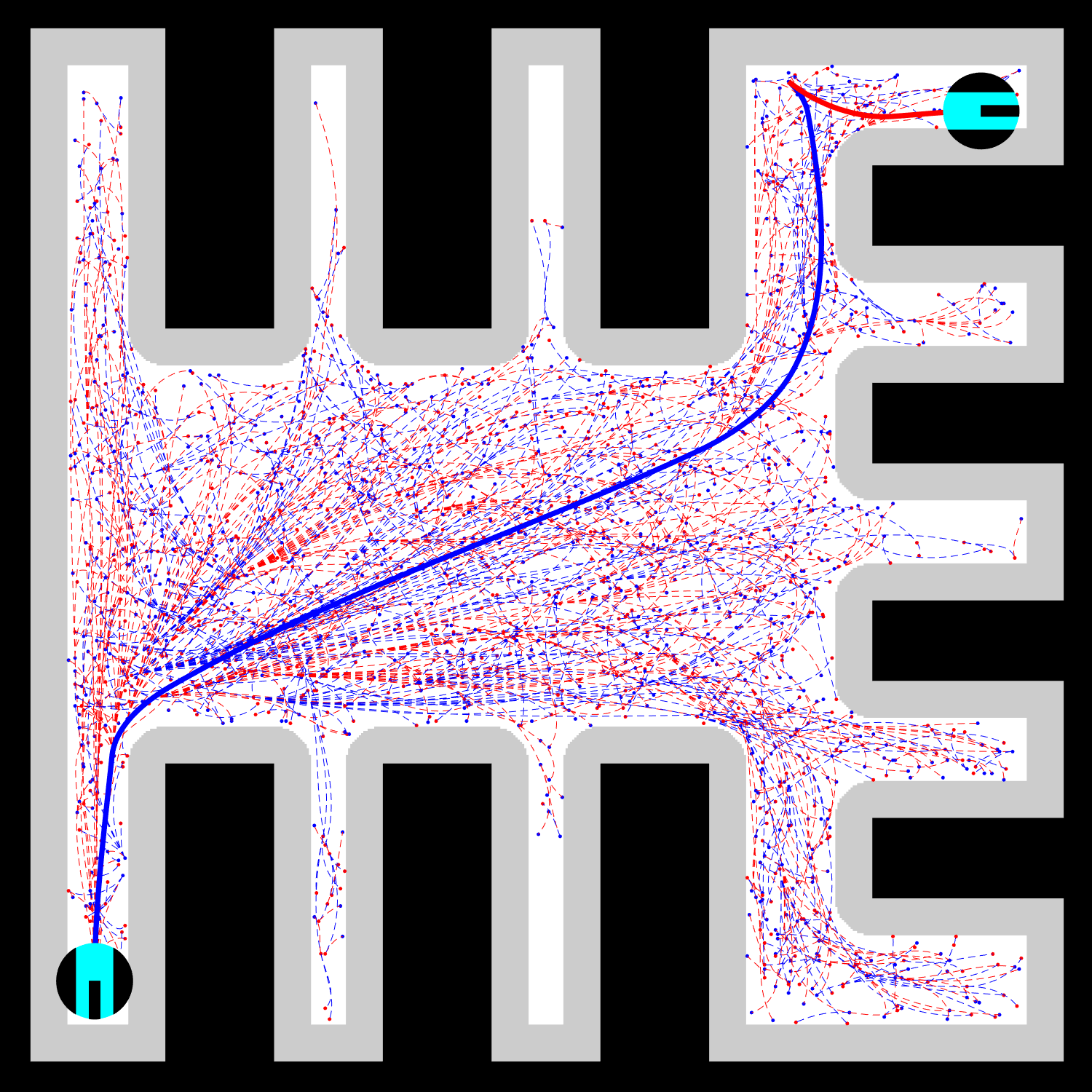}
\end{tabular}
\vspace{-3.5mm}
\caption{Optimal sampling-based unicycle feedback motion planning examples using forward and backward dual-headway motion primitives, minimizing additive dual-headway translation and orientation distance for different numbers of samples and neighborhood parameters: (top, left) $N = 1000$, (top, middle) $N = 3000$, (top, right) $N = 5000$; (bottom, left) $(\Deltapos, \Deltaori)= (1.5, 1 - \cos(\tfrac{\pi}{3})$, (bottom, middle) $(\Deltapos, \Deltaori)= (3, 1 - \cos(\tfrac{\pi}{2}))$, (bottom, right) $(\Deltapos, \Deltaori)= (6, 1 - \cos(\pi))$.}
\label{fig.influence_of_neighborhood_size_number_of_samples}
\vspace{-3.5mm}
\end{figure}

\subsection{Influence of Neighborhood Size and Number of Samples}

Two other important factors that affect the quality of optimal sampling-based planning are the neighborhood size for local optimal rewiring of the motion graph in \refalg{alg.optimal_unicycle_motion_planning}(lines 10-21) and the number of samples, $\NumSample$, which determines the optimization iterations with new samples.
In \reffig{fig.influence_of_neighborhood_size_number_of_samples}, we present example numerical simulations using the $(\Deltapos,\Deltaori)$-neighborhood with Euclidean translation and cosine orientation distances. 
In our numerical studies, we observe that a similar neighborhood size can be achieved with different unicycle pose distances by adjusting $\Deltapos$ and $\Deltaori$. 
Thus, while the specific choice of a unicycle pose distance for neighborhood determination is not significantly influential in optimal motion planning, the neighborhood size (e.g., volume) is essential. 
As expected and shown in \reffig{fig.influence_of_neighborhood_size_number_of_samples}, the quality of the optimal unicycle motion plan (with dual-headway translation and rotation distances, where $\alpha = 1$, $\beta = 10$) improves as the neighborhood size and number of samples increase, with increasing computational cost. 
Consequently, one can compute more for smoother and more optimal motion trajectories.

\subsection{Informed Sampling and Pruning in  Motion Planning}

The Voronoi bias of randomized motion planning for exploration poses a challenge for motion planning in high-dimensional, complex, and cluttered configuration spaces \cite{lavalle_kuffner_IJRR2001, karaman_frazzoli_IJRR2011}, such as unicycle pose planning around obstacles.
Informed sampling and pruning with heuristics \cite{gammell_srivinasa_barfoot_IROS2014} enable effective and efficient exploration of complex environments in sampling-based optimal unicycle feedback motion planning, as illustrated in \reffig{fig.informed_motion_planning}.
In particular, we use informed sampling and pruning with heuristics to eliminate sample unicycle poses from a randomized motion graph through which no optimal path exists.
We perform informed sampling by rejecting sample unicycle poses whose travel cost, via their nearest neighbor,  to the start pose (plus their heuristic cost to the global pose) is larger than the travel cost from the start pose to the goal pose in a motion graph. \footnote{In informed sampling, we reject a new sample unicycle pose without performing lines 15-21 in \refalg{alg.optimal_unicycle_motion_planning} if $\mathrm{mincost} + \mathrm{heuristic}(\uniposenew, \uniposegoal) > \cost_{\graph}(\uniposestart, \uniposegoal)$. Similarly, we perform informed pruning of a node $\unipose \in \vertexset$ of a motion graph $\graph=(\vertexset, \edgeset)$ by checking if $\cost_{\graph}(\uniposestart, \unipose) + \mathrm{heuristic}(\unipose, \uniposegoal) > \cost_{\graph}(\uniposestart, \uniposegoal)$ after each node insertion.}
As an admissible heuristic bounding the total travel cost between unicycle poses ($\unipose=(\pos, \ori)$ and $\unipose'=(\pos', \ori')$) from below, we consider the zero heuristic (i.e., $\mathrm{heuristic}(\unipose, \unipose') = 0$) and the Euclidean heuristic (i.e., $\mathrm{heuristic}(\unipose, \unipose') = \norm{\pos - \pos'}$).
As seen in \reffig{fig.informed_motion_planning}, once a path between the start and goal is found, informed sampling and pruning focus exploration in a smaller region of the configuration space to effectively and efficiently find an optimal path.
As expected, the informativeness of the heuristic plays a significant role in the computational gain with informed sampling and pruning. 
Although the Euclidean distance is not a proper optimization objective for minimizing total travel distance and turning effort in optimal unicycle motion planning (see \reffig{fig.influence_of_optimization_objective}), as a lower bound on the dual-headway unicycle distance, we observe in \reffig{fig.informed_motion_planning} that the Euclidean distance is an informative and useful heuristic for sampling-based planning of optimal unicycle motion using the dual-headway distance.

\begin{figure}[t]
\centering
\begin{tabular}{@{}c@{\hspace{0.5mm}}c@{\hspace{0.5mm}}c@{}}
\includegraphics[width = 0.33\columnwidth]{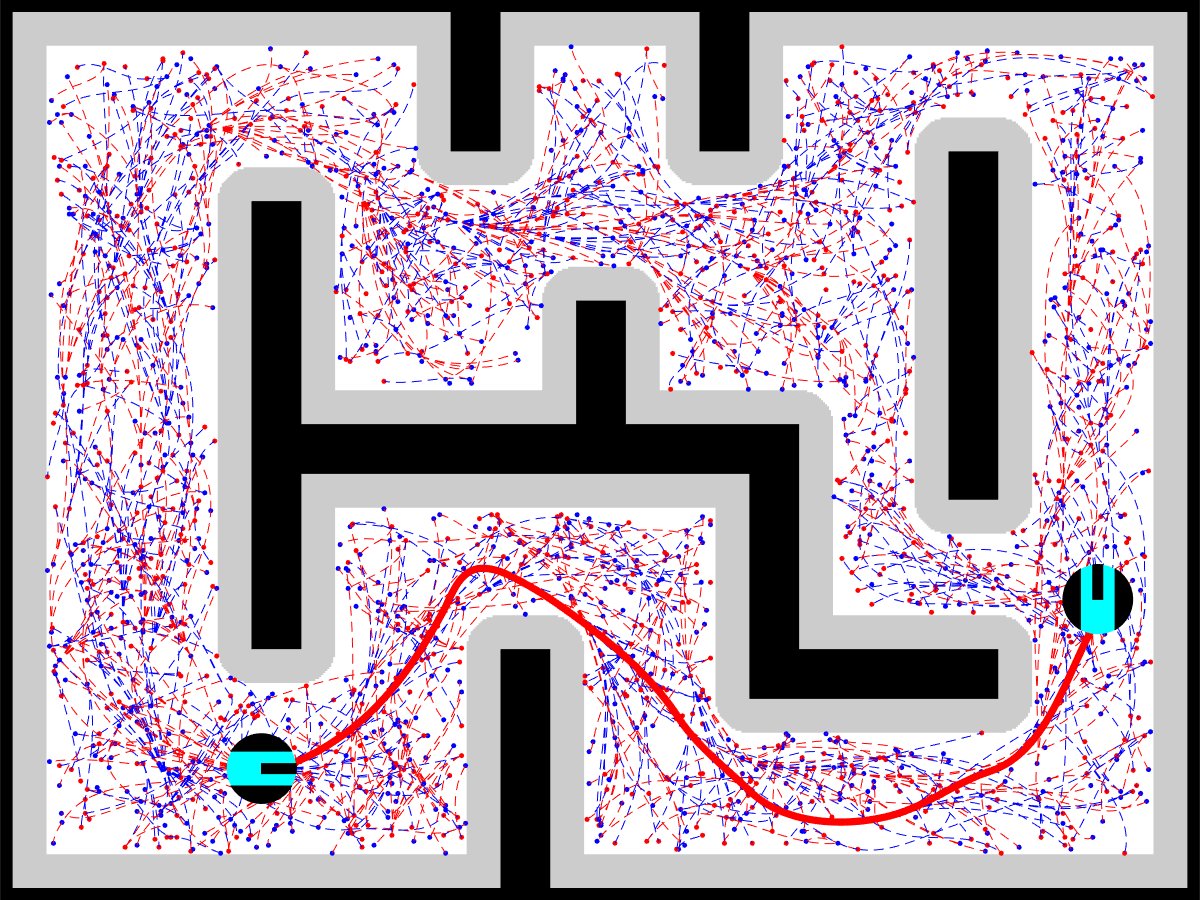} &
\includegraphics[width = 0.33\columnwidth]{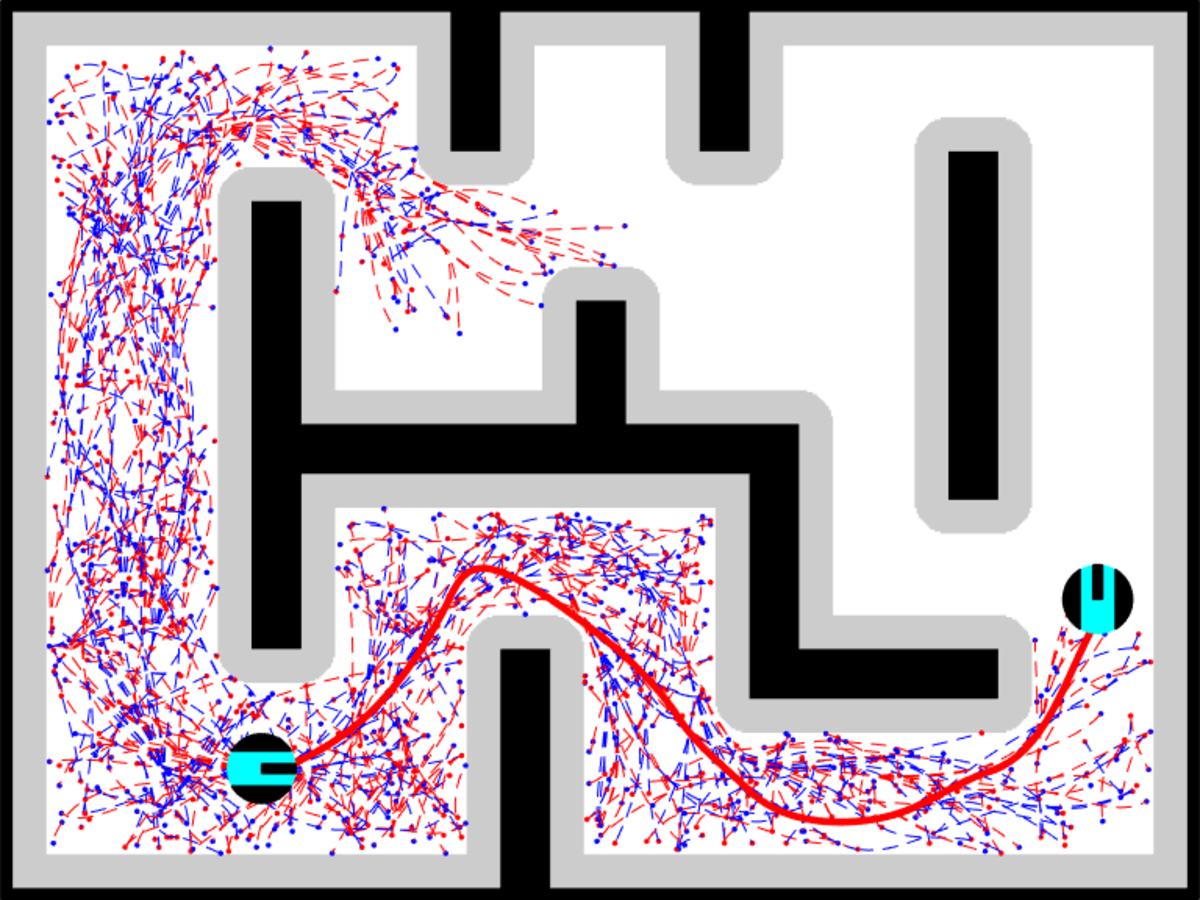} &
\includegraphics[width = 0.33\columnwidth]{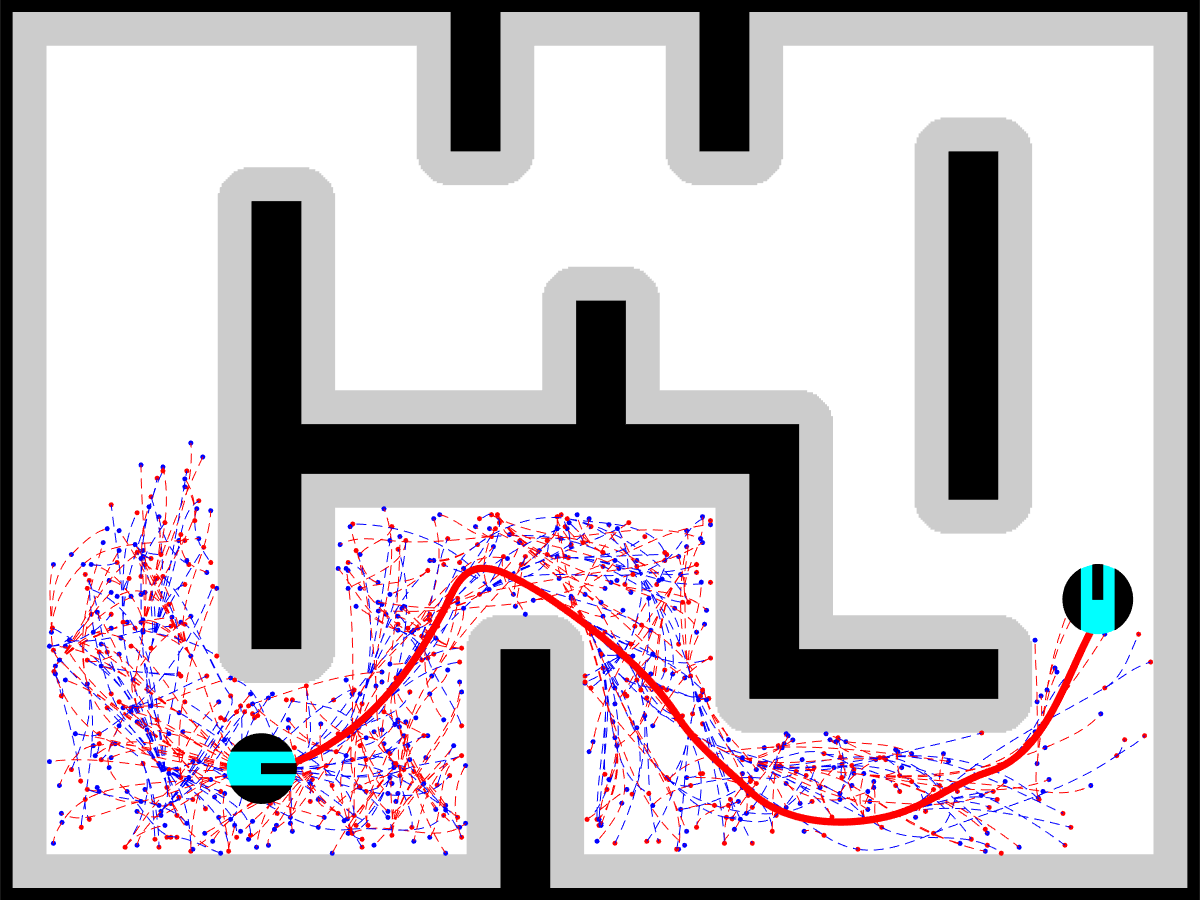}
\end{tabular}
\vspace{-3mm}
\caption{Informed sampling and pruning with heuristics enable effective and efficient exploration of complex environments in sampling-based optimal unicycle feedback motion planning.
(Left) Planning without informed sampling, (middle) planning with informed sampling and zero heuristic, (right) planning with informed sampling and pruning using a Euclidean heuristic.}
\label{fig.informed_motion_planning}
\vspace{-3mm}
\end{figure}

\section{Conclusions}
\label{sec.conclusions}

In this paper, we present a simple and intuitive geometric dual-headway unicycle pose control approach to asymptotically steer a unicycle robot to almost any given position and orientation by asymptotically bringing the adaptively placed headway point of the unicycle pose to the tailway point of the goal pose using feedback linearization.
By systematic analysis of geometric properties of dual-headway unicycle pose control, we build a positively inclusive feedback motion prediction bound on the closed-loop unicycle motion trajectory for safety verification, represented as the convex hull of the unicycle position, goal position, and their respective headway and tailway points. 
We introduce dual-headway translation and orientation distances to accurately measure travel and turning effort. 
The dual-headway translation measure is the travel distance from the start to the goal via headway and tailway points, which serves as an upper bound on the actual travel distance. 
The ratio of the dual-headway translation distance to the Euclidean distance, a lower bound on the actual travel distance, gives rise to the dual-headway orientation distance.
We demonstrate an example application of dual-headway unicycle pose control, motion prediction, and pose distance measures for optimal sampling-based feedback motion planing for minimizing total travel and turning effort.
In numerical simulations, we observe that tightly integrating planning and control via the dual-headway approach results in smoother and more effective motion patterns, compared to the standard decoupled Euclidean translation and cosine orientation distance measures.           

We are currently working on generalizing the tailway control approach for unicycles and nonholonomic systems (e.g., drones) in general, combined with geometric nonlinear control design methods, for path-following and tracking control. 
Another promising research direction is unicycle pose projection using dual-headway distances for effective local steering in sampling-based motion planning, as well as for measuring unicycle travel costs over semantic costmaps.





%

%


\bibliographystyle{IEEEtran}
\bibliography{references}

\begin{thebibliography}{10}
\providecommand{\url}[1]{#1}
\csname url@rmstyle\endcsname
\providecommand{\newblock}{\relax}
\providecommand{\bibinfo}[2]{#2}
\providecommand\BIBentrySTDinterwordspacing{\spaceskip=0pt\relax}
\providecommand\BIBentryALTinterwordstretchfactor{4}
\providecommand\BIBentryALTinterwordspacing{\spaceskip=\fontdimen2\font plus
\BIBentryALTinterwordstretchfactor\fontdimen3\font minus
  \fontdimen4\font\relax}
\providecommand\BIBforeignlanguage[2]{{%
\expandafter\ifx\csname l@#1\endcsname\relax
\typeout{** WARNING: IEEEtran.bst: No hyphenation pattern has been}%
\typeout{** loaded for the language `#1'. Using the pattern for}%
\typeout{** the default language instead.}%
\else
\language=\csname l@#1\endcsname
\fi
#2}}

\bibitem{kim_etal_RAM2009}
M.~Kim, S.~Kim, S.~Park, M.-T. Choi, M.~Kim, and H.~Gomaa, ``Service robot for
  the elderly,'' \emph{IEEE Robotics \& Automation Magazine}, vol.~16, no.~1,
  pp. 34--45, 2009.

\bibitem{jones_MRA2006}
J.~L. Jones, ``Robots at the tipping point: the road to irobot roomba,''
  \emph{IEEE Robotics Automation Magazine}, vol.~13, no.~1, pp. 76--78, 2006.

\bibitem{ackerman_Spectrum2022}
E.~Ackerman, ``A robot for the worst job in the warehouse: Boston dynamics'
  stretch can move 800 heavy boxes per hour,'' \emph{IEEE Spectrum}, vol.~59,
  no.~1, pp. 50--51, 2022.

\bibitem{renan_nascimento_RAS2021}
{\'{I}}.~R. da~Costa~Barros and T.~P. Nascimento, ``Robotic mobile fulfillment
  systems: A survey on recent developments and research opportunities,''
  \emph{Robot. Auton. Syst.}, vol. 137, p. 103729, 2021.

\bibitem{gonzalez_perez_milanes_mashashibi_TITS2016}
D.~González, J.~Pérez, V.~Milanés, and F.~Nashashibi, ``A review of motion
  planning techniques for automated vehicles,'' \emph{IEEE Trans. Intell.
  Transp.}, vol.~17, no.~4, pp. 1135--1145, 2016.

\bibitem{paden_cap_yong_yershov_frazzoli_TIV2016}
B.~{Paden}, M.~{Čáp}, S.~{Yong}, D.~{Yershov}, and E.~{Frazzoli}, ``A survey
  of motion planning and control techniques for self-driving urban vehicles,''
  \emph{IEEE Trans. Intell. Veh.}, vol.~1, no.~1, pp. 33--55, 2016.

\bibitem{philippsen_siegwart_ICRA2003}
R.~Philippsen and R.~Siegwart, ``Smooth and efficient obstacle avoidance for a
  tour guide robot,'' in \emph{IEEE International Conference on Robotics and
  Automation}, vol.~1, 2003, pp. 446--451.

\bibitem{snape_etal_IROS2010}
J.~Snape, J.~van~den Berg, S.~J. Guy, and D.~Manocha, ``Smooth and
  collision-free navigation for multiple robots under differential-drive
  constraints,'' in \emph{IEEE/RSJ International Conference on Intelligent
  Robots and Systems}, 2010, pp. 4584--4589.

\bibitem{chakravarthy_debasish_TSM1998}
A.~Chakravarthy and D.~Ghose, ``Obstacle avoidance in a dynamic environment: a
  collision cone approach,'' \emph{IEEE Trans. Syst. Man Cybern. Part A},
  vol.~28, pp. 562--574, 1998.

\bibitem{fiorini_shiller_IJRR1998}
P.~Fiorini and Z.~Shiller, ``Motion planning in dynamic environments using
  velocity obstacles,'' \emph{The International Journal of Robotics Research},
  vol.~17, no.~7, pp. 760--772, 1998.

\bibitem{arslan_koditschek_IJRR2019}
{\"O}.~Arslan and D.~E. Koditschek, ``Sensor-based reactive navigation in
  unknown convex sphere worlds,'' \emph{The International Journal of Robotics
  Research}, vol.~38, no. 2-3, pp. 196--223, 2019.

\bibitem{isleyen_arslan_RAL2022}
A.~{\.I}{\c{s}}leyen, N.~van~de Wouw, and {\"O}.~Arslan, ``From low to high
  order motion planners: Safe robot navigation using motion prediction and
  reference governor,'' \emph{IEEE Robot. Autom. Lett.}, vol.~7, no.~4, pp.
  9715--9722, 2022.

\bibitem{brockett_DGCT1983}
R.~W. Brockett, ``Asymptotic stability and feedback stabilization,'' in
  \emph{Differential Geometric Control Theory}, 1983, pp. 181--191.

\bibitem{astolfi_JDSMC1999}
A.~Astolfi, ``Exponential stabilization of a wheeled mobile robot via
  discontinuous control,'' \emph{Journal of Dynamic Systems, Measurement, and
  Control}, vol. 121, no.~1, pp. 121--126, 1999.

\bibitem{lee_etal_IROS2000}
S.-O. Lee, Y.-J. Cho, M.~Hwang-Bo, B.-J. You, and S.-R. Oh, ``A stable
  target-tracking control for unicycle mobile robots,'' in \emph{IEEE/RSJ
  Inter. Conf. on Intelligent Robots and Systems}, 2000, pp. 1822--1827 vol.3.

\bibitem{dandrea-novel_campion_bastin_IJRR1995}
B.~d'Andr{\'e}a Novel, G.~Campion, and G.~Bastin, ``Control of nonholonomic
  wheeled mobile robots by state feedback linearization,'' \emph{The Inter.
  Journal of Robotics Research}, vol.~14, no.~6, pp. 543--559, 1995.

\bibitem{isleyen_vandewouw_arslan_IROS2023}
A.~{\.I}{\c{s}}leyen, N.~van~de Wouw, and {\"O}.~Arslan, ``Feedback motion
  prediction for safe unicycle robot navigation,'' in \emph{IEEE/RSJ Inter.
  Conf. on Intelligent Robots and Systems}, 2023, pp. 10\,511--10\,518.

\bibitem{isleyen_vandewouw_arslan_CDC2023}
------, ``Adaptive headway motion control and motion prediction for safe
  unicycle motion design,'' in \emph{IEEE Conference on Decision and Control},
  2023, pp. 6942--6949.

\bibitem{tarshahani_isleyen_arslan_ECC2024}
A.~Tarshahani, A.~{\.I}{\c{s}}leyen, and {\"O}.~Arslan, ``Total turning and
  motion range prediction for safe unicycle control,'' in \emph{European
  Control Conference}, 2024, pp. 2760--2767.

\bibitem{aicardi_etal_RAM1995}
M.~Aicardi, G.~Casalino, A.~Bicchi, and A.~Balestrino, ``Closed loop steering
  of unicycle like vehicles via {L}yapunov techniques,'' \emph{IEEE Robotics \&
  Automation Magazine}, vol.~2, no.~1, pp. 27--35, 1995.

\bibitem{murray_sastry_TAC1993}
R.~Murray and S.~Sastry, ``Nonholonomic motion planning: steering using
  sinusoids,'' \emph{IEEE Transactions on Automatic Control}, vol.~38, no.~5,
  pp. 700--716, 1993.

\bibitem{samson_IJRR1993}
C.~Samson, ``Time-varying feedback stabilization of car-like wheeled mobile
  robots,'' \emph{Int. J. Robot. Res.}, vol.~12, no.~1, pp. 55--64, 1993.

\bibitem{samson_TAC1995}
C.~{Samson}, ``Control of chained systems application to path following and
  time-varying point-stabilization of mobile robots,'' \emph{IEEE Transactions
  on Automatic Control}, vol.~40, no.~1, pp. 64--77, 1995.

\bibitem{canny_ComplexityMotionPlanning1988}
J.~Canny, \emph{The complexity of robot motion planning}.\hskip 1em plus 0.5em
  minus 0.4em\relax MIT Press, 1988.

\bibitem{lavalle_kuffner_IJRR2001}
S.~M. LaValle and J.~J. Kuffner, ``Randomized kinodynamic planning,''
  \emph{Int. J. Robot. Res.}, vol.~20, no.~5, pp. 378--400, 2001.

\bibitem{karaman_frazzoli_IJRR2011}
S.~Karaman and E.~Frazzoli, ``Sampling-based algorithms for optimal motion
  planning,'' \emph{The International Journal of Robotics Research}, vol.~30,
  no.~7, pp. 846--894, 2011.

\bibitem{kuwata_teo_fiore_karaman_frazzoli_how_TCST2009}
Y.~Kuwata, J.~Teo, G.~Fiore, S.~Karaman, E.~Frazzoli, and J.~P. How,
  ``Real-time motion planning with applications to autonomous urban driving,''
  \emph{IEEE Trans. Control Syst. Technol.}, vol.~17, no.~5, pp. 1105--1118,
  2009.

\bibitem{palmieri_arras_IROS2014}
L.~Palmieri and K.~O. Arras, ``A novel {RRT} extend function for efficient and
  smooth mobile robot motion planning,'' in \emph{IEEE/RSJ International
  Conference on Intelligent Robots and Systems}, 2014, pp. 205--211.

\bibitem{park_kuipers_IROS2015}
J.~J. Park and B.~Kuipers, ``Feedback motion planning via non-holonomic rrt*
  for mobile robots,'' in \emph{IEEE/RSJ International Conference on
  Intelligent Robots and Systems}, 2015, pp. 4035--4040.

\bibitem{arslan_berntorp_tsiotras_ICRA2017}
O.~Arslan, K.~Berntorp, and P.~Tsiotras, ``Sampling-based algorithms for
  optimal motion planning using closed-loop prediction,'' in \emph{IEEE Inter.
  Conf. on Robotics and Automation}, 2017, pp. 4991--4996.

\bibitem{danielson_berntorp_cairano_weiss_ACC2020}
C.~Danielson, K.~Berntorp, S.~D. Cairano, and A.~Weiss, ``Motion-planning for
  unicycles using the invariant-set motion-planner,'' in \emph{American Control
  Conference}, 2020, pp. 1235--1240.

\bibitem{khalil_NonlinearSystems2001}
H.~K. Khalil, \emph{Nonlinear Systems}.\hskip 1em plus 0.5em minus 0.4em\relax
  Prentice Hall, 2001.

\bibitem{arslan_tiemessen_TRO2022}
O.~Arslan and A.~Tiemessen, ``Adaptive b\'ezier degree reduction and splitting
  for computationally efficient motion planning,'' \emph{IEEE Transactions on
  Robotics}, vol.~38, no.~6, pp. 3655--3674, 2022.

\bibitem{arslan_pacelli_koditschek_IROS2017}
O.~Arslan, V.~Pacelli, and D.~E. Koditschek, ``Sensory steering for
  sampling-based motion planning,'' in \emph{IEEE/RSJ International Conference
  on Intelligent Robots and Systems}, 2017, pp. 3708--3715.

\bibitem{burridge_rizzi_koditschek_IJRR1999}
R.~R. Burridge, A.~A. Rizzi, and D.~E. Koditschek, ``Sequential composition of
  dynamically dexterous robot behaviors,'' \emph{The International Journal of
  Robotics Research}, vol.~18, no.~6, pp. 535--555, 1999.

\bibitem{gammell_srivinasa_barfoot_IROS2014}
J.~D. Gammell, S.~S. Srinivasa, and T.~D. Barfoot, ``Informed {RRT*}: Optimal
  sampling-based path planning focused via direct sampling of an admissible
  ellipsoidal heuristic,'' in \emph{IEEE/RSJ International Conference on
  Intelligent Robots and Systems}, 2014, pp. 2997--3004.

\bibitem{blanchini_Automatica1999}
F.~Blanchini, ``Set invariance in control,'' \emph{Automatica}, vol.~35,
  no.~11, pp. 1747 -- 1767, 1999.

\end{thebibliography}


\appendices 

\section{Proofs}

\subsection{Proof of \reflem{lem.headway_tailway_distance}}
\label{app.lem.headway_tailway_distance}
\begin{proof}
The results follows from the triangle inequality (i.e., $\norm{\vect{a} + \vect{b}} \leq \norm{\vect{a}} + \norm{\vect{b}}$ for any $\vect{a}, \vect{b} \in \R^{n}$) as
\begin{align*}
\norm{\pos - \goalpos} &= \nlist{\headpos - \goaltailpos - \norm{\pos - \goalpos}\!\plist{\!\headcoef \ovectsmall{\ori} \!+\! \goaltailcoef \ovectsmall{\goalori}\!}\!}
\\
& \leq \norm{\headpos - \goaltailpos} + (\headcoef + \goaltailcoef)\norm{\pos - \goalpos}
\end{align*}
which  is equivalent, for $\headcoef + \goaltailcoef < 1$, to  
\begin{align*}
\hspace{20mm} \norm{\pos - \goalpos} \leq \tfrac{1}{1 - \headcoef - \goaltailcoef} \norm{\headpos - \goaltailpos}.  \hspace{20mm} \qedhere
\end{align*}
\end{proof}

\subsection{Proof of \refprop{prop.headway_tailway_distance_decay}}
\label{app.headway_tailway_distance_decay}

\begin{proof}
The rate of change of $\norm{\headpos - \goaltailpos}$ can be bounded as 
{\small
\begin{align*}
\frac{\diff}{\diff t}\norm{\headpos - \goaltailpos}^2 &= 2 \tr{\plist{\headpos - \goaltailpos}}\plist{\headposdot - \goaltailposdot}
\\
& \leq  -2 \refcoef \norm{\headpos - \goaltailpos}^2 \plist{\!1 - \frac{\goaltailcoef \absval{\tfrac{\tr{\goalpos - \pos}}{\norm{\goalpos - \pos}}\ovectsmall{\ori}}}{1 - \headcoef \tfrac{\tr{\goalpos - \pos}}{\norm{\goalpos - \pos}}\ovectsmall{\ori} }\!} 
\end{align*}
}%
as the changes of the headway and tailway points satisfy
{\footnotesize
\begin{align*}
\tr{\plist{\headpos - \goaltailpos}} \headposdot &= -\refcoef\norm{\headpos - \goaltailpos}^2,
\\
\tr{\plist{\headpos - \goaltailpos}} \goaltailposdot & = \goaltailcoef \tfrac{\tr{\plist{\goalpos - \pos}}}{\norm{\goalpos - \pos}}\ovectsmall{\ori}\tr{\plist{\headpos - \goaltailpos}} \ovectsmall{\goalori}\linvel_{\goalpos, \goalori}(\pos, \ori) 
\\
& \hspace{-15mm} = - \refcoef\tr{\plist{\headpos - \goaltailpos}}\ovectsmall{\ori}\tr{\plist{\headpos - \goaltailpos}} \ovectsmall{\goalori} \frac{\goaltailcoef \tfrac{\tr{\plist{\goalpos - \pos}}}{\norm{\goalpos - \pos}}\ovectsmall{\ori}}{1\! -\! \headcoef \tfrac{\tr{\plist{\goalpos - \pos}}}{\norm{\goalpos - \pos}}\ovectsmall{\ori} }
\\
& \hspace{-15mm}\geq -\refcoef \norm{\headpos - \goaltailpos}^2 \frac{\tailcoef \absval{\tfrac{\tr{\goalpos - \pos}}{\norm{\goalpos - \pos}}\ovectsmall{\ori}}}{1 - \headcoef \tfrac{\tr{\plist{\goalpos - \pos}}}{\norm{\goalpos - \pos}}\ovectsmall{\ori} } > -\refcoef\norm{\headpos - \goaltailpos}^2
\end{align*}
}%
where \mbox{$\goaltailcoef \absval{\!\tfrac{\tr{\plist{\goalpos \! - \pos}\!}\!}{\norm{\goalpos \!- \pos}}\!\ovectsmall{\ori}\!} \! <\! 1 \!- \headcoef \tfrac{\tr{\plist{\goalpos \!- \pos}\!}\!}{\norm{\goalpos \!- \pos}}\!\ovectsmall{\ori}\!$ for $\headcoef \!+ \goaltailcoef \!\!<\! 1$.} 
\end{proof}

\subsection{Proof of \refprop{prop.forward_motion_in_finite_time}}
\label{app.forward_motion_in_finite_time}

\begin{proof}
 The linear velocity control $\linvel_{\goalpos, \goalori}(\pos, \ori)$  in \refeq{eq.dual_headway_control} satisfies 
{\small
\begin{align*}
\tr{(\goaltailpos \! - \headpos)} \ovectsmall{\ori} \! \leq\! 0 \Longrightarrow \linvel_{\goalpos, \goalori} (\pos, \ori) \! \leq\! \refcoef\tr{(\goaltailpos \! - \headpos)\!} \!\ovectsmall{\ori} \! \leq 0.
\end{align*} 
}%
for $\headcoef < 1$. 
Hence, when \mbox{$-1 < \frac{\tr{(\goaltailpos \! - \headpos)}}{\norm{\goaltailpos - \headpos}} \ovectsmall{\ori} \leq 0$}, the time rate of change of $\tr{(\goaltailpos \! - \!\headpos)\!} \ovectsmall{\ori}$ is always strictly positive and lower bounded as 
{\footnotesize
\begin{align*}
\tfrac{\diff}{\diff t} \tr{(\goaltailpos \! - \headpos)\!}\! \ovectsmall{\ori} &= \tr{(\goaltailposdot \! - \headposdot)\!} \!\ovectsmall{\ori} \!+\! \tr{(\goaltailpos \! - \headpos)\!} \! \nvectsmall{\ori} \angvel_{\goalpos, \goalori}(\pos, \ori)
\\
& \hspace{-15mm}= - \linvel_{\goalpos, \goalori}(\pos, \ori)\! \underbrace{\plist{\!1\! + \!\tfrac{\tr{(\pos - \goalpos)\!}}{\norm{\pos - \goalpos}} \! \ovectsmall{\ori} \plist{\! \headcoef \! + \goaltailcoef \tr{\ovectsmall{\goalori}}\ovectsmall{\ori}}\!\!}\!}_{\geq (1 - \headcoef - \goaltailcoef) > 0}
\\
& \hspace{10mm}+ \tfrac{\refcoef \plist{\!\tr{(\goaltailpos \! - \headpos)\!} \! \nvectsmall{\ori}}^{2}}{\headcoef \norm{\pos - \goalpos}}
\\
&\hspace{-20mm} \geq - \refcoef(1-\headcoef -\goaltailcoef)  \tr{(\goaltailpos \! - \headpos)\!}\! \ovectsmall{\ori} + \underbrace{\tfrac{\refcoef \plist{\!\tr{(\goaltailpos \! - \headpos)\!}\! \nvectsmall{\ori}}^{2}}{\headcoef \norm{\pos - \goalpos}}}_{>0}
\\
& \hspace{-20mm} > - \refcoef(1-\headcoef -\goaltailcoef)  \tr{(\goaltailpos \! - \headpos)} \ovectsmall{\ori}.
\end{align*}
}%
Since its rate of increase is strictly faster than that of an exponentially increasing function to zero, due to the comparison lemma \cite{khalil_NonlinearSystems2001}, $\tr{(\goaltailpos \! - \! \headpos)\!}\! \ovectsmall{\ori}$ becomes positive in finite time, which corresponds to positive linear velocity and forward motion. 
\end{proof}

\subsection{Proof of \refprop{prop.persistent_forward_motion}}
\label{app.prop.persistent_forward_motion}

\begin{proof}
Since $1 + \headcoef \frac{\tr{(\pos - \goalpos)}}{\norm{\pos - \goalpos}}\ovectsmall{\ori} > 0$ for $\headcoef > 1$ and $\pos \neq \goalpos$, the dual-headway unicycle linear velocity input satisfies
\begin{align*}
\linvel_{\goalpos, \goalori}(\pos, \ori) \geq 0 \Longleftrightarrow \tr{(\goaltailpos - \headpos)} \ovectsmall{\ori} \geq 0
\end{align*}
where $\linvel_{\goalpos, \goalori}(\pos, \ori) = 0$ if and only if $\tr{(\goaltailpos - \headpos)} \ovectsmall{\ori} = 0$.
Accordingly, one can observe for any $\forall \pos \neq \goalpos$ that 
{\small
\begin{align*}
& \hspace{-8mm}\frac{\diff}{\diff t}\tr{\plist{\goaltailpos \!-\! \headpos}\!}\!\ovectsmall{\ori} \Big |_{\tr{\plist{\goaltailpos - \headpos}\!}\!\ovectsmall{\ori} = 0} \nonumber \\
& = \!\! \!\!\underbrace{\tr{\plist{\goaltailposdot \!-\! \headposdot}\!} \! \ovectsmall{\ori}}_{= 0 \text{  since $\linvel_{\goalpos\!, \goalori\!}(\pos, \ori) \!= \!0$}} \!\!\!\!\! + \tr{\plist{\goaltailpos \!-\! \headpos}\!}\!\nvectsmall{\ori}\angvel_{\goalpos, \goalori}(\pos, \ori) \!\!\!\!\!
\\
&=   \frac{\refcoef \plist{\!\!\tr{\plist{\goaltailpos \!-\! \headpos}\!}\!\nvectsmall{\ori}\!}^2\!}{\headcoef \norm{\goalpos \!- \pos}} =\frac{\refcoef\norm{\goaltailpos \!- \headpos}^2}{\headcoef \norm{\goalpos\! - \pos}} > 0  \!\!\!\!\!
\end{align*}
}%
where the last equality and so the result follow from that $\tr{(\goaltailpos \!-\! \headpos)\!}\! \ovectsmall{\ori} \!=\! 0$ implies $\plist{\!\!\tr{\plist{\goaltailpos \!-\! \headpos}\!}\!\nvectsmall{\ori}\!}^{\!2} \!\!\!=\! \norm{\goaltailpos \!-\! \headpos}^2$.
\end{proof}

\subsection{Proof of \refprop{prop.distance_to_goal_decay}}
\label{app.distance_to_goal_decay}

\begin{proof}
For $\goaltailcoef \leq \headcoef < 1$, forward unicycle motion under the dual-headway motion control in \refeq{eq.dual_headway_control} implies forward orientation alignment towards the goal position, as follows:
{\small
\begin{align*}
& \linvel_{\goalpos, \goalori}(\pos, \ori) > 0 \Longleftrightarrow \tr{(\goaltailpos \!-\! \headpos)\!}\!\ovectsmall{\ori} > 0
\\
& \quad \Longrightarrow \tr{(\goalpos \!- \!\pos)\!}\!\ovectsmall{\ori} > \norm{\goalpos\! - \pos} \plist{\!\headcoef\! + \goaltailcoef \tr{\ovectsmall{\goalori}\!}\!\ovectsmall{\ori}\!}\! \geq 0
\end{align*}
}%
which decreases the unicycle position distance to the goal as 
\begin{align*}
\frac{\diff}{\diff t} \norm{\pos \!-\! \goalpos}^2 = - 2 \tr{(\pos\!-\! \goalpos)\!}\ovectsmall{\ori}   \linvel_{\goalpos, \goalori}(\pos, \ori) < 0  
\end{align*}
away from the goal, i.e., $\pos \neq \goalpos$. Thus, the result holds.
\end{proof}

\subsection{Proof of \refprop{prop.goal_alignment_under_forward_motion}}
\label{app.goal_alignment_under_forward_motion}

\begin{proof}
One can verify that 
{\small
\begin{align*}
\tfrac{\diff}{\diff t} \tfrac{\tr{(\goaltailpos - \headpos)}}{\norm{\goaltailpos - \headpos}} \ovectsmall{\goalori} & = \alpha  \tr{\ovectsmall{\goalori}}\plist{\mat{I} -  \tfrac{(\goaltailpos - \headpos)}{\norm{\goaltailpos - \headpos}} \tfrac{\tr{(\goaltailpos - \headpos)}}{\norm{\goaltailpos - \headpos}}} \ovectsmall{\goalori}
\\
\tfrac{\diff}{\diff t} \tfrac{\tr{(\goaltailpos - \headpos)}}{\norm{\goaltailpos - \headpos}} \ovectsmall{\ori} & = \alpha \tr{\ovectsmall{\goalori}}\plist{\mat{I} -  \tfrac{(\goaltailpos - \headpos)}{\norm{\goaltailpos - \headpos}} \tfrac{\tr{(\goaltailpos - \headpos)}}{\norm{\goaltailpos - \headpos}}} \ovectsmall{\ori}
\\
 & \quad + \beta \tr{\ovectsmall{\ori}}\plist{\mat{I} -  \tfrac{(\goaltailpos - \headpos)}{\norm{\goaltailpos - \headpos}} \tfrac{\tr{(\goaltailpos - \headpos)}}{\norm{\goaltailpos - \headpos}}} \ovectsmall{\ori}
\end{align*}
}%
where
{\small
\begin{align*}
\alpha &= \refcoef  \underbrace{\tfrac{\tr{(\goaltailpos - \headpos)}}{\norm{\tailpos - \headpos}} \ovectsmall{\ori}}_{\in [0,1]} \underbrace{\tfrac{\tailcoef \tfrac{\tr{(\goalpos - \pos)}}{\norm{\goalpos - \pos}}  \ovectsmall{\ori}}{1 - \headcoef \tfrac{\tr{(\goalpos - \pos)}}{\norm{\goalpos - \pos}}\ovectsmall{\ori}}}_{\in [0,1)} \in [0, \refcoef)
\\
\beta &= \tfrac{\refcoef}{\headcoef} \underbrace{\tfrac{\norm{\goaltailpos - \headpos}}{\norm{\goalpos - \pos}}}_{\geq 1 - \headcoef - \goaltailcoef}  \geq \refcoef \tfrac{1 - \headcoef - \goaltailcoef}{\headcoef} \geq \refcoef
\end{align*}
}%
for $\tr{(\goaltailpos - \headpos)} \ovectsmall{\ori} \geq 0$ and $2\headcoef + \goaltailcoef < 1$, where the lower bound on $\beta$ is due to \reflem{lem.headway_tailway_distance}.
Therefore, it follows from their quadratic form and $0\leq \alpha < \beta$ that we have for  $\tfrac{\tr{\plist{\goaltailpos - \headpos}}}{\norm{\goaltailpos - \headpos}} \!\ovectsmall{\ori} \geq 0$, $ \tfrac{\tr{\plist{\goaltailpos - \headpos}}}{\norm{\goaltailpos - \headpos}} \!\ovectsmall{\goalori} \neq \pm 1$, and $2\headcoef + \goaltailcoef \!<\! 1$ that
{\small
\begin{align*}
\tfrac{\diff}{\diff t} \tfrac{\tr{\plist{\goaltailpos - \headpos}}}{\norm{\goaltailpos - \headpos}} \ovectsmall{\goalori} > 0, \quad  \tfrac{\diff}{\diff t} \tfrac{\tr{\plist{\goaltailpos - \headpos}}}{\norm{\goaltailpos - \headpos}} \plist{\!\ovectsmall{\ori} \!+\! \ovectsmall{\goalori}\!}\! > 0
\end{align*}
}%
which completes the proof.
\end{proof}

\subsection{Proof of \refprop{prop.forward_unicycle_motion_prediction}}
\label{app.forward_unicycle_motion_prediction}
\begin{proof}
For any unicycle pose $(\pos, \ori) \in \fwddomain_{\goalpos, \goalori}$, the forward dual-headway controller ensures persistent forward motion (\refprop{prop.persistent_forward_motion}) and decreases the Euclidean distance to the goal (\refprop{prop.distance_to_goal_decay}). 
Therefore, the tailway point $\goaltailpos$ continuously gets closer to the goal position $\goalpos$ as 
\begin{align*}
\goaltailpos(t') \in \blist{\goaltailpos(t'), \goalpos} \subseteq  \blist{\goaltailpos(t), \goalpos} \quad \forall t' \geq t.
\end{align*}  
Accordingly, since the headway point $\headpos$ continuously moves straight towards the tailway point $\goaltailpos$, as imposed by the headway reference dynamics \refeq{eq.headway_reference_dynamics}, the headway trajectory $\headpos(t)$ can also be bounded for $t' \geq t$ as
\begin{align*}
\headpos(t') \in \conv(\headpos(t'), \goaltailpos(t'), \goalpos) \subseteq  \conv(\headpos(t), \goaltailpos(t), \goalpos), 
\end{align*} 
where the positive invariance of the motion bound follows from Nagumo's theorem of the subtangentiality condition, where the headway velocity $\headposdot$ always points inside or is tangent to the bounding motion set $\conv\plist{\headpos, \tailpos, \goalpos}$ \cite{blanchini_Automatica1999}.  
Similarly, from Nagumo's theorem of the subtangentiality condition of positive invariance, the unicycle position trajectory $\pos(t)$ is bounded for $t' \geq t$ as
{\small
\begin{align*}
\pos(t') \! \in\! \conv(\pos(t'), \headpos(t'), \goaltailpos\!(t'), \goalpos\!) \! \subseteq \! \conv(\pos(t), \headpos(t), \goaltailpos\!(t), \goalpos\!), 
\end{align*}
}%
since the unicycle moves forward towards its headway point $\headpos$ and the uncycle velocity $\posdot$ is inside or tangent to the motion bound set $\conv(\pos, \headpos, \goaltailpos, \goalpos)$. 

The circular trajectory bound $\pos(t') \!\in \! \ball(\goalpos, \norm{\pos(t') - \goalpos}) \subseteq \ball(\goalpos, \norm{\pos(t) - \goalpos})$ is simply due to the decreasing Euclidean distance to the goal under forward motion (\refprop{prop.distance_to_goal_decay}). Thus, the result follows.
\end{proof}

\end{document}